\documentclass[english]{article}
\usepackage[numbers]{natbib}
%


\usepackage[final]{nips_2016}

\usepackage[utf8]{inputenc} 
\usepackage[T1]{fontenc}    
\usepackage{hyperref}       
\usepackage{url}            
\usepackage{booktabs}       
\usepackage{amsfonts}       
\usepackage{nicefrac}       
\usepackage{microtype}      

\usepackage{babel}
\usepackage{float}
\usepackage{amsmath}
\usepackage{amsthm}
\usepackage{amssymb}
\usepackage{graphicx}
\usepackage{esint}

\usepackage{algorithm,algpseudocode,dsfont}
\newtheorem{cond}{Condition}
\newtheorem{prop}{Proposition}

\floatstyle{ruled}
\newfloat{algorithm}{tbp}{loa}
\providecommand{\algorithmname}{Algorithm}
\floatname{algorithm}{\protect\algorithmname}

\theoremstyle{plain}
\newtheorem{thm}{\protect\theoremname}
  \theoremstyle{definition}
  \newtheorem{defn}{\protect\definitionname}
  \theoremstyle{definition*}
  \newtheorem*{defn*}{\protect\definitionname}
  \theoremstyle{definition*}
  \newtheorem*{notation*}{Notation}
  \theoremstyle{plain}
  \newtheorem{lem}{\protect\lemmaname}
  \theoremstyle{remark}
  \newtheorem*{note*}{\protect\notename}

\makeatother

  \providecommand{\definitionname}{Definition}
  \providecommand{\lemmaname}{Lemma}
  \providecommand{\notename}{Note}
\providecommand{\theoremname}{Theorem}

\title{Active Learning from Imperfect Labelers}

%

\author{
Songbai Yan \\
University of California, San Diego \\
\texttt{yansongbai@eng.ucsd.edu} \\
\And
Kamalika Chaudhuri \\
University of California, San Diego \\
\texttt{kamalika@cs.ucsd.edu} \\
\And
Tara Javidi \\
University of California, San Diego \\
\texttt{tjavidi@eng.ucsd.edu} \\
}

\begin{document}

\maketitle
\begin{abstract}
We study active learning where the labeler can not only return incorrect
labels but also abstain from labeling. We consider different noise
and abstention conditions of the labeler. We propose an algorithm
which utilizes abstention responses, and analyze its statistical consistency
and query complexity under fairly natural assumptions on the noise
and abstention rate of the labeler. This algorithm is adaptive in
a sense that it can automatically request less queries with a more
informed or less noisy labeler. We couple our algorithm with lower
bounds to show that under some technical conditions, it achieves nearly
optimal query complexity. 
\end{abstract}

\section{Introduction}

In active learning, the learner is given an input space $\mathcal{X}$,
a label space $\mathcal{L}$, and a hypothesis class $\mathcal{H}$
such that one of the hypotheses in the class generates ground truth
labels. Additionally, the learner has at its disposal a labeler to
which it can pose interactive queries about the labels of examples
in the input space. Note that the labeler may output a noisy version
of the ground truth label (a flipped label). The goal of the learner
is to learn a hypothesis in $\mathcal{H}$ which is close to the hypothesis
that generates the ground truth labels. 

There has been a significant amount of literature on active learning,
both theoretical and practical. Previous theoretical work on active
learning has mostly focused on the above basic setting~\cite{BBL06,BHLZ10,chen2015sequential,DHM07,ZC14}
and has developed algorithms under a number of different models of
label noise. A handful of exceptions include \cite{balcan2012robust}~which
allows class conditional queries, \cite{beygelzimer2016search}~which
allows requesting counterexamples to current version spaces, and \cite{urner2012learning,zhang2015active}~
where the learner has access to a strong labeler and one or more weak
labelers.

In this paper, we consider a more general setting where, in addition
to providing a possibly noisy label, the labeler can sometimes abstain
from labeling. This scenario arises naturally in difficult labeling
tasks and has been considered in computer vision by~\cite{fang2012don,kading2015active}.
Our goal in this paper is to investigate this problem from a foundational
perspective, and explore what kind of conditions are needed, and how
an abstaining labeler can affect properties such as consistency and
query complexity of active learning algorithms. 

The setting of active learning with an abstaining noisy labeler was
first considered by~\cite{YCJ2015}, who looked at learning binary
threshold classifiers based on queries to an labeler whose abstention
rate is higher closer to the decision boundary. They primarily looked
at the case when the abstention rate at a distance $\Delta$ from
the decision boundary is less than $1-\Theta(\Delta^{\alpha})$, and
the rate of label flips at the same distance is less than $\frac{1}{2}-\Theta(\Delta^{\beta})$;
under these conditions, they provided an active learning algorithm
that given parameters $\alpha$ and $\beta$, outputs a classifier
with error $\epsilon$ using $\tilde{O}(\epsilon^{-\alpha-2\beta})$
queries to the labeler. However, there are several limitations to
this work. The primary limitation is that parameters $\alpha$ and
$\beta$ need to be known to the algorithm, which is not usually the
case in practice. A second major limitation is that even if the labeler
has nice properties, such as, the abstention rates increase sharply
close to the boundary, their algorithm is unable to exploit these
properties to reduce the number of queries. A third and final limitation
is that their analysis only applies to one dimensional thresholds,
and not to more general decision boundaries. 

In this work, we provide an algorithm which is able to exploit nice
properties of the labeler. Our algorithm is statistically consistent
under very mild conditions — when the abstention rate is non-decreasing
as we get closer to the decision boundary. Under slightly stronger
conditions as in~\cite{YCJ2015}, our algorithm has the same query
complexity. However, if the abstention rate of the labeler increases
strictly monotonically close to the decision boundary, then our algorithm
adapts and does substantially better. It simply exploits the increasing
abstention rate close to the decision boundary, and does not even
have to rely on the noisy labels! Specifically, when applied to the
case where the noise rate is at most $\frac{1}{2}-\Theta(\Delta^{\beta})$
and the abstention rate is $1-\Theta(\Delta^{\alpha})$ at distance
$\Delta$ from the decision boundary, our algorithm can output a classifier
with error $\epsilon$ based on only $\tilde{O}(\epsilon^{-\alpha})$
queries. 

An important property of our algorithm is that the improvement of
query complexity is achieved in a \emph{completely adaptive manner};
unlike previous work~\cite{YCJ2015}, our algorithm needs \emph{no
information whatsoever on the abstention rates or rates of label noise}.
Thus our result also strengthens existing results on active learning
from (non-abstaining) noisy labelers by providing an adaptive algorithm
that achieves that same performance as~\cite{CN08} without knowledge
of noise parameters. 

We extend our algorithm so that it applies to any smooth $d$-dimensional
decision boundary in a non-parametric setting, not just one-dimensional
thresholds, and we complement it with lower bounds on the number of
queries that need to be made to any labeler. Our lower bounds generalize
the lower bounds in~\cite{YCJ2015}, and shows that our upper bounds
are nearly optimal. We also present an example that shows that at
least a relaxed version of the monotonicity property is necessary
to achieve this performance gain; if the abstention rate plateaus
around the decision boundary, then our algorithm needs to query and
rely on the noisy labels (resulting in higher query complexity) in
order to find a hypothesis close to the one generating the ground
truth labels. 

\subsection{Related work}

There has been a considerable amount of work on active learning, most
of which involves labelers that are not allowed to abstain. Theoretical
work on this topic largely falls under two categories — the membership
query model~\cite{CN08,hegedHus1995generalized,NJC15,N11}, where
the learner can request label of any example in the instance space,
and the PAC model, where the learner is given a large set of unlabeled
examples from an underlying unlabeled data distribution, and can request
labels of a subset of these examples. Our work and also that of~\cite{YCJ2015}
builds on the membership query model.

There has also been a lot of work on active learning under different
noise models. The problem is relatively easy when the labeler always
provides the ground truth labels – see~\cite{CAL94,D05,hanneke2007teaching}
for work in this setting in the PAC model, and~\cite{hegedHus1995generalized}
for the membership query model. Perhaps the simplest setting of label
noise is random classification noise, where each label is flipped
with a probability that is independent of the unlabeled instance.
\cite{K06} shows how to address this kind of noise in the PAC model
by repeatedly querying an example until the learner is confident of
its label;~\cite{NJC15,N11} provide more sophisticated algorithms
with better query complexities in the membership query model. A second
setting is when the noise rate increases closer to the decision boundary;
this setting has been studied under the membership query model by~\cite{CN08}
and in the PAC model by~\cite{DHM07,BHLZ10,ZC14}. A final setting
is agnostic PAC learning — when a fixed but arbitrary fraction of
labels may disagree with the label assigned by the optimal hypothesis
in the hypothesis class. Active learning is known to be particularly
difficult in this setting; however, algorithms and associated label
complexity bounds have been provided by~\cite{BL13,BBL06,BHLZ10,DHM07,hanneke2007teaching,ZC14}
among others.

Our work expands on the membership query model, and our abstention
and noise models are related to a variant of the Tsybakov noise condition.
A setting similar to ours was considered by~\cite{CN08,YCJ2015}.
\cite{CN08} considers a non-abstaining labeler, and provides a near-optimal
binary search style active learning algorithm; however, their algorithm
is non-adaptive. \cite{YCJ2015} gives a nearly matching lower and
upper query complexity bounds for active learning with abstention
feedback, but they only give a non-adaptive algorithm for learning
one dimensional thresholds, and only study the situation where the
abstention rate is upper-bounded by a polynomial function. Besides
\cite{YCJ2015} , \cite{fang2012don,kading2015active} study active
learning with abstention feedback in computer vision applications.
However, these works are based on heuristics and do not provide any
theoretical guarantees.

\section{Settings\label{sec:Settings}}
\begin{notation*}
$\mathds{1}\left[A\right]$ is the indicator function: $\mathds{1}\left[A\right]=1$
if $A$ is true, and 0 otherwise. For $\boldsymbol{x}=(x_{1},\dots,x_{d})\in\mathbb{R}^{d}$
($d>1$), denote $(x_{1},\dots,x_{d-1})$ by $\boldsymbol{\tilde{x}}$.
Define $\ln x=\log_{e}x$, $\log x=\log_{\frac{4}{3}}x$, $\left[\ln\ln\right]_{+}(x)=\ln\ln\max\{x,e^{e}\}$.
We use $\tilde{O}$ and $\tilde{\Theta}$ to hide logarithmic factors
in $\frac{1}{\epsilon}$, $\frac{1}{\delta}$, and $d$.
\end{notation*}
\begin{defn*}
Suppose $\gamma\geq1$. A function $g:[0,1]^{d-1}\rightarrow\mathbb{R}$
is \emph{$(K,\gamma)$-Hölder smooth}, if it is continuously differentiable
up to $\left\lfloor \gamma\right\rfloor $-th order, and for any $\boldsymbol{x},\boldsymbol{y}\in[0,1]^{d-1}$,
$\left|g(\boldsymbol{y})-\sum_{m=0}^{\left\lfloor \gamma\right\rfloor }\frac{\partial^{m}g(\boldsymbol{x})}{m!}(\boldsymbol{y}-\boldsymbol{x})^{m}\right|\leq K\left\Vert \boldsymbol{y}-\boldsymbol{x}\right\Vert ^{\gamma}$.
We denote this class of functions by $\Sigma(K,\gamma)$.
\end{defn*}
We consider active learning for binary classification. We are given
an instance space $\mathcal{X}=[0,1]^{d}$ and a label space $\mathcal{L}=\{0,1\}$.
Each instance $x\in\mathcal{X}$ is assigned to a label $l\in\left\{ 0,1\right\} $
by an underlying function $h^{*}:\mathcal{X}\rightarrow\left\{ 0,1\right\} $
unknown to the learning algorithm in a hypothesis space $\mathcal{H}$
of interest. The learning algorithm has access to any $x\in\mathcal{X}$,
but no access to their labels. Instead, it can only obtain label information
through interactions with a labeler, whose relation to $h^{*}$ is
to be specified later. The objective of the algorithm is to sequentially
select the instances to query for label information and output a classifier
$\hat{h}$ that is close to $h^{*}$ while making as few queries as
possible.

We consider a non-parametric setting as  in \cite{CN08,minsker2012plug}
where the hypothesis space is the \emph{smooth boundary fragment}
class $\mathcal{H}=\{h_{g}(\boldsymbol{x})=\mathds{1}\left[x_{d}>g(\boldsymbol{\tilde{x}})\right]\mid g:[0,1]^{d-1}\rightarrow[0,1]\text{ is }(K,\gamma)\text{-Hölder smooth}\}$.
In other words, the decision boundaries of classifiers in this class
are epigraph of smooth functions (see Figure \ref{fig:induced-classifier}
for example). We assume $h^{*}(\boldsymbol{x})=\mathds{1}\left[x_{d}>g^{*}(\boldsymbol{\tilde{x}})\right]\in\mathcal{H}$.
When $d=1$, $\mathcal{H}$ reduces to the space of threshold functions
$\{h_{\theta}(x)=\mathds{1}\left[x>\theta\right]:\theta\in[0,1]\}$. 

The performance of a classifier $h(\boldsymbol{x})=\mathds{1}\left[x_{d}>g(\boldsymbol{\tilde{x}})\right]$
is evaluated by the $L^{1}$ distance between the decision boundaries
$\left\Vert g-g^{*}\right\Vert =\int_{[0,1]^{d-1}}\left|g(\boldsymbol{\tilde{x}})-g^{*}(\boldsymbol{\tilde{x}})\right|d\boldsymbol{\tilde{x}}$.

The learning algorithm can only obtain label information by querying
a labeler who is allowed to abstain from labeling or return an incorrect
label (flipping between 0 and 1). For each query $\boldsymbol{x}\in[0,1]^{d}$,
the labeler $L$ will return $y\in\mathcal{Y}=\{0,1,\perp\}$ ($\perp$
means that the labeler abstains from providing a 0/1 label) according
to some distribution $P_{L}(Y=y\mid X=\boldsymbol{x})$. When it is
clear from the context, we will drop the subscript from $P_{L}(Y\mid X)$.
Note that while the labeler can declare its indecision by outputting
$\perp$, we do not allow classifiers in our hypothesis space to output
$\perp$. 

In our active learning setting, our goal is to output a boundary $g$
that is close to $g^{*}$ while making as few interactive queries
to the labeler as possible. In particular, we want to find an algorithm
with low \emph{query complexity} $\Lambda(\epsilon,\delta,\mathcal{A},L,g^{*})$,
which is defined as the minimum number of queries that Algorithm $\mathcal{A}$,
acting on samples with ground truth $g^{*}$, should make to a labeler
$L$ to ensure that the output classifier $h_{g}(\boldsymbol{x})=\mathds{1}\left[x_{d}>g(\boldsymbol{\tilde{x}})\right]$
has the property $\left\Vert g-g^{*}\right\Vert =\int_{[0,1]^{d-1}}\left|g(\boldsymbol{\tilde{x}})-g^{*}(\boldsymbol{\tilde{x}})\right|d\boldsymbol{\tilde{x}}\leq\epsilon$
with probability at least $1-\delta$ over the responses of $L$.

\subsection{\label{subsec:Assumptions}Conditions}

We now introduce three conditions on the response of the labeler with
increasing strictness. Later we will provide an algorithm whose query
complexity improves with increasing strictness of conditions.

\begin{cond}\label{cond:oracle-correct}

The response distribution of the labeler $P(Y\mid X)$ satisfies: 

\begin{itemize} 

\item (abstention) For any $\boldsymbol{\tilde{x}}\in[0,1]^{d-1}$,
$x_{d},x_{d}'\in[0,1]$, if $\left|x_{d}-g^{*}(\boldsymbol{\tilde{x}})\right|\geq\left|x_{d}'-g^{*}(\boldsymbol{\tilde{x}})\right|$
then $P(\perp\mid(\boldsymbol{\tilde{x}},x_{d}))\leq P(\perp\mid(\boldsymbol{\tilde{x}},x_{d}'))$;

\item (noise) For any $\boldsymbol{x}\in[0,1]^{d}$, $P(Y\neq\mathds{1}\left[x_{d}>g^{*}(\boldsymbol{\tilde{x}})\right]\mid\boldsymbol{x},Y\neq\perp)\leq\frac{1}{2}$. 

\end{itemize}\end{cond}

Condition \ref{cond:oracle-correct} means that the closer $\boldsymbol{x}$
is to the decision boundary $\left(\boldsymbol{\tilde{x}},g^{*}(\boldsymbol{\tilde{x}})\right)$,
the more likely the labeler is to abstain from labeling. This complies
with the intuition that instances closer to the decision boundary
are harder to classify. We also assume the 0/1 labels can be flipped
with probability as large as $\frac{1}{2}$. In other words, we allow
unbounded noise.

\begin{cond}\label{cond:oracle-noshape}

Let $C,\beta$ be non-negative constants, and $f:[0,1]\rightarrow[0,1]$
be a nondecreasing function. The response distribution $P(Y\mid X)$
satisfies: 

\begin{itemize} 

\item (abstention) $P(\perp\mid\boldsymbol{x})\leq1-f\left(\left|x_{d}-g^{*}(\boldsymbol{\tilde{x}})\right|\right)$;

\item (noise) $P(Y\neq\mathds{1}\left[x_{d}>g^{*}(\boldsymbol{\tilde{x}})\right]\mid\boldsymbol{x},Y\neq\perp)\leq\frac{1}{2}\left(1-C\left|x_{d}-g^{*}(\boldsymbol{\tilde{x}})\right|^{\beta}\right)$. 

\end{itemize}

\end{cond}

Condition \ref{cond:oracle-noshape} requires the abstention and noise
probabilities to be upper-bounded, and these upper bounds decrease
as $\boldsymbol{x}$ moves further away from the decision boundary.
The abstention rate can be 1 at the decision boundary, so the labeler
may always abstain at the decision boundary. The condition on the
noise satisfies the popular Tsybakov noise condition \cite{T04}.

\begin{cond}\label{cond:oracle-shape}

Let $f:[0,1]\rightarrow[0,1]$ be a nondecreasing function such that
$\exists0<c<1$, $\forall0<a\leq1$ $\forall0\leq b\leq\frac{2}{3}a$,
$\frac{f(b)}{f(a)}\leq1-c$. The response distribution satisfies:
$P(\perp\mid\boldsymbol{x})=1-f\left(\left|x_{d}-g^{*}(\boldsymbol{\tilde{x}})\right|\right)$.

\end{cond}

An example where Condition \ref{cond:oracle-shape} holds is $P(\perp\mid\boldsymbol{x})=1-\left(x-0.3\right)^{\alpha}$
($\alpha>0$).

Condition~\ref{cond:oracle-shape} requires the abstention rate to
increase monotonically close to the decision boundary as in Condition
\ref{cond:oracle-correct}. In addition, it requires the abstention
probability $P(\perp|(\boldsymbol{\tilde{x}},x_{d}))$ not to be too
flat with respect to $x_{d}$. For example, when $d=1$, $P(\perp\mid x)=0.68$
for $0.2\leq x\leq0.4$ (shown as Figure \ref{fig:bad}) does not
satisfy Condition \ref{cond:oracle-shape}, and abstention responses
are not informative since this abstention rate alone yields no information
on the location of the decision boundary. In contrast, $P(\perp\mid x)=1-\sqrt{\left|x-0.3\right|}$
(shown as Figure \ref{fig:good}) satisfies Condition \ref{cond:oracle-shape},
and the learner could infer it is getting close to the decision boundary
when it starts receiving more abstention responses.

Note that here $c,f,C,\beta$ are \emph{unknown} and \emph{arbitrary}
parameters that characterize the complexity of the learning task.
We want to design an algorithm that does not require knowledge of
these parameters but still achieves nearly optimal query complexity.


\begin{figure}
\begin{minipage}[t]{0.3\columnwidth}%
\begin{center}
\includegraphics[width=1\columnwidth]{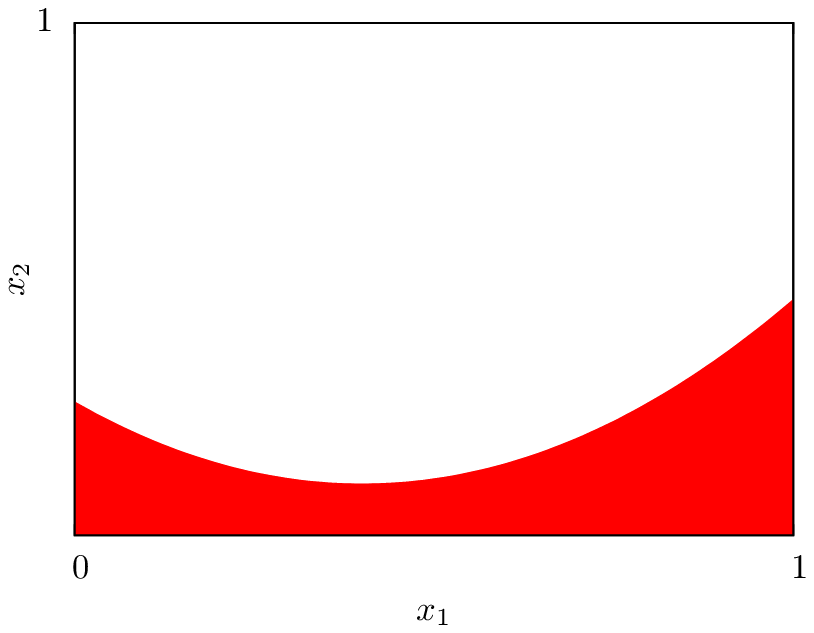}\caption{\label{fig:induced-classifier}A classifier with boundary $g(\boldsymbol{\tilde{x}})=\left(x_{1}-0.4\right)^{2}+0.1$
for $d=2$. Label 1 is assigned to the region above, 0 to the below
(red region)}
\par\end{center}%
\end{minipage}\hfill{}%
\begin{minipage}[t]{0.3\columnwidth}%
\begin{center}
\includegraphics[width=1\columnwidth]{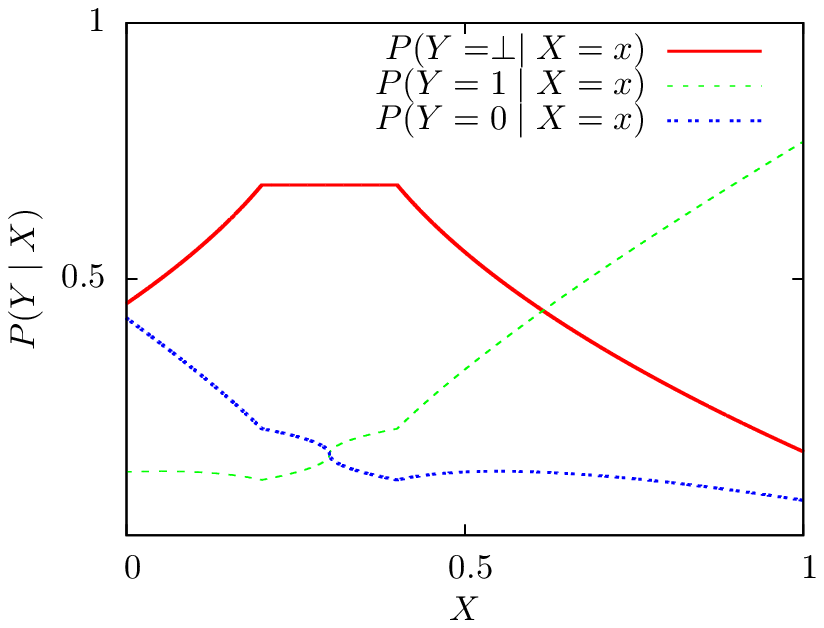}
\par\end{center}
\caption{\label{fig:bad}The distributions above satisfy Conditions~ \ref{cond:oracle-correct}
and \ref{cond:oracle-noshape}, but the abstention feedback is useless
since $P(\perp\mid x)$ is flat between $x=0.2$ and 0.4}
\end{minipage}\hfill{}%
\begin{minipage}[t]{0.3\columnwidth}%
\begin{center}
\includegraphics[width=1\columnwidth]{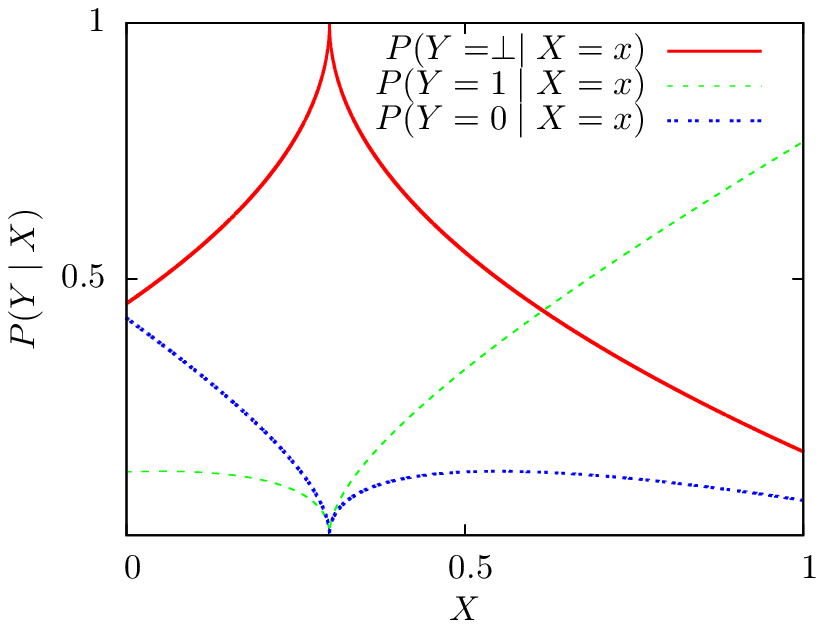}
\par\end{center}
\caption{\label{fig:good}Distributions above satisfy Conditions~ \ref{cond:oracle-correct},
\ref{cond:oracle-noshape}, and \ref{cond:oracle-shape}.}
\end{minipage}
\end{figure}

\section{Learning one-dimensional thresholds}

In this section, we start with the one dimensional case ($d=1$) to
demonstrate the main idea. We will generalize these results to multidimensional
instance space in the next section.

When $d=1$, the decision boundary $g^{*}$ becomes a point in $[0,1]$,
and the corresponding classifier is a threshold function over {[}0,1{]}.
In other words the hypothesis space becomes $\mathcal{H}=\{f_{\theta}(x)=\mathds{1}\left[x>\theta\right]:\theta\in[0,1]\}$).
We denote the ground truth decision boundary by $\theta^{*}\in[0,1]$.
We want to find a $\hat{\theta}\in[0,1]$ such that $|\hat{\theta}-\theta^{*}|$
is small while making as few queries as possible.

\subsection{Algorithm}

The proposed algorithm is a binary search style algorithm shown as
Algorithm~\ref{alg:The-active-learning}. (For the sake of simplicity,
we assume $\log\frac{1}{2\epsilon}$ is an integer.) Algorithm~\ref{alg:The-active-learning}
takes a desired precision $\epsilon$ and confidence level $\delta$
as its input, and returns an estimation $\hat{\theta}$ of the decision
boundary $\theta^{*}$. The algorithm maintains an interval $\left[L_{k},R_{k}\right]$
in which $\theta^{*}$ is believed to lie, and shrinks this interval
iteratively. To find the subinterval that contains $\theta^{*}$,
Algorithm~\ref{alg:The-active-learning} relies on two auxiliary
functions (marked in Procedure~\ref{alg:test}) to conduct adaptive
sequential hypothesis tests regarding subintervals of interval $\left[L_{k},R_{k}\right]$.

\begin{algorithm}
\begin{algorithmic}[1]
\State{Input: $\delta$, $\epsilon$}
\State{$[L_0,R_0]\gets [0,1]$} 
\For{$k=0,1,2,\dots,\log{\frac{1}{2\epsilon}}-1$} \label{line:algo-1d:iter} 
	\State{Define three quartiles: $U_k \gets \frac{3L_k+R_k}{4}$, $M_k \gets \frac{L_k+R_k}{2}$, $V_k \gets \frac{L_k+3R_k}{4}$} 
    \State{$A^{(u)},A^{(m)},A^{(v)},B^{(u)},B^{(v)} \gets $ Empty Array} 
	\For{$n=1,2,\dots $} \label{line:algo-1d:loop-n} 
		\State{Query at $U_k, M_k, V_k$, and receive labels $X^{(u)}_n, X^{(m)}_n, X^{(v)}_n$} 
		\For{$w\in \{u,m,v\}$} 
			\State \(\triangleright\) {We record whether $X^{(w)}=\perp$ in $A^{(w)}$, and the 0/1 label (as -1/1) in $B^{(w)}$ if $X^{(w)}\neq\perp$}
			\If{$X^{(w)}\neq\perp$} 	
				\State{$A^{(w)} \gets A^{(w)}\text{.append(1)}$} , $B^{(w)} \gets B^{(w)}\text{.append(}2\mathds{1}\left[X^{(w)}=1\right]-1\text{)}$
			\Else 	
				\State{$A^{(w)} \gets A^{(w)}\text{.append(0)}$} 
			\EndIf 
		\EndFor

		\State \(\triangleright\) {Check if the differences of abstention responses are statistically significant} \label{line:algo-1d:test-start}  
		\If{\Call{CheckSignificant-Var}{$\left\{ A^{(u)}_i - A^{(m)}_i\right\}_{i=1}^n$, $\frac{\delta}{4\log\frac{1}{2\epsilon}}$}} 
			\State{$[L_{k+1}, R_{k+1}] \gets [U_k, R_k]$; break} 
		\ElsIf{\Call{CheckSignificant-Var}{$\left\{ A^{(v)}_i - A^{(m)}_i\right\}_{i=1}^n$, $\frac{\delta}{4\log\frac{1}{2\epsilon}}$}} 
			\State{$[L_{k+1}, R_{k+1}] \gets [L_k, V_k]$; break} 
		\EndIf 

		\State \(\triangleright\) {Check if the differences between 0 and 1 labels are statistically significant} \label{line:algo-1d:test-label-start}
		\If{\Call{CheckSignificant}{$\left\{ -B^{(u)}_i\right\}_{i=1}^{B^{(u)}\text{.length}}$, $\frac{\delta}{4\log\frac{1}{2\epsilon}}$}} 
			\State{$[L_{k+1}, R_{k+1}] \gets [U_k, R_k]$; break}
		\ElsIf{\Call{CheckSignificant}{$\left\{ B^{(v)}_i\right\}_{i=1}^{B^{(v)}\text{.length}}$, $\frac{\delta}{4\log\frac{1}{2\epsilon}}$}}
			\State{$[L_{k+1}, R_{k+1}] \gets [L_k, V_k]$; break}
		\EndIf
\label{line:algo-1d:test-end}
	\EndFor
\EndFor
\State{Output: $\hat{\theta} = \left(L_{\log{\frac{1}{2\epsilon}}} + R_{\log{\frac{1}{2\epsilon}}} \right) /2$}
\end{algorithmic}

\caption{\label{alg:The-active-learning}The active learning algorithm for
learning thresholds}
\end{algorithm}

\floatname{algorithm}{Procedure}

\begin{algorithm}
\begin{algorithmic}[1] 
\State \(\triangleright\) {$D_0, D_1$ are absolute constants defined in  Proposition~\ref{prop:uniform-berstein} and Proposition~\ref{prop:uniform-empirical-berstein}}
\State \(\triangleright\) {$\left\{ X_i\right\}$ are i.i.d.\ random variables bounded by 1. $\delta$ is the confidence level. Detect if $\mathbb{E}X > 0$}
\Function{CheckSignificant}{$\left\{ X_i\right\}_{i=1}^n, \delta$}  
	\State{$p(n,\delta) \gets D_0 \left(1+ \ln\frac{1}{\delta} + \sqrt{4n \left( \left[\ln\ln\right]_{+}4n + \ln\frac{1}{\delta} \right)} \right)$}
	\State{Return $\sum_{i=1}^n X_i \geq p(n,\delta) $} 
\EndFunction
\Function{CheckSignificant-Var}{$\left\{ X_i\right\}_{i=1}^n, \delta$}  
	\State{Calculate the empirical variance  
		$\text{Var}=\frac{n}{n-1}\left( \sum_{i=1}^{n}{{X_i}^2}-\frac{1}{n}\left(\sum_{i=1}^{n}{X_i}\right)^2 \right)$} 
	\State{$q(n,\text{Var},\delta) \gets  D_{1}\left(1+\ln\frac{1}{\delta}+\sqrt{\left(\text{Var}+\ln\frac{1}{\delta}+1\right)\left(\left[\ln\ln\right]_{+}\left(\text{Var}+\ln\frac{1}{\delta}+1\right)+\ln\frac{1}{\delta}\right)}\right)$}
	\State{Return $n \geq \ln\frac{1}{\delta}$ AND $\sum_{i=1}^n X_i \geq q(n,\text{Var},\delta)$} 
\EndFunction 
\end{algorithmic}

\caption{\label{alg:test}Adaptive sequential testing}
\end{algorithm}

\floatname{algorithm}{Algorithm}

Suppose $\theta^{*}\in\left[L_{k},R_{k}\right]$. Algorithm~\ref{alg:The-active-learning}
tries to shrink this interval to a $\frac{3}{4}$ of its length in
each iteration by repetitively querying on quartiles $U_{k}=\frac{3L_{k}+R_{k}}{4}$,
$M_{k}=\frac{L_{k}+R_{k}}{2}$, $V_{k}=\frac{L_{k}+3R_{k}}{4}$. To
determine which specific subinterval to choose, the algorithm uses
0/1 labels and abstention responses simultaneously. Since the ground
truth labels are determined by $\mathds{1}\left[x>\theta^{*}\right]$,
one can infer that if the number of queries that return label 0 at
$U_{k}$ ($V_{k}$) is statistically significantly more (less) than
label 1, then $\theta^{*}$ should be on the right (left) side of
$U_{k}$ ($V_{k}$). Similarly, from Condition~\ref{cond:oracle-correct},
if the number of non-abstention responses at $U_{k}$ ($V_{k}$) is
statistically significantly more than non-abstention responses at
$M_{k}$, then $\theta^{*}$ should be closer to $M_{k}$ than $U_{k}$
($V_{k}$). 

Algorithm~\ref{alg:The-active-learning} relies on the ability to
shrink the search interval via statistically comparing the numbers
of obtained labels at locations $U_{k},M_{k},V_{k}$. As a result,
a main building block of Algorithm~\ref{alg:The-active-learning}
is to test whether i.i.d.\ bounded random variables $Y_{i}$ are
greater in expectation than i.i.d.\ bounded random variables $Z_{i}$
with statistical significance. In Procedure~\ref{alg:test}, we have
two test functions CheckSignificant and CheckSignificant-Var that
take i.i.d. random variables $\left\{ X_{i}=Y_{i}-Z_{i}\right\} $
($\left|X_{i}\right|\leq1$) and confidence level $\delta$ as their
input, and output whether it is statistically significant to conclude
$\mathbb{E}X_{i}>0$. 

CheckSignificant is based on the following uniform concentration result
regarding the empirical mean:
\begin{prop}
\label{prop:uniform-berstein}Suppose $X_{1},X_{2},\dots$ are a sequence
of i.i.d.\ random variables with $X_{1}\in[-2,2]$, $\mathbb{E}X_{1}=0$.
Take any $0<\delta<1$. Then there is an absolute constant $D_{0}$
such that with probability at least $1-\delta$, for all $n>0$ simultaneously, 

\[
\left|\sum_{i=1}^{n}X_{i}\right|\leq D_{0}\left(1+\ln\frac{1}{\delta}+\sqrt{4n\left(\left[\ln\ln\right]_{+}4n+\ln\frac{1}{\delta}\right)}\right)
\]
\end{prop}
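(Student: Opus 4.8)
The plan is to prove this as a "peeling" (stratification) argument over dyadic scales of $n$, combined with a standard finite-$n$ concentration inequality and a union bound whose tail is summable. The key point is that the right-hand side must control $\left|\sum_{i=1}^n X_i\right|$ \emph{simultaneously} for all $n$, which is exactly what a law-of-the-iterated-logarithm-type bound does; the $\left[\ln\ln\right]_+ 4n$ term is the fingerprint of this. So the first step is to fix a base $b>1$ (say $b=2$) and partition the positive integers into blocks $I_j = \{n : b^j \le n < b^{j+1}\}$ for $j=0,1,2,\dots$. Within block $I_j$, one wants a single high-probability event that controls $\max_{n \in I_j}\left|\sum_{i=1}^n X_i\right|$.

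Second, on each block I would apply a maximal version of Bernstein's (or Bennett's / Freedman's) inequality to the martingale $S_n = \sum_{i=1}^n X_i$: since $|X_i|\le 2$ and $\mathrm{Var}(X_i)\le 4$, we get for the block ending near $N_j := b^{j+1}$ a bound of the form $\Pr\left[\max_{n\le N_j}|S_n| \ge t\right] \le 2\exp\left(-\frac{t^2}{2(4 N_j + t)}\right)$ (up to constants). Choosing $t = t_j := C\left(1 + \ln\frac1\delta + \sqrt{N_j(\,[\ln\ln]_+ N_j + \ln\frac1\delta\,)}\right)$ makes the per-block failure probability at most, roughly, $\delta \cdot e^{-[\ln\ln]_+ N_j} \asymp \delta / (\ln N_j) \asymp \delta/(j\ln b)$ — which is \emph{not} summable. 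The standard fix is to put the extra $\ln j$ inside: more precisely one takes the failure probability on block $j$ to be $\le \delta\, 2^{-j}$ or $\le \frac{\delta}{(j+1)(j+2)}$, which costs only an additive $O(\ln j) = O(\ln\ln N_j)$ inside the square root of $t_j$, i.e. is absorbed into the $[\ln\ln]_+ 4n$ term. Summing over $j$ then gives total failure probability $\le \delta$.

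Third, on the complement of this bad event, for any $n$ let $j$ be the block index with $b^j \le n < b^{j+1}$; then $\left|\sum_{i=1}^n X_i\right| \le t_j$, and since $N_j = b^{j+1} \le b\cdot n$, we have $\sqrt{N_j} \le \sqrt{b}\,\sqrt{n}$ and $[\ln\ln]_+ N_j \le [\ln\ln]_+(bn) \le [\ln\ln]_+ 4n + O(1)$ (for $b\le 4$), so $t_j \le D_0\left(1 + \ln\frac1\delta + \sqrt{4n([\ln\ln]_+ 4n + \ln\frac1\delta)}\right)$ after adjusting the absolute constant $D_0$ to swallow all the $b$-dependent factors and the $O(1)$'s. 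This yields the claim uniformly in $n$.

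The main obstacle — really the only non-mechanical point — is the bookkeeping that makes the union bound over infinitely many scales summable \emph{without} inflating the bound beyond the stated form: one must verify that charging each block a failure probability like $\delta/(j{+}1)^2$ (rather than a flat $\delta$) perturbs $t_j$ only by an amount that the $\sqrt{n\,[\ln\ln]_+ 4n}$ term already dominates, and that passing from "$n$ in block $j$" to "the block endpoint $N_j$" loses only constant factors that can be hidden in $D_0$. The rest is a black-box invocation of a maximal Bernstein/Freedman inequality (legitimate here since the $X_i$ are i.i.d., bounded, mean zero) plus elementary monotonicity of $x\mapsto[\ln\ln]_+ x$ and $x\mapsto\sqrt{x}$. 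I would also remark that the two-sided bound on $\left|\sum X_i\right|$ follows from the one-sided bound applied to $\{X_i\}$ and $\{-X_i\}$ with $\delta/2$, again absorbed into $D_0$.
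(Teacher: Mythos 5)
Your proposal is correct, but it takes a genuinely different route from the paper. The paper proves Proposition~1 in one line: it invokes Lemma~\ref{lem:uniform-bernstein} (which is Theorem~2 of Balsubramani--Ramdas, \cite{ramdas2015sequential}, stated there in terms of $n\sigma^2$) and simply upper-bounds $\sigma^2 \leq 4$, using the monotonicity of $\sqrt{\cdot}$ and $[\ln\ln]_+$. Your argument, by contrast, is a self-contained re-derivation of that cited result from first principles: a geometric peeling of $n$ into blocks $[b^j, b^{j+1})$, a maximal Bernstein/Freedman bound per block with failure probability $\delta/(j+1)(j+2)$, and the observation that the extra $\ln j \asymp \ln\ln N_j$ cost of summability is exactly what produces (and is absorbed by) the $[\ln\ln]_+ 4n$ term; passing from the block endpoint to an arbitrary $n$ in the block costs only constants hidden in $D_0$. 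Both are sound. The paper's citation route is shorter and cleaner but opaque; your peeling route is longer but explains where the iterated-logarithm term comes from and why the $1 + \ln\frac{1}{\delta}$ additive term is needed (it dominates the subexponential tail of the Bernstein bound and the small-$N_j$ blocks), which is useful context for understanding the subsequent CheckSignificant lemmas. The one bookkeeping point worth making explicit in a polished write-up is the case distinction in Bernstein's tail (variance-dominated vs.\ range-dominated regimes), since that is precisely what forces the split into a $\sqrt{n(\cdot)}$ term and an additive $\ln\frac{1}{\delta}$ term.
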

In Algorithm~\ref{alg:The-active-learning}, we use CheckSignificant
to detect whether the expected number of queries that return label
0 at location $U_{k}$ ($V_{k}$) is more/less than the expected number
of label 1 with a statistical significance.

CheckSignificant-Var is based on the following uniform concentration
result which further utilizes the empirical variance $V_{n}=\frac{n}{n-1}\left(\sum_{i=1}^{n}X_{i}^{2}-\frac{1}{n}\left(\sum_{i=1}^{n}X_{i}\right)^{2}\right)$:
\begin{prop}
\label{prop:uniform-empirical-berstein}There is an absolute constant
$D_{1}$ such that with probability at least $1-\delta$, for all
$n\geq\ln\frac{1}{\delta}$ simultaneously, 

\[
\left|\sum_{i=1}^{n}X_{i}\right|\leq D_{1}\left(1+\ln\frac{1}{\delta}+\sqrt{\left(1+\ln\frac{1}{\delta}+V_{n}\right)\left(\left[\ln\ln\right]_{+}(1+\ln\frac{1}{\delta}+V_{n})+\ln\frac{1}{\delta}\right)}\right)
\]
\end{prop}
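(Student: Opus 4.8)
The plan is to derive this empirical-Bernstein-type uniform bound from the (non-empirical) uniform Bernstein bound of Proposition~\ref{prop:uniform-berstein}, by controlling the difference between the true variance and the empirical variance $V_n$. First I would recall the standard (fixed-$n$) empirical Bernstein inequality of Audibert--Munos--Szepesv\'ari / Maurer--Pontil, but since we need a statement that holds \emph{simultaneously for all $n\ge\ln\frac1\delta$}, the cleanest route is a peeling/stratification argument combined with Proposition~\ref{prop:uniform-berstein}. Concretely, I would apply Proposition~\ref{prop:uniform-berstein} once to the centered variables $X_i-\mathbb{E}X_i$ to get a uniform-in-$n$ control of $|\sum_{i=1}^n X_i - n\mathbb{E}X_1|$, and separately apply it to the centered squares $X_i^2-\mathbb{E}X_1^2$ (which also lie in a bounded interval after recentering, since $X_i\in[-2,2]\Rightarrow X_i^2\in[0,4]$) to get a uniform-in-$n$ control of $|\sum_{i=1}^n X_i^2 - n\mathbb{E}X_1^2|$. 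Each of these costs a $\delta/2$, so by a union bound both hold simultaneously with probability $\ge1-\delta$.

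The next step is to convert these two statements into a bound on $|\sum_{i=1}^n X_i|$ in terms of $V_n$. Write $\sigma^2=\operatorname{Var}(X_1)=\mathbb{E}X_1^2-(\mathbb{E}X_1)^2$. From the control on $\sum X_i^2$ and on $\sum X_i$ one gets $|V_n-\sigma^2|$ bounded by a quantity of order $\tfrac1n$ times $\sqrt{n(\lnln_+ n+\ln\frac1\delta)}$ plus lower-order terms; equivalently $\sigma^2 \le V_n + O\!\big(\sqrt{(\lnln_+n+\ln\frac1\delta)/n}\big)+O(1/n)$. Plugging this into the right-hand side of Proposition~\ref{prop:uniform-berstein} (with the true variance replaced by its upper bound via $\sqrt{a+b}\le\sqrt a+\sqrt b$), and using $\sqrt{n\cdot\sqrt{(\cdots)/n}} = (n(\cdots))^{1/4}$ which is dominated by $\sqrt{1+\ln\frac1\delta+V_n}\cdot(\cdots)^{1/2}$ up to absolute constants once $n\ge\ln\frac1\delta$, collapses everything into the claimed form $D_1\big(1+\ln\frac1\delta+\sqrt{(1+\ln\frac1\delta+V_n)([\ln\ln]_+(1+\ln\frac1\delta+V_n)+\ln\frac1\delta)}\big)$. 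The restriction $n\ge\ln\frac1\delta$ is exactly what makes the additive $\ln\frac1\delta$ and constant terms absorbable and what lets the $\lnln_+$ argument be rewritten in terms of $1+\ln\frac1\delta+V_n$ rather than $n$.

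The main obstacle I anticipate is the bookkeeping in the last step: one must verify that replacing the $\lnln_+(4n)$ term (whose argument is $n$) by a $\lnln_+$ term whose argument is $1+\ln\frac1\delta+V_n$ is legitimate. This is not automatic — it requires a case split. When $V_n$ (equivalently $\sigma^2$) is not too small, $n\sigma^2$ and $n$ are comparable up to the $\ln\frac1\delta$ slack, so $\lnln_+(4n)\asymp\lnln_+(1+\ln\frac1\delta+V_n)$ after using $\lnln$ of a product is at most the sum; when $\sigma^2$ is tiny (say $n\sigma^2\lesssim\ln\frac1\delta$), the Bernstein bound is dominated by the linear-in-$\ln\frac1\delta$ term anyway and the variance-dependent square-root term is $O(\ln\frac1\delta)$, so it is absorbed into $D_1(1+\ln\frac1\delta)$. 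Handling these regimes uniformly, and tracking that all hidden constants are absolute (independent of $\delta$ and the law of $X_1$), is the delicate part; the rest is routine algebra with $\sqrt{a+b}\le\sqrt a+\sqrt b$ and monotonicity of $[\ln\ln]_+$.
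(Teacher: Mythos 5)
There is a genuine gap in your plan at the crucial step where you ``plug $\sigma^2 \le V_n + \cdots$ into the right-hand side of Proposition~\ref{prop:uniform-berstein}.'' Proposition~\ref{prop:uniform-berstein} has no variance in it: its right-hand side is $D_0\bigl(1+\ln\frac1\delta+\sqrt{4n([\ln\ln]_+4n+\ln\frac1\delta)}\bigr)$, where $4n$ is $n$ times the \emph{worst-case} variance bound ($X_1\in[-2,2]\Rightarrow\sigma^2\le4$), not $n\sigma^2$. So there is nothing to substitute your variance estimate into, and the resulting bound would always scale like $\sqrt{n}$ regardless of how small $V_n$ is — which is strictly weaker than Proposition~\ref{prop:uniform-empirical-berstein} and cannot yield it. No amount of peeling or stratification can fix this, because $\sigma^2$ is a fixed (non-random) parameter of the problem: once you have only a $\sqrt{n}$-type bound, the variance dependence is irretrievably lost.

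What the argument actually requires is a \emph{variance-dependent} uniform bound, i.e.\ $\bigl|\sum_{i=1}^n Y_i\bigr|\le D_0\bigl(1+\ln\frac1\delta+\sqrt{n\sigma^2\,[\ln\ln]_+(n\sigma^2)+n\sigma^2\ln\frac1\delta}\bigr)$ uniformly in $n$ — this is the genuine content of Ramdas et al.'s Theorem~2, of which Proposition~\ref{prop:uniform-berstein} is only the weakened $\sigma^2\le4$ corollary. The paper imports that sharper bound (Lemma~\ref{lem:uniform-bernstein}) together with their Lemma~3 (our Lemma~\ref{lem:var-sumsqr}), uses the latter plus a self-bounding step to show $n\sigma^2\le K_3\bigl(1+\ln\frac1\delta+V_n\bigr)$ for $n\ge\ln\frac1\delta$ (Lemma~\ref{lem:var-leq-emp}), and then substitutes this into the $n\sigma^2$-dependent bound. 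Your second ingredient — controlling $|\sum X_i^2 - n\mathbb{E}X_1^2|$ to relate $V_n$ and $\sigma^2$ — is in the right spirit for the $n\sigma^2\le K_3(1+\ln\frac1\delta+V_n)$ step, but without the $\sigma^2$-dependent bound for $\sum X_i$ itself your chain cannot close.
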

The use of variance results in a tighter bound when $\text{Var}(X_{i})$
is small.

In Algorithm~\ref{alg:The-active-learning}, we use CheckSignificant-Var
to detect the statistical significance of the relative order of the
number of queries that return non-abstention responses at $U_{k}$
($V_{k}$) compared to the number of non-abstention responses at $M_{k}$.
This results in a better query complexity than using CheckSignificant
under Condition~\ref{cond:oracle-shape}, since the variance of the
number of abstention responses approaches 0 when the interval $[L_{k},R_{k}]$
zooms in on $\theta^{*}$.\footnote{We do not apply CheckSignificant-Var to 0/1 labels, because unlike
the difference between the numbers of abstention responses at $U_{k}$
($V_{k}$) and $M_{k}$, the variance of the difference between the
numbers of 0 and 1 labels stays above a positive constant.}

\subsection{Analysis}

For Algorithm~\ref{alg:The-active-learning} to be statistically
consistent, we only need Condition~\ref{cond:oracle-correct}.
\begin{thm}
\label{thm:algo-correctness} Let $\theta^{*}$ be the ground truth.
If the labeler $L$ satisfies Condition~\ref{cond:oracle-correct}
and Algorithm~ \ref{alg:The-active-learning} stops to output $\hat{\theta}$,
then $\left|\theta^{*}-\hat{\theta}\right|\leq\epsilon$ with probability
at least $1-\frac{\delta}{2}$.
\end{thm}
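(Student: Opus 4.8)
The plan is to show that, under Condition~\ref{cond:oracle-correct}, every interval update performed by Algorithm~\ref{alg:The-active-learning} preserves the invariant $\theta^{*}\in[L_{k},R_{k}]$ with high probability, and then to observe that after $\log\frac{1}{2\epsilon}$ iterations the interval has length $2\epsilon$, so its midpoint is within $\epsilon$ of $\theta^{*}$. The heart of the argument is therefore a ``no bad update'' claim: conditioned on $\theta^{*}\in[L_{k},R_{k}]$, the algorithm never moves to a subinterval excluding $\theta^{*}$. There are four ways the interval can be updated in a given iteration — two driven by \textsc{CheckSignificant-Var} on abstention-difference streams $\{A^{(u)}_i-A^{(m)}_i\}$ and $\{A^{(v)}_i-A^{(m)}_i\}$, and two driven by \textsc{CheckSignificant} on the label streams $\{-B^{(u)}_i\}$ and $\{B^{(v)}_i\}$ — and I would handle each by identifying the ``wrong direction'' event and bounding its probability via Propositions~\ref{prop:uniform-berstein} and~\ref{prop:uniform-empirical-berstein}.

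Concretely: suppose $\theta^{*}\in[L_k,R_k]$ but $\theta^{*}\le U_k$ (so the update $[L_{k+1},R_{k+1}]\gets[U_k,R_k]$ would be an error). For the label-based test on $\{-B^{(u)}_i\}$: since $U_k\ge\theta^{*}$, the ground-truth label at $U_k$ is $1$, and by the noise part of Condition~\ref{cond:oracle-correct} the expected value of $-B^{(u)}_i$ (the $-1/+1$ encoding, conditioned on a non-abstention) is $\le 0$; hence $\mathbb{E}[-B^{(u)}_i]\le 0$ over the non-abstention subsequence, and Proposition~\ref{prop:uniform-berstein} (applied with the union bound over the $\le 4\log\frac1{2\epsilon}$ invocations and the $\delta$-parameter $\frac{\delta}{4\log\frac1{2\epsilon}}$ fed to each call) guarantees that uniformly over $n$ the partial sums stay below the threshold $p(n,\delta')$, so \textsc{CheckSignificant} does not fire in the wrong direction. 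For the abstention-based test on $\{A^{(u)}_i-A^{(m)}_i\}$: because $\theta^{*}\le U_k$ and $|U_k-\theta^{*}| \ge |M_k - \theta^{*}|$ fails in general — wait, one must check the geometry — when $\theta^{*}\le U_k < M_k$ we have $|U_k-\theta^{*}|<|M_k-\theta^{*}|$ is false; rather $U_k$ is closer to $\theta^{*}$ than $M_k$ only when $\theta^*$ is to the left of the midpoint of $[U_k,M_k]$, so the honest statement is: the update to $[U_k,R_k]$ is triggered by abstention only when non-abstentions at $U_k$ significantly exceed those at $M_k$, which by the abstention monotonicity in Condition~\ref{cond:oracle-correct} forces $|U_k-\theta^{*}|>|M_k-\theta^{*}|$, i.e.\ $\theta^{*}<\frac{U_k+M_k}{2}\in[U_k,R_k]$, so $\theta^{*}$ remains in the new interval — and if instead $\theta^{*}\le U_k$ then $\mathbb{E}[A^{(u)}_i-A^{(m)}_i]=P(Y\ne\perp\mid U_k)-P(Y\ne\perp\mid M_k)=P(\perp\mid M_k)-P(\perp\mid U_k)\le 0$, so Proposition~\ref{prop:uniform-empirical-berstein} again prevents the spurious firing. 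The symmetric cases ($V_k$ vs.\ $M_k$, and $\theta^{*}$ on the right) are identical by reflection.

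I would then assemble these pieces: each of the (at most) $4\log\frac{1}{2\epsilon}$ test invocations fails ``in the wrong direction'' only on an event of probability $\le\frac{\delta}{4\log\frac1{2\epsilon}}$ by the uniform (over all $n$) concentration bounds, so a union bound gives total failure probability $\le\frac{\delta}{2}$; off that bad event, induction on $k$ from the base case $\theta^{*}\in[0,1]=[L_0,R_0]$ shows $\theta^{*}\in[L_k,R_k]$ for all $k$, in particular $\theta^{*}\in[L_{\log\frac1{2\epsilon}},R_{\log\frac1{2\epsilon}}]$, an interval of length $(3/4)^{\log_{4/3}\frac{1}{2\epsilon}}=2\epsilon$, whence $|\hat\theta-\theta^{*}|\le\epsilon$. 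The main obstacle — and the step deserving the most care — is the geometric bookkeeping in the abstention tests: one must verify that whenever \textsc{CheckSignificant-Var} legitimately fires, the inequality it certifies ($\mathbb{E}[A^{(u)}_i-A^{(m)}_i]>0$, say) really does imply via the monotone-abstention hypothesis that $\theta^{*}$ lies in the retained quarter-interval, and conversely that when $\theta^{*}$ is in the ``wrong'' region the relevant expectation is $\le 0$ so the uniform bound applies; this requires being slightly careful that Condition~\ref{cond:oracle-correct} only gives a weak inequality ($\le$), which is exactly why the argument shows the update is never made in a direction that would \emph{exclude} $\theta^{*}$, rather than that the ``correct'' update is always made.
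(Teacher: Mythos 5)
Your proposal follows essentially the same route as the paper's proof: maintain the invariant $\theta^{*}\in[L_{k},R_{k}]$, show that under Condition~\ref{cond:oracle-correct} the expectation of the relevant test statistic has the ``harmless'' sign whenever $\theta^{*}$ would be excluded by the proposed update, invoke Propositions~\ref{prop:uniform-berstein} and~\ref{prop:uniform-empirical-berstein} (via Lemmas~\ref{lem:CheckSignificant-correctness} and~\ref{lem:CheckSignificant-var-correctness}) to bound each wrong-direction firing uniformly in $n$ at level $\frac{\delta}{4\log\frac{1}{2\epsilon}}$, note that at each iteration at most two of the four tests can cause a bad update (because $\theta^{*}$ can lie in only one of $[L_{k},U_{k}]$, $[U_{k},V_{k}]$, $[V_{k},R_{k}]$, and the middle case is safe since $[U_k,V_k]\subset[L_{k+1},R_{k+1}]$), union-bound to $\frac{\delta}{2}$, and conclude from the interval length $(3/4)^{\log_{4/3}\frac{1}{2\epsilon}}=2\epsilon$.

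One technical point your sketch elides, which the paper spells out carefully: the query locations $U_{k},M_{k},V_{k}$ are themselves random (they depend on the history that determined $[L_{k},R_{k}]$), so the streams $\{A^{(u)}_{i}\}$, $\{B^{(u)}_{i}\}$, etc.\ are not unconditionally i.i.d., and Propositions~\ref{prop:uniform-berstein}--\ref{prop:uniform-empirical-berstein} do not apply to them directly. The paper handles this by partitioning over the countably many possible values $(p,q)$ that $(L_{k},R_{k})$ can take (the set $\mathbb{Q}_{k}$), conditioning on the event $E_{k,p,q}=\{L_{k}=p,R_{k}=q\}$ so that the query points are fixed and the streams become genuinely i.i.d., applying the concentration bounds under this conditioning, and then summing over $(p,q)$. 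Your argument needs this conditioning step (or some equivalent martingale/optional-stopping device) to be airtight; without it, the phrase ``$\mathbb{E}[-B^{(u)}_{i}]\le 0$'' does not have a well-defined meaning. Aside from this, your approach is correct and matches the paper's.
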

Under additional Conditions~\ref{cond:oracle-noshape} and \ref{cond:oracle-shape},
we can derive upper bounds of the query complexity for our algorithm.
(Recall $f$ and $\beta$ are defined in Conditions~\ref{cond:oracle-noshape}
and \ref{cond:oracle-shape}.)
\begin{thm}
\label{thm:algo-label-noshape}Let $\theta^{*}$ be the ground truth,
and $\hat{\theta}$ be the output of Algorithm~\ref{alg:The-active-learning}.
Under Conditions~\ref{cond:oracle-correct} and \ref{cond:oracle-noshape},
with probability at least $1-\delta$, Algorithm~\ref{alg:The-active-learning}
makes at most $\tilde{O}\left(\frac{1}{f(\frac{\epsilon}{2})}\epsilon^{-2\beta}\right)$
queries.
\end{thm}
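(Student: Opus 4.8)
The plan is to condition on a single high-probability event $E$ on which, for every outer iteration $k=0,\dots,\log\frac{1}{2\epsilon}-1$: (i) whenever one of the tests in Procedure~\ref{alg:test} fires, the subinterval it selects still contains $\theta^*$ — this is exactly the assertion of Theorem~\ref{thm:algo-correctness} — and (ii) the inner loop breaks after at most $n_k$ rounds, for an $n_k$ I will bound. Put $\delta'=\frac{\delta}{4\log\frac{1}{2\epsilon}}$. I would take $E$ to be the intersection over all iterations of: the Proposition~\ref{prop:uniform-berstein} events for the two centered label sequences $\{-B^{(u)}_i\}$, $\{B^{(v)}_i\}$ feeding CheckSignificant; the Proposition~\ref{prop:uniform-empirical-berstein} events for the abstention-difference sequences $\{A^{(u)}_i-A^{(m)}_i\}$, $\{A^{(v)}_i-A^{(m)}_i\}$ feeding CheckSignificant-Var; and one multiplicative Chernoff event per abstention-indicator sequence $\{A^{(u)}_i\},\{A^{(v)}_i\}$, stating that its partial sum is at least half its mean once $n\gtrsim \ln(1/\delta')/P(\text{non-abstention})$. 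A union bound over these $O(\log\frac{1}{\epsilon})$ sequences, with confidence parameters scaled so that the test events contribute at most $\frac{\delta}{2}$ (matching Theorem~\ref{thm:algo-correctness}) and the Chernoff events another $\frac{\delta}{2}$, gives $\Pr[E]\ge 1-\delta$. On $E$, (i) holds by the same reasoning as Theorem~\ref{thm:algo-correctness}, so the task reduces to (ii).

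For (ii), fix $k$; the interval $[L_k,R_k]$ has length $r_k=(3/4)^k$ and contains $\theta^*$, so at least one side quartile $W_k\in\{U_k,V_k\}$ has $|W_k-\theta^*|\ge r_k/4$ and lies strictly on one side of $\theta^*$ (take $W_k=V_k$ if $\theta^*\le M_k$, else $W_k=U_k$). By the noise part of Condition~\ref{cond:oracle-noshape}, the $\pm1$ variables recorded at $W_k$, in the orientation the algorithm tests for positive mean, have mean at least $\mu_k:=C(r_k/4)^{\beta}$; by the abstention part, the non-abstention probability at $W_k$ is at least $\phi_k:=f(r_k/4)$ ($f$ nondecreasing). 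On $E$, Proposition~\ref{prop:uniform-berstein} makes the CheckSignificant call that points toward $\theta^*$ (on $\{B^{(v)}_i\}$ if $W_k=V_k$, on $\{-B^{(u)}_i\}$ otherwise) return true once the array it reads has length $m_k=\tilde O(\mu_k^{-2})$, and the Chernoff event makes that array reach length $m_k$ within $n_k=\tilde O(m_k/\phi_k)=\tilde O\!\big(\tfrac{1}{f(r_k/4)}(r_k/4)^{-2\beta}\big)$ rounds. An earlier fire, e.g.\ from an abstention test, only ends the iteration sooner and, by (i), is still ``correct''. Hence the inner loop makes at most $3n_k$ queries.

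The remaining step is summation: the total query count on $E$ is $3\sum_{k=0}^{\log\frac{1}{2\epsilon}-1}n_k=\tilde O\!\big(\sum_k\frac{(r_k/4)^{-2\beta}}{f(r_k/4)}\big)$. Since $r_k/4=\tfrac14(3/4)^k$ decreases geometrically, $1/f(r_k/4)$ is nondecreasing in $k$ and can be pulled out at its largest value, while $\sum_k (r_k/4)^{-2\beta}$ is geometric and, up to a $\beta$-dependent constant (when $\beta>0$) or a $\log\frac1\epsilon$ factor (when $\beta=0$), is dominated by its last term. A short computation gives $r_{\log\frac1{2\epsilon}-1}/4=\frac{2\epsilon}{3}$; since $\frac{2\epsilon}{3}\ge\frac{\epsilon}{2}$ and $f$ is nondecreasing, replacing $f(\frac{2\epsilon}{3})$ by the smaller $f(\frac{\epsilon}{2})$ yields $\tilde O\!\big(\frac{1}{f(\epsilon/2)}\epsilon^{-2\beta}\big)$, with $\tilde O$ absorbing the $\ln\frac1{\delta'}$, $[\ln\ln]_+$, and problem-constant ($C$, $\beta$) factors. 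The part I expect to be the real work is (ii): getting $n_k$ with the dependence $1/f(r_k/4)$ rather than $1/f(r_k/4)^2$, which forces one to exploit that the abstention indicator has variance at most $P(\text{non-abstention})$ via a multiplicative Chernoff (or empirical-Bernstein) bound instead of the plain uniform deviation bound of Proposition~\ref{prop:uniform-berstein}, while simultaneously verifying — under one union bound — that none of the four tests can ever fire in the wrong direction.
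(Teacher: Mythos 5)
Your proposal is essentially the paper's proof: fix a per-iteration width $r_k = (3/4)^k$, show that on a good event each inner loop stops after $n_k = \tilde O\bigl(\frac{(r_k/4)^{-2\beta}}{f(r_k/4)}\bigr)$ rounds, and sum geometrically, pulling $1/f(\cdot)$ out at its largest value. You also correctly isolate the key technical point, namely that a multiplicative Chernoff (rather than a worst-case deviation) bound on the abstention indicator is what turns the dependence on $f$ from $1/f^2$ into $1/f$, and that any ``early fire'' of CheckSignificant-Var is harmless on the correctness event.

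Two small things worth noting. First, you route the ``CheckSignificant eventually fires'' step through Proposition~\ref{prop:uniform-berstein} applied to the centered labels; the paper instead uses a separate Hoeffding argument (Lemma~\ref{lem:CheckSignificant-effectiveness}). Both work and give the same $m_k = \tilde O(\mu_k^{-2})$ threshold, so this is a cosmetic difference. Second, phrasing the argument as ``intersect all concentration events into one global $E$ and union bound'' quietly ignores that the sequences $\{A_i^{(w)}\},\{B_i^{(w)}\}$ at iteration $k$ are only i.i.d.\ conditionally on the (random) interval $[L_k,R_k]$, and there are countably many possible intervals, so a plain union bound over ``all iterations'' is not directly available. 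The paper handles this by writing $\Pr(T_k \ge m_k \text{ and } \theta^*\in[L_k,R_k]) = \sum_{(p,q)\in\mathbb Q_k} \Pr(T_k\ge m_k\mid L_k=p,R_k=q)\Pr(L_k=p,R_k=q)$ and then bounding the conditional probability uniformly in $(p,q)$; the $\Pr(L_k=p,R_k=q)$ weights make the sum close. Your argument is morally the same but needs this conditioning step to become a rigorous union-bound-free accounting; as written it is a genuine (if routine) gap.
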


\begin{thm}
\label{thm:algo-label-shape}Let $\theta^{*}$ be the ground truth,
and $\hat{\theta}$ be the output of Algorithm~\ref{alg:The-active-learning}.
Under Conditions~\ref{cond:oracle-correct} and \ref{cond:oracle-shape},
with probability at least $1-\delta$, Algorithm \ref{alg:The-active-learning}
makes at most $\tilde{O}\left(\frac{1}{f(\frac{\epsilon}{2})}\right)$
queries.
\end{thm}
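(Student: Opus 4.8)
The plan is to carry out the entire argument on one high-probability event and then, on that event, to bound how long the inner loop runs in each of the $K:=\log\frac1{2\epsilon}$ iterations. Write $\ell_k:=(3/4)^k$ for the length of $[L_k,R_k]$ and $\delta':=\frac{\delta}{4\log\frac1{2\epsilon}}$. First I would invoke Propositions~\ref{prop:uniform-berstein} and~\ref{prop:uniform-empirical-berstein} — once for the raw statistics fed to the two test routines, and once for their centered versions $\{X_i-\mathbb{E}X_i\}$ — and union-bound over the $K$ iterations and the (at most four) test processes per iteration, to obtain an event $E$ with $\Pr[E]\ge1-\delta$ on which, simultaneously: (i) no call of \textsc{CheckSignificant} or \textsc{CheckSignificant-Var} ever reports significance when the underlying i.i.d.\ variables have nonpositive mean (this already contains Theorem~\ref{thm:algo-correctness}); and (ii) for every i.i.d.\ sequence $\{X_i\}$ with $X_i\in\{-1,0,1\}$ and mean $\mu$ passed to \textsc{CheckSignificant-Var}, $\sum_{i=1}^n X_i\ge n\mu-c_1(\delta')$ for all $n$, while $q(n,\mathrm{Var}_n,\delta')\le c_2(\delta')$ for all $n\ge\ln\frac1{\delta'}$, where $c_1,c_2=\mathrm{polylog}(1/\delta')$ do \emph{not} grow with $n$ (for (ii) even the crude bound $\mathrm{Var}_n\le2$ is enough; the essential point is that the empirical-Bernstein form of Proposition~\ref{prop:uniform-empirical-berstein} keeps both the fluctuation of $\sum X_i$ about $n\mu$ and the threshold bounded in $n$, unlike the Hoeffding-type Proposition~\ref{prop:uniform-berstein}). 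Everything below happens on $E$.

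Next I would verify $\theta^*\in[L_k,R_k]$ for all $k$ by induction (base case $[0,1]$). By (i), whenever iteration $k$ breaks, the responsible test has strictly positive mean: for a label test this reproduces the argument behind Theorem~\ref{thm:algo-correctness} (a significant excess of $0$-labels over $1$-labels at $U_k$ forces the true label at $U_k$ to be $0$, i.e.\ $\theta^*\ge U_k$, via the $\tfrac12$-noise bound of Condition~\ref{cond:oracle-correct}; symmetrically at $V_k$), and for an abstention test $\mathbb{E}[A^{(u)}_i-A^{(m)}_i]>0$ forces $f(|U_k-\theta^*|)>f(|M_k-\theta^*|)$, hence $|U_k-\theta^*|>|M_k-\theta^*|$ and $\theta^*>U_k$ (symmetrically at $V_k$); either way the new interval still contains $\theta^*$. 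After $K$ iterations $[L_K,R_K]$ has length $2\epsilon$ and contains $\theta^*$, so $|\hat\theta-\theta^*|\le\epsilon$, and it remains to bound the number of rounds $n_k$ of iteration $k$, since the query count is $3\sum_{k=0}^{K-1}n_k$.

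The deterministic heart of the argument is: whenever $\theta^*\in[L_k,R_k]$, at least one of $\{A^{(u)}_i-A^{(m)}_i\}$, $\{A^{(v)}_i-A^{(m)}_i\}$ has mean at least $c\,f(\ell_k/4)>0$. To see this, set $t:=\theta^*-L_k\in[0,\ell_k]$; if $t\le\ell_k/2$ then $|M_k-\theta^*|=\tfrac{\ell_k}{2}-t\le\tfrac23\big(\tfrac{3\ell_k}{4}-t\big)=\tfrac23|V_k-\theta^*|$ and $|V_k-\theta^*|\in[\ell_k/4,1]$, so Condition~\ref{cond:oracle-shape} gives $f(|M_k-\theta^*|)\le(1-c)f(|V_k-\theta^*|)$, whence, using $\mathbb{E}A^{(w)}_i=f(|w-\theta^*|)$ (the equality in Condition~\ref{cond:oracle-shape}), $\mathbb{E}[A^{(v)}_i-A^{(m)}_i]=f(|V_k-\theta^*|)-f(|M_k-\theta^*|)\ge c\,f(|V_k-\theta^*|)\ge c\,f(\ell_k/4)$; the case $t\ge\ell_k/2$ is symmetric with $U_k$. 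Feeding this designated sequence, with mean $\mu\ge c\,f(\ell_k/4)$, into (ii), $\sum_{i=1}^n X_i\ge n\mu-c_1(\delta')$ beats $c_2(\delta')\ge q(n,\mathrm{Var}_n,\delta')$ once $n\ge(c_1(\delta')+c_2(\delta'))/\mu$, so \textsc{CheckSignificant-Var} on it — hence the inner loop, possibly via another test no later — breaks after $n_k\le\max\{\ln\tfrac1{\delta'},(c_1(\delta')+c_2(\delta'))/\mu\}=\tilde{O}(1/f(\ell_k/4))$ rounds. Finally I would sum: since $(3/4)^2=\tfrac9{16}\le\tfrac23$, Condition~\ref{cond:oracle-shape} with $a=\ell_k/4,\,b=\ell_{k+2}/4$ gives $f(\ell_{k+2}/4)\le(1-c)f(\ell_k/4)$, so $1/f(\ell_k/4)$ grows geometrically in $k$ and $\sum_{k=0}^{K-1}1/f(\ell_k/4)\le\frac2c\cdot1/f(\ell_{K-1}/4)$; and because $\log=\log_{4/3}$, $\ell_{K-1}=(3/4)^{\log_{4/3}\frac1{2\epsilon}-1}=\tfrac83\epsilon$, so $\ell_{K-1}/4=\tfrac23\epsilon\ge\tfrac\epsilon2$ and $f(\ell_{K-1}/4)\ge f(\epsilon/2)$, whence the total query count is $\tilde{O}(1/f(\epsilon/2))$.

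The hard part will be part~(ii) of the event $E$: proving that \textsc{CheckSignificant-Var} fires within $\tilde{O}(1/\mu)$ rounds rather than $\tilde{O}(1/\mu^2)$. This is exactly why the abstention tests are built on the empirical-variance concentration of Proposition~\ref{prop:uniform-empirical-berstein} (bounded threshold) instead of Proposition~\ref{prop:uniform-berstein} (whose $\sqrt{n[\ln\ln]_+n}$ term would drag $n$ up to $\approx1/\mu^2$), and it requires marrying a uniform-in-$n$ lower-tail bound on $\sum X_i$ with a uniform-in-$n$ bound on the data-dependent threshold, both paid for out of the $\delta$ budget. A secondary point worth spelling out is that the label-based breaks can only help — on $E$ they never move $\theta^*$ out of $[L_k,R_k]$ and only terminate the loop earlier — so the abstention tests by themselves certify the bound, which is the precise sense in which, under Condition~\ref{cond:oracle-shape}, the algorithm need not rely on the noisy labels.
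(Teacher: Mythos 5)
Your overall structure matches the paper's: work on one high-probability event (the paper instead conditions on $E_{k,p,q}$ and invokes Theorem~\ref{thm:algo-correctness} first, but this is an organizational choice, not a substantive difference); show that when $\theta^*\in[L_k,R_k]$ at least one abstention difference has mean $\geq c\,f(\ell_k/4)$, using Condition~\ref{cond:oracle-shape}'s $\tfrac23$-contraction and the equality $P(\perp\mid x)=1-f(|x-\theta^*|)$; use the variance-based test to certify termination within $\tilde{O}(1/f(\ell_k/4))$ rounds; and sum over $k$. Your geometric tail sum via $(3/4)^2\leq\tfrac23$ is a nice refinement, but unnecessary under $\tilde{O}$ — the paper simply uses $\sum_k 1/f(\ell_k/4)\leq K/f(\ell_{K-1}/4)$. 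The observation that the label tests can only help (never push $\theta^*$ out of the interval, only terminate earlier) is correct and mirrors the remark in the paper.

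The gap is exactly where you flagged it, in part (ii) of your event $E$, and it is not a cosmetic one. You assert $q(n,\mathrm{Var}_n,\delta')\leq c_2(\delta')$ with $c_2$ independent of $n$ "because even the crude bound $\mathrm{Var}_n\leq 2$ is enough." But $\mathrm{Var}_n=\frac{n}{n-1}\bigl(\sum_i X_i^2-\frac1n(\sum_i X_i)^2\bigr)$ is $n$ times an empirical variance, not a per-sample quantity: for generic $X_i\in\{-1,0,1\}$ the crude bound is $\mathrm{Var}_n\leq\frac{n^2}{n-1}=O(n)$. With $\mathrm{Var}_n=\Theta(n)$ the threshold $q(n,\mathrm{Var}_n,\delta')$ grows like $\sqrt{n\,[\ln\ln]_+ n}$ and your comparison $n\mu-c_1\geq c_2$ degenerates to $n\gtrsim 1/\mu^2$, which is the label-rate, not $\tilde{O}(1/\mu)$. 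Likewise "$\sum_i X_i\geq n\mu-c_1(\delta')$ for all $n$" with $n$-independent $c_1$ is false for the same reason. What actually makes the $1/\mu$ rate come out is the per-sample variance bound you never compute: on $E_{k,p,q}$, $\mathrm{Var}(A_i^{(u)}-A_i^{(m)})\leq\mathbb{E}A_i^{(u)}+\mathbb{E}A_i^{(m)}=f(\zeta+\ell_k/4)+f(\zeta)\leq 2f(\ell_k/4+\zeta)\approx 2\mu$ (using $A_i^{(w)}\in\{0,1\}$ so $\mathrm{Var}(A_i^{(w)})\leq\mathbb{E}A_i^{(w)}$, Condition~\ref{cond:oracle-shape}'s equality, and monotonicity of $f$). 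This is the calculation the paper carries out in the proof of Theorem~\ref{thm:algo-label-shape}, feeding it into Lemma~\ref{lem:CheckSignificant-var-effectiveness} (which explicitly assumes $\mathrm{Var}(X_i)\leq 2\epsilon$) and then into Lemma~\ref{lem:diff in E} and Lemma~\ref{lem:single-var-bound} to control $\mathrm{Var}_n\approx n\sigma^2=\tilde{O}(1)$ only at the scale $n=\tilde{O}(1/\mu)$. You need that chain, not the nonexistent uniform-in-$n$ bound; without it the rate collapses to the one from Theorem~\ref{thm:algo-label-noshape}.
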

The query complexity given by Theorem~\ref{thm:algo-label-shape}
is independent of $\beta$ that decides the flipping rate, and consequently
smaller than the bound in Theorem~\ref{thm:algo-label-noshape}.
This improvement is due to the use of abstention responses, which
become much more informative under Condition~\ref{cond:oracle-shape}.

\subsection{Lower Bounds}

In this subsection, we give lower bounds of query complexity in the
one-dimensional case and establish near optimality of Algorithm~\ref{alg:The-active-learning}.
We will give corresponding lower bounds for the high-dimensional case
in the next section.

The lower bound in \cite{YCJ2015} can be easily generalized to Condition~\ref{cond:oracle-noshape}:
\begin{thm}
\label{thm:1d-lb-noshape} (\cite{YCJ2015}) There is a universal
constant $\delta_{0}\in(0,1)$ and a labeler $L$ satisfying Conditions~\ref{cond:oracle-correct}
and \ref{cond:oracle-noshape}, such that for any active learning
algorithm $\mathcal{A}$, there is a $\theta^{*}\in[0,1]$, such that
for small enough $\epsilon$, $\Lambda(\epsilon,\delta_{0},\mathcal{A},L,\theta^{*})\geq\Omega\left(\frac{1}{f(\epsilon)}\epsilon^{-2\beta}\right)$.
\end{thm}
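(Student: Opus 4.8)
The plan is to prove Theorem~\ref{thm:1d-lb-noshape} by a two-point (Le~Cam) argument, obtained by inserting a general nondecreasing $f$ and a general $\beta$ into the lower-bound construction of \cite{YCJ2015}. Fix two candidate thresholds $\theta^{(0)}=\tfrac12-2\epsilon$ and $\theta^{(1)}=\tfrac12+2\epsilon$ (any separation $\Theta(\epsilon)$ exceeding $2\epsilon$ works). Any output $\hat\theta$ satisfies $|\hat\theta-\theta^{(j)}|\le\epsilon$ for at most one $j$, so an algorithm that is $\epsilon$-accurate with probability $\ge 1-\delta_0$ against \emph{both} instances must decide, from its query/response transcript, which instance is in force with probability $\ge 1-2\delta_0$; for $\delta_0$ a small universal constant this forces the two transcript laws to have total variation bounded below by a constant.

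The crux is to design the labeler so that the abstention responses carry essentially no information about which of $\theta^{(0)},\theta^{(1)}$ is the truth — this is exactly the ``useless abstention'' regime illustrated in Figure~\ref{fig:bad}, which deliberately violates Condition~\ref{cond:oracle-shape} while still obeying Conditions~\ref{cond:oracle-correct} and~\ref{cond:oracle-noshape}. Concretely I would make $P(\perp\mid x)$ \emph{flat} on a window around the boundary: choose a plateau radius $D$ with $\epsilon\ll D$ (a balanced choice, ruling out the two cheats described below, is $D\asymp\epsilon^{1-\beta}$, and $f$ taken constant on all of $[0,1]$ when $\beta$ is large), and for ground truth $\theta^{(j)}$ set $P(\perp\mid x)=1-f(D)$ whenever $|x-\theta^{(j)}|\le D$ and $P(\perp\mid x)=1-f(|x-\theta^{(j)}|)$ otherwise, while the non-$\perp$ label equals $\mathds 1[x>\theta^{(j)}]$ flipped with probability $\tfrac12\bigl(1-C|x-\theta^{(j)}|^\beta\bigr)$. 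This instance satisfies Condition~\ref{cond:oracle-noshape} with the nondecreasing function $\Delta\mapsto f(\min\{\Delta,D\})$ (hence also Condition~\ref{cond:oracle-correct}, since $P(\perp\mid\cdot)$ is constant and then decreasing in the distance to the boundary). Because $|\theta^{(0)}-\theta^{(1)}|=4\epsilon\ll D$, at every query point within $D$ of \emph{both} candidates the abstention probability equals $1-f(D)$ under both instances, so the $\perp$/non-$\perp$ coordinate of the response is uninformative there; in particular, querying at or near a candidate — the natural way to exploit a non-flat abstention rate — gains nothing.

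With the construction in hand I would bound the per-query divergence and use a chain-rule argument for adaptive protocols. An algorithm making at most $N$ queries induces transcript laws whose total variation is, by the divergence decomposition for interactive query protocols together with Pinsker's inequality (or, to avoid the asymmetry near a candidate, its Bretagnolle--Huber / squared-Hellinger variant), controlled by $N\cdot\sup_x\mathrm{KL}\bigl(P^{(0)}(\cdot\mid x)\,\|\,P^{(1)}(\cdot\mid x)\bigr)$. For $x$ within $D$ of both candidates the two response laws share the $\perp$-mass $1-f(D)$ and differ only in the conditional $0/1$ law, which is $\mathrm{Bernoulli}\bigl(\tfrac12\pm\Theta(\epsilon^\beta)\bigr)$ in the worst case ($x$ near $\tfrac12$), contributing $O\bigl(f(D)\,\epsilon^{2\beta}\bigr)=O\bigl(f(\epsilon)\,\epsilon^{2\beta}\bigr)$; for $x$ farther than $D$ from a candidate the $0/1$ labels are non-informative since $\mathds 1[x>\theta^{(0)}]=\mathds 1[x>\theta^{(1)}]$, and the choice $D\asymp\epsilon^{1-\beta}$ is precisely what makes the residual divergence coming from the (now slightly different) abstention rates also $O\bigl(f(\epsilon)\,\epsilon^{2\beta}\bigr)$. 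Hence $\sup_x\mathrm{KL}=O\bigl(f(\epsilon)\,\epsilon^{2\beta}\bigr)$, and a constant lower bound on the transcript total variation forces $N=\Omega\bigl(\tfrac{1}{f(\epsilon)}\epsilon^{-2\beta}\bigr)$, which is the claim ($f$ here being the nondecreasing function of Condition~\ref{cond:oracle-noshape} satisfied by the constructed labeler).

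The step I expect to be the main obstacle is the calibration of the plateau radius $D$ and the accompanying \emph{uniform} per-query divergence bound: one must simultaneously defeat every query strategy — queries inside the plateau, queries straddling its edge, and far queries that try to read off the threshold from the (there non-flat) abstention rate — showing that each is no cheaper than $\tfrac{1}{f(\epsilon)}\epsilon^{-2\beta}$, all while keeping the instance a legal one for Conditions~\ref{cond:oracle-correct} and~\ref{cond:oracle-noshape} (with $f(\epsilon)\to 0$, so that the $\tfrac{1}{f(\epsilon)}$ factor genuinely grows). The remaining ingredients — the chain-rule / Le~Cam machinery for adaptive queries, the elementary estimate $\mathrm{KL}\bigl(\tfrac12+t\,\|\,\tfrac12-t\bigr)=O(t^2)$, and the reduction from ``$\epsilon$-accurate output'' to ``distinguishes the two labelers'' — are routine.
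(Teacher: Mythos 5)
Your overall shape — Le~Cam / Fano two-point (or multi-point) lower bound, driven by a plateau in the abstention rate near the boundary so that abstention carries no information, combined with a per-query KL bound and a chain rule for adaptive protocols — is the same as the paper's. The paper does not re-prove this one-dimensional statement (it cites~\cite{YCJ2015}), but its proof of the $d$-dimensional analogue (Theorem~\ref{thm:hd-lb-noshape}) makes the intended construction explicit, and comparing against it reveals two concrete problems with your instance.

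First, the more important one: outside your plateau you set $P(\perp\mid x)=1-f(|x-\theta^{(j)}|)$, which \emph{still depends on the candidate} $\theta^{(j)}$. For an arbitrary nondecreasing $f$ this leaks information. Concretely, take a query point $x$ with $D<|x-\theta^{(0)}|<D+4\epsilon$ but $|x-\theta^{(1)}|\le D$: under hypothesis $0$ the abstention rate is $1-f(|x-\theta^{(0)}|)$ and under hypothesis $1$ it is $1-f(D)$, and if $f$ has a jump anywhere in $(D,D+4\epsilon]$ (which the theorem must allow — $f$ is an arbitrary nondecreasing function) a single such query distinguishes the two hypotheses with constant probability, so the KL is not $O(f(\epsilon)\epsilon^{2\beta})$. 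You cannot fix this by conditioning on continuity of $f$, since nothing in Condition~\ref{cond:oracle-noshape} grants it. The paper's construction sidesteps this entirely: for queries outside the plateau it sets $P(\perp\mid\boldsymbol{x})=1-f(x_d)$, depending \emph{only on the query location and not on the candidate boundary}. Because Condition~\ref{cond:oracle-noshape} is a one-sided upper bound and every candidate boundary lies in $[0,A]$, one has $|x_d-g_{\boldsymbol{\omega}}(\boldsymbol{\tilde{x}})|\le x_d$ for $x_d\ge A$, so $1-f(x_d)\le 1-f(|x_d-g_{\boldsymbol{\omega}}(\boldsymbol{\tilde{x}})|)$ and the condition is still met, while the abstention coordinate of the response is now \emph{exactly} distributionally identical under all hypotheses. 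That is the key trick your construction misses: make the abstention rate a function of $x$ alone.

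Second, your calibration $D\asymp\epsilon^{1-\beta}$ is off. Within the plateau the per-query KL is governed by the probability $f(D)$ of receiving a non-$\perp$ response times the squared label bias $\Theta(\epsilon^{2\beta})$; to get a lower bound of $\Omega\!\left(\frac{1}{f(\epsilon)}\epsilon^{-2\beta}\right)$ you need $f(D)=O(f(\epsilon))$, which for a general nondecreasing $f$ forces $D=O(\epsilon)$. Taking $D\asymp\epsilon^{1-\beta}\gg\epsilon$ (for $\beta<1$) makes $f(D)$ potentially much larger than $f(\epsilon)$ and weakens the bound; and for $\beta>1$ you are tacitly redefining $f$ to be constant, which no longer matches the $f$ in the theorem statement. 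The paper uses $A\asymp\epsilon$, which is the right scale.

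The remaining scaffolding you describe — Fano / Le~Cam reduction, the chain rule for interactive query protocols (cf.\ the displayed decomposition~(\ref{eq:kl-decompose}) in the paper), and $\mathrm{KL}\!\left(\mathrm{Ber}(\tfrac12+t)\parallel\mathrm{Ber}(\tfrac12-t)\right)=O(t^2)$ — is sound and matches the paper. The gap is precisely in the construction of the hard instance.
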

Our query complexity (Theorem~\ref{thm:algo-label-shape}) for the
algorithm is also almost tight under Conditions~\ref{cond:oracle-correct}
and \ref{cond:oracle-shape} with a polynomial abstention rate.
\begin{thm}
\label{thm:1d-lb-shape}There is a universal constant $\delta_{0}\in(0,1)$
and a labeler $L$ satisfying Conditions~\ref{cond:oracle-correct},
\ref{cond:oracle-noshape}, and \ref{cond:oracle-shape} with $f(x)=C'x^{\alpha}$
($C'>0$ and $0<\alpha\leq2$ are constants), such that for any active
learning algorithm $\mathcal{A}$, there is a $\theta^{*}\in[0,1]$,
such that for small enough $\epsilon$, $\Lambda(\epsilon,\delta_{0},\mathcal{A},L,\theta^{*})\geq\Omega\left(\epsilon^{-\alpha}\right)$.
\end{thm}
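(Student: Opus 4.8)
The plan is a two‑point (Le Cam) argument. Fix a small $\epsilon$ and set $\theta_0=\tfrac12-2\epsilon$, $\theta_1=\tfrac12+2\epsilon$, so that no single output $\hat\theta$ can satisfy $|\hat\theta-\theta_0|\le\epsilon$ and $|\hat\theta-\theta_1|\le\epsilon$ simultaneously. Take the labeler $L$ that, given ground truth $\theta^*$, abstains with probability $P(\perp\mid x)=1-C'|x-\theta^*|^\alpha$ and, whenever it does not abstain, returns a uniformly random bit in $\{0,1\}$. One checks directly that $L$ satisfies Condition~\ref{cond:oracle-correct} (the non‑abstention probability $C'|x-\theta^*|^\alpha$ increases with $|x-\theta^*|$, and the flip rate is $\tfrac12$), Condition~\ref{cond:oracle-noshape} with $f(x)=C'x^\alpha$ and with the choice $C=0$ (which makes the noise clause vacuous), and Condition~\ref{cond:oracle-shape} with $c=1-(2/3)^\alpha$; here $C'$ is chosen small enough that $1-C'|x-\theta^*|^\alpha$ is always a valid probability. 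Since the $0/1$ labels carry no information, a transcript is informationally equivalent to the sequence of indicators $\mathds 1[Y_t=\perp]\sim\mathrm{Bern}(1-C'|X_t-\theta^*|^\alpha)$. It then suffices to show that any algorithm that makes $n\le c\,\epsilon^{-\alpha}$ queries (padding to exactly this many if it stops early) induces transcript laws with $\mathrm{TV}(P^{(n)}_{\theta_0},P^{(n)}_{\theta_1})\le\tfrac12$; by Le Cam's two‑point inequality the worst‑case failure probability is then $\ge\tfrac12(1-\tfrac12)=\tfrac14$, so for $\delta_0=\tfrac14$ (a universal constant) the algorithm must, for at least one of $\theta^*\in\{\theta_0,\theta_1\}$, make at least $c\,\epsilon^{-\alpha}=\Omega(\epsilon^{-\alpha})$ queries.

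For the total‑variation bound I would partition query locations into \emph{near} ($|x-\theta_0|\le\epsilon$ or $|x-\theta_1|\le\epsilon$) and \emph{far} (both distances exceed $\epsilon$), using a different estimate on each. Two per‑query facts drive everything. First, at every $x$, $\mathrm{TV}(P_{\theta_0}(\cdot\mid x),P_{\theta_1}(\cdot\mid x))=C'\big||x-\theta_0|^\alpha-|x-\theta_1|^\alpha\big|$, which is $O(\epsilon^\alpha)$ whenever $x$ is near. Second, at every far $x$ the two $\mathrm{KL}$ divergences between $P_{\theta_0}(\cdot\mid x)$ and $P_{\theta_1}(\cdot\mid x)$ reduce (the common fair‑coin part cancels) to $\mathrm{KL}$ between $\mathrm{Bern}(C'|x-\theta_0|^\alpha)$ and $\mathrm{Bern}(C'|x-\theta_1|^\alpha)$, and the standard bound $\tfrac{(a-b)^2}{b(1-b)}$ with $a,b\ge C'\epsilon^\alpha$ and $|a-b|=O(\epsilon\, d^{\alpha-1})$ (where $d>\epsilon$ is the smaller of the two distances) gives $O(\epsilon^2 d^{\alpha-2})=O(\epsilon^\alpha)$ — and it is exactly here that $\alpha\le 2$ is used, since only then does $d^{\alpha-2}\le\epsilon^{\alpha-2}$. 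To glue the two estimates together I would introduce auxiliary labelers $\hat P_{\theta_i}$ that force $\perp$ at every near point and coincide with $P_{\theta_i}$ at every far point. Then $\mathrm{KL}(\hat P^{(n)}_{\theta_0}\|\hat P^{(n)}_{\theta_1})$ is finite and, by the chain rule for adaptively collected data, at most $n\cdot O(\epsilon^\alpha)$, so Pinsker yields $\mathrm{TV}(\hat P^{(n)}_{\theta_0},\hat P^{(n)}_{\theta_1})=O(\sqrt{n\epsilon^\alpha})$; and a step‑by‑step optimal coupling bounds $\mathrm{TV}(P^{(n)}_{\theta_i},\hat P^{(n)}_{\theta_i})$ by the expected number of near queries times $\max_{\text{near }x}\bigl(1-P_{\theta_i}(\perp\mid x)\bigr)=O(\epsilon^\alpha)$, i.e.\ by $O(n\epsilon^\alpha)$. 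The triangle inequality then gives $\mathrm{TV}(P^{(n)}_{\theta_0},P^{(n)}_{\theta_1})=O(n\epsilon^\alpha)+O(\sqrt{n\epsilon^\alpha})\le\tfrac12$ as soon as $n\le c\,\epsilon^{-\alpha}$ for a sufficiently small absolute constant $c$.

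The main obstacle is precisely the degeneracy forced by Condition~\ref{cond:oracle-shape}: the shape constraint prevents $f$ from being bounded away from $0$ near the origin, and for the polynomial $f$ it forces $P(\perp\mid\theta^*)=1$, so $P_{\theta_0}(\cdot\mid x)$ and $P_{\theta_1}(\cdot\mid x)$ degenerate to the point mass $\delta_\perp$ as $x$ approaches $\theta_0$ and $\theta_1$ respectively. Consequently \emph{both} directional divergences $\mathrm{KL}(P^{(n)}_{\theta_0}\|P^{(n)}_{\theta_1})$ and $\mathrm{KL}(P^{(n)}_{\theta_1}\|P^{(n)}_{\theta_0})$ are infinite once the learner queries near both thresholds, so a direct Pinsker bound is vacuous; conversely a pure coupling/total‑variation argument is too lossy (far queries have $\mathrm{TV}=\Theta(\epsilon)$ per query, which only yields $\Omega(\epsilon^{-\alpha/2})$). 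The near/far split with the auxiliary $\hat P_{\theta_i}$ is engineered so that each query is routed through whichever of the two bounds is simultaneously finite and tight: near queries via total variation (small because the distributions are close to $\delta_\perp$), far queries via the chi‑square/$\mathrm{KL}$ bound (where the $\alpha\le 2$ hypothesis makes them negligible relative to the target rate $\epsilon^{-\alpha}$). The remaining work is routine bookkeeping: verifying that the single labeler $L$ really does satisfy Conditions~\ref{cond:oracle-correct}–\ref{cond:oracle-shape} simultaneously for $f(x)=C'x^\alpha$, and turning the transcript‑TV bound into the stated query‑complexity lower bound against arbitrary (possibly adaptive and randomized) algorithms.
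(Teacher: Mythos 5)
Your argument is sound and reaches the right bound, but it takes a genuinely different route from the paper's. The paper proves this theorem (and its higher-dimensional analogue) uniformly via Fano's method: it constructs four labelers $L_k$ with thresholds $\theta_k=\tfrac12+k\epsilon$, $k\in\{0,1,2,3\}$, keeps a $\beta$-dependent noise term in the label probabilities, and bounds each $d_{\mathrm{KL}}(P_k^n\,\|\,\bar P^n)$ against the \emph{mixture} $\bar P=\tfrac14\sum_j P_{L_j}$. The mixture is the paper's device for taming the same degeneracy you identify: even when one $P_{L_k}(y\mid x)$ vanishes, the averaged denominator $\bar P_L(y\mid x)$ is bounded below by a sibling $P_{L_j}(y\mid x)$, so the $\chi^2$-type bound in Lemma~\ref{lem:kl-l2} stays finite, and Lemma~\ref{lem:kl-eps} (which is exactly where $\alpha\le2$ is used, as in your calculation) gives the per-query $O(\epsilon^\alpha)$ rate. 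Your proposal instead uses a two-point Le Cam argument, simplifies the labeler to abstention-plus-fair-coin so that the transcript is informationally just a Bernoulli abstention process (the paper in fact notes in a footnote that Le Cam suffices in 1D), and handles the degeneracy by the near/far split with the censored auxiliary laws $\hat P_{\theta_i}$, routing near queries through a raw TV/coupling bound and far queries through Pinsker. Both constructions correctly exploit $\alpha\le 2$ in the far-query $\chi^2$ estimate and both reach $\Omega(\epsilon^{-\alpha})$. The trade-off is what the paper's footnote flags: your two-point censoring argument is shorter and more transparent in 1D, while the paper's four-point Fano/mixture construction is the one that lifts directly to the $d$-dimensional boundary-fragment lower bounds (Theorems~\ref{thm:hd-lb-noshape} and~\ref{thm:hd-lb-shape}), where a two-point argument would not yield the extra $\epsilon^{-(d-1)/\gamma}$ factor. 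One minor cosmetic note: your labeler satisfies Condition~\ref{cond:oracle-noshape}'s noise clause only with $C=0$; that is permitted by the condition's wording (``non-negative constants''), but it is worth stating explicitly, as you do, since a reader might expect $C>0$.
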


\subsection{Remarks}

Our results confirm the intuition that learning with abstention is
easier than learning with noisy labels. This is true because a noisy
label might mislead the learning algorithm, but an abstention response
never does. Our analysis shows, in particular, that if the labeler
never abstains, and outputs completely noisy labels with probability
bounded by $1-\left|x-\theta^{*}\right|^{\gamma}$ (i.e., $P(Y\neq\mathbb{I}\left[x>\theta^{*}\right]\mid x)\leq\frac{1}{2}\left(1-\left|x-\theta^{*}\right|^{\gamma}\right)$),
then the near optimal query complexity of $\tilde{O}\left(\epsilon^{-2\gamma}\right)$
is significantly larger than the near optimal $\tilde{O}\left(\epsilon^{-\gamma}\right)$
query complexity associated with a labeler who only abstains with
probability $P(Y=\perp\mid x)\leq1-\left|x-\theta^{*}\right|^{\gamma}$
and never flips a label. More precisely, while in both cases the labeler
outputs the same amount of corrupted labels, the query complexity
of the abstention-only case is significantly smaller than the noise-only
case. 

Note that the query complexity of Algorithm~\ref{alg:The-active-learning}
consists of two kinds of queries: queries which return 0/1 labels
and are used by function CheckSignificant, and queries which return
abstention and are used by function CheckSignificant-Var. Algorithm~\ref{alg:The-active-learning}
will stop querying when the responses of one of the two kinds of queries
are statistically significant. Under Condition~\ref{cond:oracle-noshape},
our proof actually shows that the optimal number of queries is dominated
by the number of queries used by CheckSignificant function. In other
words, a simplified variant of Algorithm~\ref{alg:The-active-learning}
which excludes use of abstention feedback is near optimal. Similarly,
under Condition~\ref{cond:oracle-shape}, the optimal query complexity
is dominated by the number of queries used by CheckSignificant-Var
function. Hence the variant of Algorithm~\ref{alg:The-active-learning}
which disregards 0/1 labels would be near optimal.

\section{The multidimensional case\label{sec:d-gt-1}}

We follow \cite{CN08} to generalize the results from one-dimensional
thresholds to the d-dimensional $(d>1)$ smooth boundary fragment
class $\Sigma(K,\gamma)$.

\subsection{Lower bounds\label{sec:Lower-bound}}
\begin{thm}
\label{thm:hd-lb-noshape}There are universal constants $\delta_{0}\in(0,1)$,
$c_{0}>0$, and a labeler $L$ satisfying Conditions~\ref{cond:oracle-correct}
and \ref{cond:oracle-noshape}, such that for any active learning
algorithm $\mathcal{A}$, there is a $g^{*}\in\Sigma(K,\gamma)$,
such that for small enough $\epsilon$,  $\Lambda(\epsilon,\delta_{0},\mathcal{A},L,g^{*})\geq\Omega\left(\frac{1}{f(c_{0}\epsilon)}\epsilon^{-2\beta-\frac{d-1}{\gamma}}\right)$.
\end{thm}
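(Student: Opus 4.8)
The plan is to lift the one-dimensional bound of Theorem~\ref{thm:1d-lb-noshape} to $d$ dimensions by the smooth boundary-fragment construction of~\cite{CN08} (in the spirit of~\cite{minsker2012plug}): tile the base $[0,1]^{d-1}$ into a grid of cells, plant an essentially independent copy of the hard one-dimensional instance in the ``column'' over each cell, and aggregate the per-column lower bounds by Assouad's method. Concretely, fix a small constant $c_1\in(0,1)$ (allowed to depend on $d$), set $m=\lfloor c_1\epsilon^{-1/\gamma}\rfloor$, and partition $[0,1]^{d-1}$ into $M=m^{d-1}$ congruent cubes $Q_1,\dots,Q_M$ of side $1/m$. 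Fix once and for all a nonzero $(K_0,\gamma)$-H\"older smooth bump $\phi:[0,1]^{d-1}\to[0,1]$ supported in $[0,1]^{d-1}$, equal to $1$ on the concentric cube of half the side length, with all derivatives up to order $\lfloor\gamma\rfloor$ vanishing on the boundary. For $\sigma\in\{-1,+1\}^M$, let $g_\sigma$ equal $\tfrac12$ plus, on each $Q_i$, the bump $\sigma_i\,\tfrac{h}{2}\,\phi(m(\tilde{\boldsymbol x}-z_i))$, where $z_i$ is the corner of $Q_i$ and $h=c_2\,m^{-\gamma}$ for a small constant $c_2$. Since a height-$h$ bump on a side-$1/m$ cube has $\lfloor\gamma\rfloor$-th derivative of order $hm^{\lfloor\gamma\rfloor}$ with H\"older-$(\gamma-\lfloor\gamma\rfloor)$ seminorm of order $hm^{\gamma}$, taking $c_2$ small makes every $g_\sigma$ lie in $\Sigma(K,\gamma)$; the bumps have disjoint supports and join smoothly with the flat background, so $g_\sigma\in[0,1]$. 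On the central sub-cube of $Q_i$ the boundary $g_\sigma$ is flat at one of the two heights $\tfrac12\pm\tfrac h2$, so recovering $\sigma_i$ is precisely a one-dimensional threshold problem at vertical scale $h=\Theta(c_1^{-\gamma}\epsilon)=\Theta(\epsilon)$.

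Next I would take the labeler $L$ to be the $d$-dimensional analogue of the hard one-dimensional labeler from (the proof of) Theorem~\ref{thm:1d-lb-noshape}: in the column over the central sub-cube of $Q_i$, its response depends only on $|x_d-g_\sigma(\tilde{\boldsymbol x})|$ and realizes the hard one-dimensional instance at scale $h$ (abstaining with probability close to $1$ and flipping with probability close to $\tfrac12$ within $\Theta(h)$ of the boundary, uninformative farther away, consistently with Conditions~\ref{cond:oracle-correct} and~\ref{cond:oracle-noshape}); off the central sub-cubes $L$ simply abstains, which is permitted because those conditions only upper-bound its abstention and flip probabilities there. The decisive structural point is that, since $g_\sigma$ restricted to $Q_i$ depends on $\sigma$ only through $\sigma_i$, the response to any query whose base coordinate lies over $Q_i$ depends on $\sigma$ only through $\sigma_i$; hence the columns are mutually information-theoretically independent. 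Consequently, by the one-dimensional lower bound (Theorem~\ref{thm:1d-lb-noshape}) applied inside cell $i$ — whose scale-$h=\Theta(\epsilon)$ geometry is what lets the argument of $f$ absorb a constant factor $c_0$ — there is a threshold $T=\Omega\!\left(f(c_0\epsilon)^{-1}\epsilon^{-2\beta}\right)$ with the following property: if the (random, adaptively chosen) number $N_i$ of queries placed over $Q_i$ satisfies $\mathbb E N_i\le c\,T$ for a suitable absolute constant $c$, then every reconstruction of $\sigma_i$ errs with probability at least $\tfrac14$, conditionally on $\sigma_{-i}$ and hence unconditionally, by the independence of the columns together with a stopping-time (Wald-type) control of the information accumulated in column $i$.

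Finally I would aggregate. Suppose an algorithm $\mathcal A$ that is $\epsilon$-accurate with probability at least $1-\delta_0$ makes in total fewer than $\tfrac{cMT}{2}$ queries; then $\sum_i\mathbb E N_i<\tfrac{cMT}{2}$, so more than $M/2$ cells satisfy $\mathbb E N_i\le cT$, whence for a uniformly random $\sigma$ the expected number of mis-reconstructed cells is at least $M/8$, and therefore at least $M/16$ cells are mis-reconstructed with probability bounded below by a universal constant $\rho_0$. Each mis-reconstructed cell adds $\Theta(h\,m^{-(d-1)})$ to $\|\hat g-g_\sigma\|$, so on that event $\|\hat g-g_\sigma\|\ge\Theta(M h\,m^{-(d-1)})=\Theta(h)=\Theta(c_1^{-\gamma}\epsilon)$, which exceeds $\epsilon$ once $c_1$ is small enough. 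Choosing the universal constant $\delta_0<\rho_0$ and observing that a failure probability exceeding $\delta_0$ averaged over $\sigma$ forces failure for some fixed $\sigma$, we contradict the accuracy guarantee; hence $\Lambda(\epsilon,\delta_0,\mathcal A,L,g^*)\ge\tfrac{cMT}{2}=\Omega\!\left(f(c_0\epsilon)^{-1}\epsilon^{-2\beta-\frac{d-1}{\gamma}}\right)$ for some $g^*=g_\sigma\in\Sigma(K,\gamma)$. I expect the main obstacles to be (i) making the per-column embedding simultaneously stay inside $\Sigma(K,\gamma)$ and faithfully realize a hard one-dimensional instance of the correct parameters, so that Theorem~\ref{thm:1d-lb-noshape} genuinely applies cell by cell, and (ii) turning ``few total queries'' into a per-cell failure probability valid for an \emph{adaptive} query budget, which is exactly where the information-theoretic independence of the columns and the Wald-type bookkeeping are needed; by comparison the H\"older-smoothness verification for $g_\sigma$ and the Assouad counting are routine.
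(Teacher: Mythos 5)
Your approach is genuinely different from the paper's: you propose Assouad's method with a per-column appeal to the one-dimensional lower bound plus a Wald/stopping-time argument, whereas the paper constructs a Varshamov--Gilbert packing of $\{0,1\}^{m^{d-1}}$ (Lemma~\ref{lem:packing}) and applies Fano's inequality directly, bounding $d_{\text{KL}}(P_i^n\,\|\,\bar P^n)$ uniformly via a per-query KL bound that automatically accommodates adaptive querying. Both are standard lower-bound machines; the Fano route buys you a one-shot global bound with no need to decouple the adaptive allocation across columns, while your Assouad route, if completed, would reuse the one-dimensional result and make the ``independence of columns'' structure explicit. The packing construction ($g_\omega=\sum_l\omega_l\phi_l$ on a grid of cells, with $A=\Theta(\epsilon)$) is essentially the same in both.

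There is, however, one point where your sketch as written would not produce the right bound, and it is the crux of the labeler design. You say the labeler's ``response depends only on $|x_d-g_\sigma(\tilde{\boldsymbol x})|$.'' If that is taken literally for the abstention probability, then the abstention \emph{plateau} moves with $\sigma_i$, since $g_\sigma$ jumps by $h=\Theta(\epsilon)$ depending on $\sigma_i$. Querying once at $x_d=\tfrac12-\tfrac h2$ then sees abstention rate $1$ under $\sigma_i=-1$ but $1-f(h)$ under $\sigma_i=+1$; a single non-abstention reveals $\sigma_i$, so $O(1/f(\epsilon))$ queries per column would suffice and the $\epsilon^{-2\beta}$ factor would vanish. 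The paper's construction deliberately avoids this: for $x_d\le A$ it sets $P_{L_\omega}(\perp\mid\boldsymbol x)=1-f(A)$ (a constant, independent of $\omega$), and only the $0/1$ flip probability depends on $|x_d-g_\omega(\tilde{\boldsymbol x})|$; for $x_d\ge A$ nothing depends on $\omega$. This makes the abstention channel carry \emph{zero} information about $\sigma$, which is necessary to force the learner onto the noisy labels. Your construction is fixable (make the abstention rate flat at $1-f(A)$ over the entire ambiguity band $[0,A]$ with $A=c_0\epsilon$, not over a $\sigma$-centered band), but as phrased it risks exactly the leakage the construction must prevent.

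Beyond that, two smaller caveats: (i) Theorem~\ref{thm:1d-lb-noshape} as stated asserts the existence of a hard labeler and threshold; invoking it ``inside cell $i$, conditionally on $\sigma_{-i}$'' is not a citation but a re-derivation, and the Wald-type control of adaptively allocated per-column queries is the real work there (Castro--Nowak style arguments do this, but it needs to be carried out, not gestured at). (ii) Your Assouad bookkeeping uses $\|\hat g-g_\sigma\|\ge\Theta(Mh\,m^{-(d-1)})=\Theta(h)$, which matches the paper's separation $\|g_{\omega^{(i)}}-g_{\omega^{(j)}}\|=\Theta(\epsilon)$; that part is fine. With the labeler fixed as above and the per-column Le Cam/Wald step filled in, your route would work, but the paper's Fano argument is shorter because the adaptive allocation is handled implicitly by bounding the per-query KL uniformly in $\boldsymbol x$.
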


\begin{thm}
\label{thm:hd-lb-shape}There is a universal constant $\delta_{0}\in(0,1)$
and a labeler $L$ satisfying Conditions~\ref{cond:oracle-correct},
\ref{cond:oracle-noshape}, and Condition~\ref{cond:oracle-shape}
with $f(x)=C'x^{\alpha}$ ($C'>0$ and $0<\alpha\leq2$ are constants),
such that for any active learning algorithm $\mathcal{A}$, there
is a $g^{*}\in\Sigma(K,\gamma)$, such that for small enough $\epsilon$,
$\Lambda(\epsilon,\delta_{0},\mathcal{A},L,g^{*})\geq\Omega\left(\epsilon^{-\alpha-\frac{d-1}{\gamma}}\right)$.
\end{thm}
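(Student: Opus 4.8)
The plan is to reduce to the one-dimensional lower bound of Theorem~\ref{thm:1d-lb-shape} by the standard non-parametric ``tensorization'' of hard instances along the boundary, following the reduction of~\cite{CN08} (see also~\cite{minsker2012plug}). Fix a smooth template $\phi:[0,1]^{d-1}\to[0,1]$ with $\phi$ and all its partial derivatives up to order $\left\lfloor\gamma\right\rfloor$ vanishing on the boundary of the cube and $\phi\equiv1$ on $[1/4,3/4]^{d-1}$; such a $\phi$ is fixed once and for all, and its Hölder constant depends only on $K,\gamma,d$. Choose a side length $r=c_2\epsilon^{1/\gamma}$ and partition $[0,1]^{d-1}$ into $M=r^{-(d-1)}=\Theta(\epsilon^{-(d-1)/\gamma})$ congruent subcubes $Q_1,\dots,Q_M$ with corners $c_1,\dots,c_M$. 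For $\sigma\in\{0,1\}^{M}$ and a height $h=c_3\epsilon$ with $c_3\le K'c_2^{\gamma}$ (so that $h\le K'r^{\gamma}$), set
\[
g_\sigma(\boldsymbol{\tilde{x}})=\tfrac{1}{2}+\sum_{j=1}^{M}\sigma_j\,h\,\phi\!\left(\tfrac{\boldsymbol{\tilde{x}}-c_j}{r}\right)\in\Sigma(K,\gamma),
\]
the membership in $\Sigma(K,\gamma)$ and the range $[0,1]$ holding for $\epsilon$ small by the calibration $h\le K'r^{\gamma}$ and the disjointness of the bumps' supports. The labeler is the coordinatewise extension of the hard one-dimensional labeler: $P(\perp\mid\boldsymbol{x})=1-C'\left|x_{d}-g_\sigma(\boldsymbol{\tilde{x}})\right|^{\alpha}$, and non-abstaining answers are the correct labels $\mathds{1}\left[x_{d}>g_\sigma(\boldsymbol{\tilde{x}})\right]$ (noise rate $0$, which is consistent with Condition~\ref{cond:oracle-noshape}). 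One checks directly that this $L$ satisfies Conditions~\ref{cond:oracle-correct}, \ref{cond:oracle-noshape}, and~\ref{cond:oracle-shape} with $f(x)=C'x^{\alpha}$, uniformly in $\sigma$.

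The key structural observation is \emph{locality}: since the bumps have pairwise disjoint supports, for any query $\boldsymbol{x}=(\boldsymbol{\tilde{x}},x_{d})$ with $\boldsymbol{\tilde{x}}\in Q_j$ both $g_\sigma(\boldsymbol{\tilde{x}})$ and the response law $P(Y\mid\boldsymbol{x})$ depend on $\sigma$ only through the single bit $\sigma_j$, while a query with $\boldsymbol{\tilde{x}}$ outside all the bumps' supports carries no information about $\sigma$ at all. Moreover, on the ``core'' $c_j+r[1/4,3/4]^{d-1}$ we have $g_\sigma\equiv\tfrac{1}{2}+\sigma_j h$, so deciding $\sigma_j$ from queries inside $Q_j$ is \emph{at least as hard as} a one-dimensional instance in which the threshold is either $\tfrac{1}{2}$ or $\tfrac{1}{2}+h$ and the abstention profile is $1-C'|x_{d}-\theta|^{\alpha}$. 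Hence, by the local, average-case content of the argument behind Theorem~\ref{thm:1d-lb-shape}, there is a universal $\delta_0\in(0,1)$ and a constant $c_1>0$ such that, under the uniform prior on $\sigma_j$, any strategy issuing fewer than $c_1h^{-\alpha}=\Theta(\epsilon^{-\alpha})$ queries with $\boldsymbol{\tilde{x}}\in Q_j$ outputs the wrong value of $\sigma_j$ with probability at least $\delta_0$.

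To conclude, put the uniform prior on $\sigma\in\{0,1\}^{M}$ and run an Assouad-type argument over the $M$ coordinates. Suppose an algorithm $\mathcal{A}$ makes in total at most $\tfrac{c_1}{2}Mh^{-\alpha}$ queries. By a pigeonhole/Markov bound, at least $M/2$ cells receive fewer than $c_1h^{-\alpha}$ queries; by the previous paragraph the output boundary $\hat g$ has the ``wrong sign'' on each such cell with probability $\ge\delta_0$, and each wrongly-signed cell contributes at least $\kappa\,h\,r^{d-1}$ to $\left\Vert\hat g-g_\sigma\right\Vert$, where $\kappa=\int_{[1/4,3/4]^{d-1}}\phi>0$. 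A reverse-Markov estimate (using that the number $W$ of wrongly-signed cells among the underqueried ones satisfies $\mathbb{E}W\ge\tfrac{\delta_0}{2}M$ and $W\le\tfrac{M}{2}$) then shows that with probability bounded below by a universal constant we have $W\ge\tfrac{\delta_0}{4}M$, so that (using $Mr^{d-1}=1$) $\left\Vert\hat g-g_\sigma\right\Vert\ge\tfrac{\delta_0\kappa}{4}h=\tfrac{\delta_0\kappa c_3}{4}\epsilon>\epsilon$ once $c_2$, and hence the admissible range of $c_3$, is taken large enough. Averaging over $\sigma$ and picking the worst $g^*=g_\sigma$ contradicts $(\epsilon,\delta_0)$-correctness; therefore $\Lambda(\epsilon,\delta_0,\mathcal{A},L,g^*)\ge\Omega(Mh^{-\alpha})=\Omega(\epsilon^{-\alpha-\frac{d-1}{\gamma}})$.

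The main obstacle is making the locality/Assouad step fully rigorous for an \emph{adaptive} querying strategy: one must argue that the algorithm's posterior over $\sigma$ remains a product across cells throughout the interaction, so that the per-cell failure events genuinely behave like $M$ independent copies of the one-dimensional problem, and that adaptivity cannot let $\mathcal{A}$ evade the budget split — it can reallocate queries, but only over an $\epsilon$-independent number of cells, which is exactly why the ``$M/2$ underqueried cells'' conclusion survives. A secondary but necessary check is the global $(K,\gamma)$-Hölder smoothness and $[0,1]$-range of $g_\sigma$, handled by the vanishing-derivative template $\phi$ and the calibration $h\le K'r^{\gamma}$, together with the verification that the coordinatewise labeler meets all three Conditions with the prescribed polynomial $f$; the remaining bookkeeping of constants is as in Theorem~\ref{thm:1d-lb-shape} and in the non-parametric reduction of~\cite{CN08}.
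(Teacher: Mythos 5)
Your proposal takes a genuinely different route from the paper's. The paper proves Theorem~\ref{thm:hd-lb-shape} by constructing smooth (non-plateau) bumps $\phi_l$, keeping a strictly positive Tsybakov noise term $\frac{1}{2}(1-|x_d-g_\omega(\boldsymbol{\tilde{x}})|^{\beta})$ (never zero), selecting a well-separated and well-\emph{balanced} sub-hypercube via Lemma~\ref{lem:packing}, and applying Fano's inequality (Lemma~\ref{lem:Fano}) to the entire transcript, bounding $d_{\text{KL}}(P_i^n\parallel\bar P^n)$ through Lemmas~\ref{lem:kl-l2} and~\ref{lem:kl-eps}. You instead use plateau bumps, set the noise to zero, and run a cell-by-cell Assouad/pigeonhole argument. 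Conceptually Assouad and Fano are interchangeable for this sort of nonparametric lower bound, and your version avoids the balanced-packing lemma; but there are two concrete gaps.

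First, the noise-zero choice is incompatible with the divergence tool you invoke. You write that the per-cell hardness follows ``by the local, average-case content of the argument behind Theorem~\ref{thm:1d-lb-shape},'' but Theorem~\ref{thm:1d-lb-shape}'s proof controls KL divergence via Lemma~\ref{lem:kl-l2}, and this requires the response distributions to have common support. In your construction, if $\boldsymbol{\tilde{x}}$ is in the plateau of $Q_j$ and $x_d\in\left(\tfrac12,\tfrac12+h\right)$, then the $\sigma_j=0$ labeler assigns positive mass to $Y=1$ and zero to $Y=0$ while the $\sigma_j=1$ labeler does the opposite, so both $d_{\text{KL}}$ and $\chi^2$ are $+\infty$ and the paper's machinery cannot be imported. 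You would need to either reinstate the paper's $\beta$-noise (so every symbol has strictly positive probability under every hypothesis, at which point $d_{\text{KL}}=O(\epsilon^\alpha)$ per query follows as in Lemma~\ref{lem:kl-eps}), or rebuild the per-cell bound with a divergence that stays finite and still tensorizes, e.g.\ squared Hellinger — which does give $H^2(P_0(\cdot\mid\boldsymbol{x}),P_1(\cdot\mid\boldsymbol{x}))=O(h^\alpha)$ uniformly over $\boldsymbol{x}$ under the assumption $\alpha\le 2$, but that is a different computation from what the paper does and you do not carry it out. Total variation alone will not do, since $d_{\text{TV}}$ per query is only $O(h^{\min(\alpha,1)})$ (take $x_d$ well away from the boundary when $\alpha>1$) and the additive TV chaining $d_{\text{TV}}(P^n,Q^n)\le n\,d_{\text{TV}}(P,Q)$ would then give the wrong exponent.

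Second, you flag but do not close the adaptivity issue in the pigeonhole/Assouad step. The per-cell claim needs to be stated as a bound on the expected number of queries to cell $j$, combined with a divergence chain rule for adaptive strategies (the analogue of the display~(\ref{eq:kl-decompose}) restricted to cell $j$), not as a conditional statement on the realized count $N_j$; your sketch conflates these, and the remark that reallocation is ``only over an $\epsilon$-independent number of cells'' is not correct, since $M=\Theta(\epsilon^{-(d-1)/\gamma})$ does depend on $\epsilon$. The standard fix (bound $H^2$ or KL of the full transcript between neighboring $\sigma$'s by $\mathbb{E}_\sigma[N_j]\cdot O(\epsilon^\alpha)$, then pigeonhole on $\sum_j \mathbb{E}[N_j]\le n$) works, but you should spell it out rather than appeal to the posterior factoring. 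The paper sidesteps this entirely because Fano is applied to the global transcript, so adaptivity is handled once by the chain-rule decomposition in~(\ref{eq:kl-decompose}).

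With those two repairs — either reverting to the paper's nonzero-noise labeler so that KL is finite, or replacing the 1-D KL bound with a Hellinger bound, and writing the adaptive Assouad step in terms of expected per-cell query counts and a divergence chain rule — your plan does yield the stated $\Omega(\epsilon^{-\alpha-\frac{d-1}{\gamma}})$ bound.
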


\subsection{Algorithm and Analysis}

Recall the decision boundary of the smooth boundary fragment class
can be seen as the epigraph of a smooth function $[0,1]^{d-1}\rightarrow[0,1]$.
For $d>1$, we can reduce the problem to the one-dimensional problem
by discretizing the first $d-1$ dimensions of the instance space
and then perform a polynomial interpolation. The algorithm is shown
as Algorithm~\ref{alg:The-active-learning-hd}. For the sake of simplicity,
we assume $\gamma$, $M/\gamma$ in Algorithm~\ref{alg:The-active-learning-hd}
are integers.

\begin{algorithm}
\begin{algorithmic}[1]
\State{Input: $\delta$, $\epsilon$, $\gamma$}
\State{$M \gets \Theta\left(\epsilon^{-1/\gamma}\right)$. $\mathcal{L} \gets \left\{ \frac{0}{M}, \frac{1}{M}, \dots, \frac{M-1}{M} \right\}^{d-1} $}
\State{For each $l \in \mathcal{L}$, apply Algorithm~\ref{alg:The-active-learning}  with parameter ($\epsilon$, $\delta/M^{d-1}$) to learn a threshold $g_{l}$ that approximates $g^*(l)$} \label{line:hd:1d}
\State{Partition the instance space into cells $\left\{I_q\right\}$ indexed by $q \in \left\{ 0, 1, \dots, \frac{M}{\gamma}-1\right\}^{d-1}$, where 
	$$ I_q = \left[\frac{q_1\gamma}{M}, \frac{(q_1+1)\gamma}{M}\right] \times \cdots \times \left[\frac{q_{d-1}\gamma}{M}, \frac{(q_{d-1}+1)\gamma}{M}\right]$$}
\State{For each cell $I_q$, perform a polynomial interpolation:
	$ g_q(\tilde{\boldsymbol{x}}) = \sum_{l\in I_q \cap \mathcal{L}}g_{l}Q_{q,l}(\tilde{\boldsymbol{x}})$, where 
    $$ Q_{q,l}(\tilde{\boldsymbol{x}}) = \prod_{i=1}^{d-1} \prod_{j=0, j\neq Ml_i-\gamma q_i}^{\gamma} \frac{\tilde{\boldsymbol{x}}_{i}-(\gamma q_{i}+j)/M}{l_{i}-(\gamma q_{i}+j)/M}$$}
\State{Output: $g(\tilde{\boldsymbol{x}}) = \sum_{q \in \left\{ 0, 1, \dots, \frac{M}{\gamma}-1\right\}^{d-1}} g_q(\tilde{\boldsymbol{x}}) \mathds{1}\left[\tilde{\boldsymbol{x}} \in q\right]$}
\end{algorithmic}

\caption{\label{alg:The-active-learning-hd}The active learning algorithm for
the smooth boundary fragment class}
\end{algorithm}

We have similar consistency guarantee and upper bounds as in the one-dimensional
case.
\begin{thm}
\label{thm:algo-correctness-hd}Let $g^{*}$ be the ground truth.
If the labeler $L$ satisfies Condition~\ref{cond:oracle-correct}
and Algorithm~\ref{alg:The-active-learning-hd} stops to output $g$,
then $\left\Vert g^{*}-g\right\Vert \leq\epsilon$ with probability
at least $1-\frac{\delta}{2}$.
\end{thm}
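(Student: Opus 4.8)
The plan is to decouple the statement into a probabilistic part, handled entirely by the one-dimensional consistency guarantee, and a deterministic part about polynomial interpolation. Algorithm~\ref{alg:The-active-learning-hd} calls Algorithm~\ref{alg:The-active-learning} once at each of the $|\mathcal{L}|=M^{d-1}$ grid points $l$, always with confidence parameter $\delta/M^{d-1}$. By Theorem~\ref{thm:algo-correctness}, each such call, \emph{when it returns}, outputs $g_l$ with $|g_l-g^*(l)|\le\epsilon_0$ with probability at least $1-\frac{\delta}{2M^{d-1}}$, where $\epsilon_0$ is the precision fed to the subroutine; note this uses only Condition~\ref{cond:oracle-correct}, which is why it is the sole hypothesis. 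A union bound over the $M^{d-1}$ calls then shows that with probability at least $1-\frac{\delta}{2}$ the event $\mathcal{E}=\{\,|g_l-g^*(l)|\le\epsilon_0\text{ for all }l\in\mathcal{L}\,\}$ holds. It remains to prove, purely deterministically on $\mathcal{E}$, that $\|g-g^*\|=\int_{[0,1]^{d-1}}|g(\tilde{\boldsymbol{x}})-g^*(\tilde{\boldsymbol{x}})|\,d\tilde{\boldsymbol{x}}\le\epsilon$; since $[0,1]^{d-1}$ has unit volume, it suffices to bound $|g(\tilde{\boldsymbol{x}})-g^*(\tilde{\boldsymbol{x}})|$ by $\epsilon$ pointwise.

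For the deterministic bound I would fix a cell $I_q$ and a point $\tilde{\boldsymbol{x}}\in I_q$, so that $g(\tilde{\boldsymbol{x}})=g_q(\tilde{\boldsymbol{x}})=\sum_{l\in I_q\cap\mathcal{L}}g_l\,Q_{q,l}(\tilde{\boldsymbol{x}})$ is a tensor-product Lagrange interpolant of degree $\gamma$ in each coordinate through the $(\gamma+1)^{d-1}$ nodes of $I_q\cap\mathcal{L}$. Introducing the interpolant of the \emph{true} values $\hat g_q(\tilde{\boldsymbol{x}})=\sum_{l\in I_q\cap\mathcal{L}}g^*(l)\,Q_{q,l}(\tilde{\boldsymbol{x}})$, I would split
\[
|g(\tilde{\boldsymbol{x}})-g^*(\tilde{\boldsymbol{x}})|\ \le\ |g_q(\tilde{\boldsymbol{x}})-\hat g_q(\tilde{\boldsymbol{x}})|\ +\ |\hat g_q(\tilde{\boldsymbol{x}})-g^*(\tilde{\boldsymbol{x}})|.
\]
The first term is a noise-propagation term: on $\mathcal{E}$ it is at most $\bigl(\max_{l}|g_l-g^*(l)|\bigr)\cdot\sup_{\tilde{\boldsymbol{x}}\in I_q}\sum_{l\in I_q\cap\mathcal{L}}|Q_{q,l}(\tilde{\boldsymbol{x}})|\le\epsilon_0\,\Lambda_{d-1,\gamma}$, where $\Lambda_{d-1,\gamma}$ is the Lebesgue constant of degree-$\gamma$ equispaced tensor-product interpolation in $d-1$ variables, depending only on $d$ and $\gamma$. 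The second term is the classical interpolation error: $g^*$ restricted to $I_q$ is $(K,\gamma)$-Hölder smooth and $I_q$ has side length $\gamma/M$, so telescoping the one-dimensional Lagrange remainder over the $d-1$ coordinates and invoking the Hölder bound yields $|\hat g_q(\tilde{\boldsymbol{x}})-g^*(\tilde{\boldsymbol{x}})|\le c(d,\gamma,K)\,(\gamma/M)^{\gamma}$. Choosing the constant hidden in $M=\Theta(\epsilon^{-1/\gamma})$ so that this is at most $\epsilon/2$, and the one-dimensional precision $\epsilon_0$ (a constant multiple of $\epsilon$) so that $\epsilon_0\Lambda_{d-1,\gamma}\le\epsilon/2$, gives $|g(\tilde{\boldsymbol{x}})-g^*(\tilde{\boldsymbol{x}})|\le\epsilon$ for every $\tilde{\boldsymbol{x}}$, hence $\|g-g^*\|\le\epsilon$.

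I expect the probabilistic step to be routine and the interpolation analysis to be the main obstacle. Concretely, one must (a) control $\Lambda_{d-1,\gamma}$ and the interpolation-bias constant $c(d,\gamma,K)$ with explicit dependence on $d$ and $\gamma$ --- equispaced-node Lebesgue constants grow with the degree, and the tensorization contributes a $(d-1)$-fold product, so some care is needed to keep these from blowing up --- and (b) carry out the bookkeeping linking $\epsilon_0$ and the constant in $M$ to these quantities so that the final bound is exactly $\epsilon$ rather than $C(d,\gamma,K)\epsilon$. Two minor points also deserve a remark: the interpolant $g$ need not take values in $[0,1]$, which is harmless since we only bound $|g-g^*|$; and the cells $\{I_q\}$ overlap only on their faces, a measure-zero set that does not affect the $L^1$ norm, so the indicator decomposition in the output of Algorithm~\ref{alg:The-active-learning-hd} is unambiguous up to a null set.
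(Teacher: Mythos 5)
Your proposal is correct and follows essentially the same plan as the paper's proof: a union bound over the $M^{d-1}$ invocations of the one-dimensional algorithm to obtain the event $\mathcal{E}$, then a deterministic triangle-inequality split of $g-g^*$ into the interpolation bias $\hat g_q-g^*$ (bounded via Hölder smoothness; the paper cites Lemma~3 of \cite{CN08} for this) plus the noise-propagation term $g_q-\hat g_q$ (bounded via the size of the Lagrange basis polynomials), with $M$ and the subroutine precision chosen to make both pieces $O(\epsilon)$. The only cosmetic difference is that you bound $|g-g^*|$ pointwise and then integrate over the unit-volume domain, whereas the paper estimates the $L^1$ norm of each piece directly over each cell $I_q$ and sums; and you make explicit the Lebesgue constant and the exactness $g^*_q(l)=g^*(l)$ at the nodes, which the paper leaves implicit. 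Both formulations yield the same bound.
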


\begin{thm}
\label{thm:algo-label-noshape--hd}Let $g^{*}$ be the ground truth,
and $g$ be the output of Algorithm~\ref{alg:The-active-learning-hd}.
Under Conditions~\ref{cond:oracle-correct} and \ref{cond:oracle-noshape},
with probability at least $1-\delta$, Algorithm~\ref{alg:The-active-learning-hd}
makes at most $\tilde{O}\left(\frac{d}{f\left(\epsilon/2\right)}\epsilon^{-2\beta-\frac{d-1}{\gamma}}\right)$
queries.
\end{thm}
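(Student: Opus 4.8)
The plan is to reduce the whole query count of Algorithm~\ref{alg:The-active-learning-hd} to the one-dimensional bound of Theorem~\ref{thm:algo-label-noshape}. The crucial observation is that the labeler is queried \emph{only} in Line~\ref{line:hd:1d}, which runs $|\mathcal{L}| = M^{d-1}$ independent instances of Algorithm~\ref{alg:The-active-learning}, one for each grid point $l \in \mathcal{L} = \{0/M,\dots,(M-1)/M\}^{d-1}$, each with precision parameter $\epsilon$ and confidence parameter $\delta/M^{d-1}$; the cell partition and the polynomial interpolations in the remaining lines make no further queries. Hence the total query count equals the sum of the query counts of these $M^{d-1}$ one-dimensional runs, and it remains only to bound that sum.

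First I would verify that, for each fixed $l \in \mathcal{L}$, the restriction of the labeler $L$ to the vertical line $\{\boldsymbol{x} : \tilde{\boldsymbol{x}} = l\}$ is a valid one-dimensional abstaining labeler with ground-truth threshold $\theta^* = g^*(l) \in [0,1]$, and that it inherits Conditions~\ref{cond:oracle-correct} and~\ref{cond:oracle-noshape} with the \emph{same} $f$, $C$, and $\beta$. This is immediate: abstention monotonicity in $|x_d - g^*(\tilde{\boldsymbol{x}})|$, the $\tfrac12$ noise bound, $P(\perp\mid\boldsymbol{x}) \le 1 - f(|x_d - g^*(\tilde{\boldsymbol{x}})|)$, and the Tsybakov-type noise bound are each just the $d$-dimensional condition specialized to $\tilde{\boldsymbol{x}} = l$. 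Consequently Theorem~\ref{thm:algo-label-noshape} applies verbatim to the $l$-th run.

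Then I would invoke Theorem~\ref{thm:algo-label-noshape} with confidence $\delta/M^{d-1}$: for each $l$, with probability at least $1 - \delta/M^{d-1}$ the $l$-th run makes at most $\tilde{O}\!\left(\tfrac{1}{f(\epsilon/2)}\epsilon^{-2\beta}\right)$ queries, where --- since $M = \Theta(\epsilon^{-1/\gamma})$ --- the logarithmic factors hidden in $\tilde O$ now also absorb $\ln\frac{M^{d-1}}{\delta} = O\!\left(\tfrac{d-1}{\gamma}\ln\tfrac{1}{\epsilon} + \ln\tfrac{1}{\delta}\right)$. A union bound over the $M^{d-1}$ runs then gives, with probability at least $1-\delta$, that all of them simultaneously obey their query bounds; on that event the total number of queries is at most $M^{d-1} = \epsilon^{-(d-1)/\gamma}$ times the per-run bound, which is $\tilde{O}\!\left(\tfrac{d}{f(\epsilon/2)}\epsilon^{-2\beta - (d-1)/\gamma}\right)$, the explicit factor $d$ coming precisely from the $\tfrac{d-1}{\gamma}\ln\tfrac{1}{\epsilon}$ term that the confidence split $\delta \mapsto \delta/M^{d-1}$ introduces inside each run's bound.

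The proof is mostly bookkeeping once this reduction is set up; the one step needing genuine care is tracking how the confidence rescaling $\delta \mapsto \delta/M^{d-1}$ propagates through the $\tilde O$ of Theorem~\ref{thm:algo-label-noshape} --- equivalently through the thresholds $p(n,\cdot)$ and $q(n,\cdot,\cdot)$ of Procedure~\ref{alg:test} --- so as to confirm that it costs only an extra factor polylogarithmic in $\tfrac{1}{\epsilon},\tfrac{1}{\delta}$ and linear in $d$, and nothing worse. Note the accuracy guarantee $\|g - g^*\| \le \epsilon$ is not part of this theorem (that is Theorem~\ref{thm:algo-correctness-hd}), so here one needs only that each one-dimensional run terminates and returns a threshold, which holds on the same probability-$(1-\delta)$ event used above.
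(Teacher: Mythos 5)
Your proof is correct and is essentially the same argument the paper gives, just spelled out in more detail: invoke Theorem~\ref{thm:algo-label-noshape} for each of the $M^{d-1}$ one-dimensional runs with confidence parameter $\delta/M^{d-1}$, observe that the rescaling injects a factor $\ln\frac{M^{d-1}}{\delta}=O\bigl(\frac{d-1}{\gamma}\ln\frac{1}{\epsilon}+\ln\frac{1}{\delta}\bigr)$ (hence the explicit $d$), take a union bound, and multiply by $M^{d-1}=\Theta(\epsilon^{-(d-1)/\gamma})$. Your extra checks (that the restricted labeler on each vertical line satisfies the one-dimensional conditions, and that only the accuracy guarantee --- not this query bound --- is Theorem~\ref{thm:algo-correctness-hd}'s concern) are valid and make explicit what the paper leaves implicit.
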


\begin{thm}
\label{thm:algo-label-shape-hd}Let $g^{*}$ be the ground truth,
and $g$ be the output of Algorithm~\ref{alg:The-active-learning-hd}.
Under Conditions~\ref{cond:oracle-correct} and \ref{cond:oracle-shape},
with probability at least $1-\delta$, Algorithm~\ref{alg:The-active-learning-hd}
makes at most $\tilde{O}\left(\frac{d}{f\left(\epsilon/2\right)}\epsilon^{-\frac{d-1}{\gamma}}\right)$
queries.
\end{thm}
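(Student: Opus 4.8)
The plan is to reduce the query-complexity bound for Algorithm~\ref{alg:The-active-learning-hd} to the one-dimensional bound of Theorem~\ref{thm:algo-label-shape}. Observe that Algorithm~\ref{alg:The-active-learning-hd} issues queries only in Step~\ref{line:hd:1d}: for each of the $|\mathcal{L}| = M^{d-1} = \Theta\!\left(\epsilon^{-(d-1)/\gamma}\right)$ grid points $l$, it runs one independent copy of Algorithm~\ref{alg:The-active-learning} with precision $\epsilon$ and confidence $\delta/M^{d-1}$, while the cell partition and the polynomial interpolation make no queries at all. So it suffices to bound the number of queries used by each of these $M^{d-1}$ one-dimensional runs and sum.

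Fix $l\in\mathcal{L}$ and consider the corresponding run of Algorithm~\ref{alg:The-active-learning}, which only queries points of the "column" $\{(l,x_d):x_d\in[0,1]\}$. I would first note that the induced one-dimensional labeler $x_d\mapsto P_L(Y\mid(l,x_d))$ satisfies Conditions~\ref{cond:oracle-correct} and \ref{cond:oracle-shape} with ground truth $\theta^\ast=g^\ast(l)\in[0,1]$ and the \emph{same} modulus $f$ (and constant $c$), since both conditions only constrain how the response distribution depends on the last coordinate for a fixed $\boldsymbol{\tilde{x}}$. Hence Theorem~\ref{thm:algo-label-shape} applies to this run and, with probability at least $1-\delta/M^{d-1}$, it terminates after at most $\tilde{O}\!\left(1/f(\epsilon/2)\right)$ queries, where the hidden polylogarithmic factors are now in $1/\epsilon$ and $M^{d-1}/\delta$. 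To convert this into the stated bound I would account for the inflated confidence: since $M=\Theta\!\left(\epsilon^{-1/\gamma}\right)$ with $\gamma$ constant, $\ln(M^{d-1}/\delta)=(d-1)\ln M+\ln\frac1\delta = O\!\left(d\ln\tfrac1\epsilon+\ln\tfrac1\delta\right)$, and because the one-dimensional query bound scales (up to $[\ln\ln]_+$ corrections) only linearly in $\ln$ of the inverse confidence, instantiating it at $\delta/M^{d-1}$ inflates the per-run count by at most a factor $O(d)$ on top of the factors absorbed by $\tilde O$; that is, each run makes at most $\tilde{O}\!\left(d/f(\epsilon/2)\right)$ queries with probability at least $1-\delta/M^{d-1}$. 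A union bound over the $M^{d-1}$ runs then gives that, with probability at least $1-\delta$, every run simultaneously obeys this bound, so the total number of queries is at most $M^{d-1}\cdot\tilde{O}\!\left(d/f(\epsilon/2)\right)=\tilde{O}\!\left(\frac{d}{f(\epsilon/2)}\,\epsilon^{-(d-1)/\gamma}\right)$, as claimed.

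The step I expect to be the main obstacle is making precise that substituting the confidence $\delta/M^{d-1}$ into Theorem~\ref{thm:algo-label-shape} costs only a $\mathrm{poly}(d)$ — indeed linear — factor rather than something growing faster in $d$. This requires revisiting the proof of Theorem~\ref{thm:algo-label-shape} and checking that the stopping times of the sequential tests CheckSignificant-Var (and CheckSignificant), summed over the $\log\frac{1}{2\epsilon}$ outer iterations, grow at most linearly in $\ln\frac1\delta$, with the $[\ln\ln]_+$ terms contributing only lower-order corrections even when their arguments are polynomial in $d$. Everything else — the reduction to columns, the inheritance of Conditions~\ref{cond:oracle-correct} and \ref{cond:oracle-shape} by each column labeler, and the final union bound — is routine bookkeeping.
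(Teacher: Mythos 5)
Your proposal is correct and follows essentially the same route as the paper: reduce to $M^{d-1}$ independent runs of the one-dimensional Algorithm~\ref{alg:The-active-learning} at confidence $\delta/M^{d-1}$, invoke Theorem~\ref{thm:algo-label-shape} for each, and union-bound; the paper's own proof is simply ``The proof is similar to the previous proof,'' which in turn is exactly this union-bound argument for Theorem~\ref{thm:algo-label-noshape--hd}. You are in fact more explicit than the paper about the one nontrivial bookkeeping point, namely that the $d$ in the claimed bound comes from substituting $\delta/M^{d-1}$ into the one-dimensional query bound and that this costs only a factor $O(d)$ because the bound in the proof of Theorem~\ref{thm:algo-label-shape} (via $N_k = \eta \frac{1}{f(l_k/4)}\ln\frac{4\log(1/2\epsilon)}{\delta}$) is linear in $\ln(1/\delta)$; your check that the induced column labeler $x_d \mapsto P_L(Y\mid(l,x_d))$ inherits Conditions~\ref{cond:oracle-correct} and \ref{cond:oracle-shape} with threshold $g^*(l)$ is also correct and is implicitly assumed by the paper.
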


\paragraph*{Acknowledgments.}

We thank NSF under IIS-1162581, CCF-1513883, and CNS-1329819 for research
support. 

\bibliographystyle{plain}
\bibliography{additional,lpactive}

\newpage{}

\appendix

\section{Proof of query complexities}

\subsection{\label{subsec:The-properties-of-testing}Properties of adaptive sequential
testing in Procedure \ref{alg:test}}
\begin{lem}
\label{lem:CheckSignificant-correctness}Suppose $\left\{ X_{i}\right\} _{i=1}^{\infty}$
is a sequence of i.i.d.\ random variables such that $\mathbb{E}X_{i}\leq0$,
$\left|X_{i}\right|\leq1$. Let $\delta>0$. Then with probability
at least $1-\delta$, for all $n\in\mathbb{N}$ simultaneously CheckSignificant$\left(\left\{ X_{i}\right\} _{i=1}^{n},\delta\right)$
in Procedure~\ref{alg:test} returns false.
\end{lem}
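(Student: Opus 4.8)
The plan is to reduce the statement directly to the uniform concentration inequality in Proposition~\ref{prop:uniform-berstein}. Recall that CheckSignificant$\left(\left\{X_i\right\}_{i=1}^n,\delta\right)$ returns true precisely when $\sum_{i=1}^n X_i \geq p(n,\delta) = D_0\left(1+\ln\frac{1}{\delta}+\sqrt{4n\left(\left[\ln\ln\right]_+4n+\ln\frac{1}{\delta}\right)}\right)$. So proving the lemma amounts to showing that, with probability at least $1-\delta$, $\sum_{i=1}^n X_i < p(n,\delta)$ for every $n$ simultaneously. The obstacle is that Proposition~\ref{prop:uniform-berstein} is stated for centered variables with $\mathbb{E}X_1=0$ and range $[-2,2]$, whereas here we only have $\mathbb{E}X_i\leq 0$ and $|X_i|\leq 1$; I will bridge this gap by a centering/coupling argument.

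First I would set $\mu := \mathbb{E}X_1 \leq 0$ and define the centered variables $X_i' := X_i - \mu$. Then $\mathbb{E}X_1' = 0$, and since $|X_i|\leq 1$ and $|\mu|\leq 1$ we have $|X_i'|\leq 2$, so Proposition~\ref{prop:uniform-berstein} applies to $\left\{X_i'\right\}$: with probability at least $1-\delta$, for all $n>0$ simultaneously,
\[
\left|\sum_{i=1}^n X_i'\right| \leq D_0\left(1+\ln\frac{1}{\delta}+\sqrt{4n\left(\left[\ln\ln\right]_+4n+\ln\frac{1}{\delta}\right)}\right) = p(n,\delta).
\]
On this event, for every $n$,
\[
\sum_{i=1}^n X_i = \sum_{i=1}^n X_i' + n\mu \leq \sum_{i=1}^n X_i' \leq p(n,\delta),
\]
using $\mu\leq 0$ for the first inequality. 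This shows $\sum_{i=1}^n X_i \leq p(n,\delta)$, so CheckSignificant returns false at every $n$, simultaneously, on an event of probability at least $1-\delta$. (A cosmetic point: the definition of CheckSignificant uses ``$\geq p(n,\delta)$'' for returning true, so I would either note that the concentration bound combined with $\mu \le 0$ in fact gives strict inequality when $\mu<0$, or — more robustly — absorb the boundary case by observing that the statement only needs to hold with probability $1-\delta$ and the inequality $\sum X_i \le p(n,\delta)$ already suffices if we adopt the convention that returning ``false'' is the complement of $\sum X_i \ge p$; alternatively invoke Proposition~\ref{prop:uniform-berstein} with a strictly smaller confidence parameter to get strictness for free.)

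The main subtlety — really the only one — is the direction of the centering inequality: because $\mu\leq 0$, subtracting $n\mu$ only \emph{helps}, so the one-sided bound we need for $\sum X_i$ follows from the two-sided bound for $\sum X_i'$ with no loss. I would also double-check that the range inflation from $[-1,1]$ to $[-2,2]$ is exactly what Proposition~\ref{prop:uniform-berstein} is stated to tolerate (it is, as written), so that the constant $D_0$ and the function $p(n,\delta)$ match verbatim and no rescaling is needed. No further estimates are required; the lemma is essentially a restatement of Proposition~\ref{prop:uniform-berstein} after centering.
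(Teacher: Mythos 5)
Your argument is exactly the paper's one-line proof (``apply Proposition~\ref{prop:uniform-berstein} to $X_i - \mathbb{E}X_i$''), just with the centering, the range inflation to $[-2,2]$, and the sign of $\mu$ spelled out explicitly. Correct and same approach.
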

\begin{proof}
This is immediate by applying Proposition~\ref{prop:uniform-berstein}
to $X_{i}-\mathbb{E}X_{i}$.
\end{proof}
\begin{lem}
\label{lem:CheckSignificant-effectiveness}Suppose $\left\{ X_{i}\right\} _{i=1}^{\infty}$
is a sequence of i.i.d.\ random variables such that $\mathbb{E}X_{i}>\epsilon>0$,
$\left|X_{i}\right|\leq1$. Let $\delta\in[0,\frac{1}{3}]$, $N\geq\frac{\xi}{\epsilon^{2}}\ln\frac{1}{\delta}[\ln\ln]_{+}\frac{1}{\epsilon}$
($\xi$ is an absolute constant specified in the proof). Then with
probability at least $1-\delta$, CheckSignificant$\left(\left\{ X_{i}\right\} _{i=1}^{N},\delta\right)$
in Procedure~\ref{alg:test} returns true.
\end{lem}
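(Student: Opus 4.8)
The plan is to reduce the statement to a purely deterministic inequality about the threshold $p(N,\delta)$ used inside CheckSignificant, and then verify that inequality for every $N$ in the stated range. Since $|X_i|\le 1$, the centered variables $X_i-\mathbb{E}X_i$ lie in $[-2,2]$ and have mean zero, so Proposition~\ref{prop:uniform-berstein} applies to them and yields an event of probability at least $1-\delta$ on which, for all $n$ simultaneously and in particular for $n=N$,
\[
\left|\sum_{i=1}^N\left(X_i-\mathbb{E}X_i\right)\right|\le D_0\left(1+\ln\tfrac1\delta+\sqrt{4N\left([\ln\ln]_+4N+\ln\tfrac1\delta\right)}\right)=p(N,\delta).
\]
Hence on this event $\sum_{i=1}^N X_i\ge N\mathbb{E}X_1-p(N,\delta)>N\epsilon-p(N,\delta)$, and CheckSignificant$\left(\{X_i\}_{i=1}^N,\delta\right)$ returns true provided $N\epsilon-p(N,\delta)\ge p(N,\delta)$, i.e.\ $N\epsilon\ge 2p(N,\delta)$. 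So it remains to show the numerical inequality
\[
N\epsilon\ge 2p(N,\delta)\qquad\text{whenever }N\ge N_0:=\frac{\xi}{\epsilon^2}\ln\tfrac1\delta\,[\ln\ln]_+\tfrac1\epsilon
\]
for a suitable absolute constant $\xi$; note $\epsilon<1$ since $\mathbb{E}X_1\le 1$, and $\ln\tfrac1\delta\ge\ln 3>1$ since $\delta\le\tfrac13$, facts I will use freely.

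Next I would split $2p(N,\delta)$ into its three summands $2D_0$, $2D_0\ln\tfrac1\delta$, and $2D_0\sqrt{4N\left([\ln\ln]_+4N+\ln\tfrac1\delta\right)}$, and bound each by $\tfrac13 N\epsilon$. Because $[\ln\ln]_+\tfrac1\epsilon\ge 1$ and $\tfrac1{\epsilon^2}\ge\tfrac1\epsilon$, we have $N_0\ge\xi/\epsilon$ and $N_0\ge\xi\ln\tfrac1\delta/\epsilon$, so the first two summands are each at most $\tfrac13 N\epsilon$ once $\xi\ge 6D_0$. For the third, $2D_0\sqrt{4N\left([\ln\ln]_+4N+\ln\tfrac1\delta\right)}\le\tfrac13 N\epsilon$ is, after squaring, equivalent to
\[
N\epsilon^2\ge 144\,D_0^2\left([\ln\ln]_+4N+\ln\tfrac1\delta\right).
\]

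The main obstacle is exactly this last display: its right-hand side depends on $N$ itself through the (very slowly growing) term $[\ln\ln]_+4N$. I would resolve this in two moves. First, a crude bound gives $4N_0\le 4\xi\epsilon^{-3}\delta^{-1}$, so $\ln 4N_0=O\left(\ln\tfrac1\epsilon+\ln\tfrac1\delta+\ln\xi\right)$, and taking one more logarithm, together with $\ln\ln\tfrac1\delta\le\ln\tfrac1\delta$, $\ln\ln\tfrac1\epsilon\le[\ln\ln]_+\tfrac1\epsilon$, and $[\ln\ln]_+\tfrac1\epsilon\ge1$, $\ln\tfrac1\delta\ge1$, yields $[\ln\ln]_+(4N_0)\le c_1\,[\ln\ln]_+\tfrac1\epsilon\cdot\ln\tfrac1\delta$ for an absolute constant $c_1$ (the degenerate cases $\tfrac1\epsilon<e^e$ and $\ln\tfrac1\delta$ near $1$ are absorbed by the $[\ln\ln]_+$ truncation and $\delta\le\tfrac13$). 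Plugging this in and using $[\ln\ln]_+\tfrac1\epsilon\ge1$, the displayed inequality holds at $N=N_0$ as soon as $\xi\ge 144D_0^2(c_1+1)$. Second, the function $N\mapsto N\epsilon^2-144D_0^2[\ln\ln]_+4N$ has derivative $\epsilon^2-O\!\left(D_0^2/(N\ln 4N)\right)$, which is positive for all $N\ge N_0$ (indeed whenever $N\gtrsim D_0^2/\epsilon^2$), so both this function and $N\mapsto N\epsilon-2p(N,\delta)$ are nondecreasing on $[N_0,\infty)$; hence the inequality, once established at $N_0$, persists for every $N\ge N_0$. Taking $\xi$ to be the maximum of the finitely many absolute constants produced along the way ($6D_0$ and $144D_0^2(c_1+1)$) finishes the argument. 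The only real care needed is the constant bookkeeping, making sure $\xi$ depends on neither $\epsilon$ nor $\delta$.
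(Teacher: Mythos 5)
Your proof is correct, but it takes a genuinely different route from the paper's. The paper establishes the pointwise lower-tail bound by applying Hoeffding's inequality directly at $n=N$: it shows $p(N,\delta)-N\epsilon\le-\sqrt{N\ln(1/\delta)/2}$ and then bounds $\Pr\bigl(S_N-N\mathbb{E}X_i\le-\sqrt{N\ln(1/\delta)/2}\bigr)\le\delta$. You instead \emph{reuse} Proposition~\ref{prop:uniform-berstein} on the centered sequence to get the two-sided uniform event $\bigl\{|\,\sum_{i\le n}(X_i-\mathbb{E}X_i)|\le p(n,\delta)\ \forall n\bigr\}$, which reduces the lemma to the purely deterministic inequality $N\epsilon\ge 2p(N,\delta)$. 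That inequality is a bit stronger than what the paper needs (since $p(N,\delta)\ge D_0\sqrt{4N\ln(1/\delta)}\ge\sqrt{N\ln(1/\delta)/2}$ once $D_0\ge1$), but only by an absolute-constant factor, so it just pushes up $\xi$. The payoff is that you don't need a second concentration tool; the price is a marginally larger constant. Both numerical verifications face the same self-referential $[\ln\ln]_+4N$ term; the paper handles it by parametrizing $N=c\xi\epsilon^{-2}\ln(1/\delta)[\ln\ln]_+(1/\epsilon)$ with $c\ge1$, while you verify at $N_0$ and then argue monotonicity of $N\epsilon^2-144D_0^2[\ln\ln]_+4N$ on $[N_0,\infty)$; the monotonicity step is sound because the derivative $\epsilon^2-144D_0^2/(N\ln4N)$ is positive once $N\ge\xi/\epsilon^2\ge144D_0^2/\epsilon^2$. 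Your three-way split of $2p(N,\delta)$, the crude bound $4N_0\le4\xi\epsilon^{-3}\delta^{-1}$ leading to $[\ln\ln]_+(4N_0)\le c_1[\ln\ln]_+(1/\epsilon)\ln(1/\delta)$, and the final choice $\xi=\max\{6D_0,\,144D_0^2(c_1+1)\}$ are all consistent and complete. In short: correct proof, different (and arguably cleaner, since self-contained) concentration machinery, at the cost of a slightly larger $\xi$.
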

\begin{proof}
Let $S_{N}=\sum_{i=1}^{N}X_{i}$. CheckSignificant$\left(\left\{ X_{i}\right\} _{i=1}^{N},\delta\right)$
returns false if and only if \\
$S_{N}\leq D_{0}\left(1+\ln\frac{1}{\delta}+\sqrt{N\left([\ln\ln]_{+}N+\ln\frac{1}{\delta}\right)}\right)$.

\begin{align*}
 & \Pr\left(S_{N}\leq D_{0}\left(1+\ln\frac{1}{\delta}+\sqrt{N\left([\ln\ln]_{+}N+\ln\frac{1}{\delta}\right)}\right)\right)\\
\leq & \Pr\left(S_{N}\leq D_{0}\left(1+\ln\frac{1}{\delta}+\sqrt{N[\ln\ln]_{+}N}+\sqrt{N\ln\frac{1}{\delta}}\right)\right)\\
\leq & \Pr\left(S_{N}-N\mathbb{E}X_{i}\leq D_{0}\left(1+\ln\frac{1}{\delta}+\sqrt{N[\ln\ln]_{+}N}+\sqrt{N\ln\frac{1}{\delta}}\right)-N\epsilon\right)
\end{align*}

Suppose $N=\frac{c\xi}{\epsilon^{2}}\ln\frac{1}{\delta}[\ln\ln]_{+}\frac{1}{\epsilon}$
for constant $c\geq1$ and $\xi$. $\xi$ is set to be sufficiently
large, such that (1) $\xi\geq4D_{0}^{2}$; (2) $\frac{2D_{0}}{\sqrt{\xi}}+D_{0}\left(3+\sqrt{[\ln\ln]_{+}\xi}\right)+D_{0}-\sqrt{\xi}/2\leq-\sqrt{\frac{1}{2}}$;
(3) $f(x)=D_{0}\sqrt{[\ln\ln]_{+}x}-\sqrt{x}/2$ is decreasing when
$x>\xi$. Here (2) is satisfiable since $\frac{D_{0}}{\sqrt{\xi}}+D_{0}\sqrt{[\ln\ln]_{+}\xi}-\sqrt{\xi}/2\rightarrow-\infty$
as $\xi\rightarrow\infty$, (3) is satisfiable since $f'(x)\rightarrow-\infty$
as $x\rightarrow\infty$. (2) and (3) together implies $\frac{2D_{0}}{\sqrt{\xi}}+D_{0}\left(3+\sqrt{[\ln\ln]_{+}c\xi}\right)+D_{0}-\sqrt{c\xi}/2\leq-\sqrt{\frac{1}{2}}$.

\begin{align*}
 & \frac{1}{\sqrt{N}}\left(D_{0}\left(1+\ln\frac{1}{\delta}+\sqrt{N[\ln\ln]_{+}N}+\sqrt{N\ln\frac{1}{\delta}}\right)-N\epsilon\right)\\
= & \sqrt{\ln\frac{1}{\delta}}\left(\frac{D_{0}\epsilon(1+\ln\frac{1}{\delta})}{\sqrt{c\xi[\ln\ln]_{+}\frac{1}{\epsilon}}\ln\frac{1}{\delta}}+D_{0}\sqrt{\frac{[\ln\ln]_{+}\left(\frac{c\xi}{\epsilon^{2}}\ln\frac{1}{\delta}[\ln\ln]_{+}\frac{1}{\epsilon}\right)}{\ln\frac{1}{\delta}}}+D_{0}-\sqrt{c\xi[\ln\ln]_{+}\frac{1}{\epsilon}}\right)
\end{align*}

Since $[\ln\ln]_{+}\frac{1}{\epsilon},c,\ln\frac{1}{\delta}\geq1$
and $\epsilon<1$, we have $\frac{D_{0}\epsilon(1+\ln\frac{1}{\delta})}{\sqrt{c\xi[\ln\ln]_{+}\frac{1}{\epsilon}}\ln\frac{1}{\delta}}\leq\frac{2D_{0}}{\sqrt{\xi}}$. 

Since $[\ln\ln]_{+}x\geq1$ if $x\ge1$, we have $[\ln\ln]_{+}\frac{1}{\epsilon}\leq\frac{1}{\epsilon}$,
and thus

\begin{eqnarray*}
\sqrt{[\ln\ln]_{+}\left(\frac{c\xi}{\epsilon^{2}}\ln\frac{1}{\delta}[\ln\ln]_{+}\frac{1}{\epsilon}\right)} & = & \sqrt{\ln\left[\max\left\{ e,2\ln\frac{1}{\epsilon}+\ln c\xi+\ln\ln\frac{1}{\delta}+\ln[\ln\ln]_{+}\frac{1}{\epsilon}\right\} \right]}\\
 & \leq & \sqrt{\ln\left[\max\left\{ e,3\ln\frac{1}{\epsilon}+\ln c\xi+[\ln\ln]_{+}\frac{1}{\delta}\right\} \right]}\\
 & \overset{(a)}{\leq} & \sqrt{\ln\left[\max\left\{ e,9\ln\frac{1}{\epsilon}\ln c\xi[\ln\ln]_{+}\frac{1}{\delta}\right\} \right]}\\
 & \leq & \sqrt{3+[\ln\ln]_{+}\frac{1}{\epsilon}+[\ln\ln]_{+}c\xi+\ln[\ln\ln]_{+}\frac{1}{\delta}}\\
 & \overset{(b)}{\leq} & \sqrt{3}+\sqrt{[\ln\ln]_{+}c\xi}+\sqrt{[\ln\ln]_{+}\frac{1}{\epsilon}}+\sqrt{\ln[\ln\ln]_{+}\frac{1}{\delta}}
\end{eqnarray*}

where (a) follows by $a+b+c\leq3abc$ if $a,b,c\ge1$, and (b) follows
by $\sqrt{\sum_{i}x_{i}}\leq\sum_{i}\sqrt{x_{i}}$ if $x_{i}\geq0$.

Thus, we have
\begin{align*}
 & \frac{1}{\sqrt{N}}\left(D_{0}\left(1+\ln\frac{1}{\delta}+\sqrt{N[\ln\ln]_{+}N}+\sqrt{N\ln\frac{1}{\delta}}\right)-N\epsilon\right)\\
\leq & \sqrt{\ln\frac{1}{\delta}}\left(\frac{2D_{0}}{\sqrt{\xi}}+D_{0}\frac{\sqrt{3}+\sqrt{[\ln\ln]_{+}c\xi}+\sqrt{[\ln\ln]_{+}\frac{1}{\epsilon}}+\sqrt{\ln[\ln\ln]_{+}\frac{1}{\delta}}}{\sqrt{\ln\frac{1}{\delta}}}+D_{0}-\sqrt{c\xi[\ln\ln]_{+}\frac{1}{\epsilon}}\right)\\
\overset{(c)}{\leq} & \sqrt{\ln\frac{1}{\delta}}\left(\frac{2D_{0}}{\sqrt{\xi}}+D_{0}\left(3+\sqrt{[\ln\ln]_{+}c\xi}\right)+D_{0}-\sqrt{c\xi}/2\right)\\
\overset{(d)}{\leq} & -\sqrt{\ln\frac{1}{\delta}/2}
\end{align*}

(c) follows by $\sqrt{\ln\frac{1}{\delta}}\geq\max\left\{ 1,\sqrt{\ln[\ln\ln]_{+}\frac{1}{\delta}}\right\} $,
$D_{0}\ge1$, and $\sqrt{[\ln\ln]_{+}\frac{1}{\epsilon}}\left(\frac{D_{0}}{\sqrt{\ln\frac{1}{\delta}}}-\sqrt{c\xi}\right)\leq D_{0}-\sqrt{c\xi}\leq-\sqrt{c\xi}/2$
if $c\xi\geq4D_{0}^{2}$. (d) follows by our choose of $\xi$.

Therefore, 

\begin{align*}
 & \Pr\left(S_{N}-N\mathbb{E}X_{i}\leq D_{0}\left(1+\ln\frac{1}{\delta}+\sqrt{N[\ln\ln]_{+}N}+\sqrt{N\ln\frac{1}{\delta}}\right)-N\epsilon\right)\\
\leq & \Pr\left(S_{N}-N\mathbb{E}X_{i}\leq-\sqrt{N\ln\frac{1}{\delta}/2}\right)
\end{align*}

which is at most $\delta$ by Hoeffding Bound.
\end{proof}
\begin{lem}
\label{lem:CheckSignificant-var-correctness}Suppose $\left\{ X_{i}\right\} _{i=1}^{\infty}$
is a sequence of i.i.d.\ random variables such that $\mathbb{E}X_{i}\leq0$,
$\left|X_{i}\right|\leq1$. Let $\delta>0$. Then with probability
at least $1-\delta$, for all $n$ simultaneously CheckSignificant-Var$\left(\left\{ X_{i}\right\} _{i=1}^{n},\delta\right)$
in Procedure~\ref{alg:test} returns false.
\end{lem}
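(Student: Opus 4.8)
The plan is to reduce this statement to the uniform empirical-Bernstein inequality of Proposition~\ref{prop:uniform-empirical-berstein}, in the same way that Lemma~\ref{lem:CheckSignificant-correctness} was reduced to Proposition~\ref{prop:uniform-berstein}. Fix the sequence $\{X_i\}$ with $\mathbb{E}X_i\le 0$ and $|X_i|\le 1$, and set $Y_i=X_i-\mathbb{E}X_i$, so that $\mathbb{E}Y_i=0$ and $|Y_i|\le 2$. Apply Proposition~\ref{prop:uniform-empirical-berstein} to $\{Y_i\}$: with probability at least $1-\delta$, for all $n\ge\ln\frac1\delta$ simultaneously,
\[
\left|\sum_{i=1}^n Y_i\right|\le D_1\left(1+\ln\tfrac1\delta+\sqrt{\left(1+\ln\tfrac1\delta+V_n^Y\right)\left([\ln\ln]_+(1+\ln\tfrac1\delta+V_n^Y)+\ln\tfrac1\delta\right)}\right),
\]
where $V_n^Y$ is the empirical variance of $Y_1,\dots,Y_n$. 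The first step is to observe that the empirical variance is translation invariant, so $V_n^Y=V_n^X=\text{Var}$, the quantity computed inside CheckSignificant-Var. The second step is to note $\sum_{i=1}^n X_i=\sum_{i=1}^n Y_i+n\mathbb{E}X_i\le\sum_{i=1}^n Y_i\le\left|\sum_{i=1}^n Y_i\right|$, since $\mathbb{E}X_i\le 0$. Combining, on the good event, for every $n\ge\ln\frac1\delta$ we get $\sum_{i=1}^n X_i< q(n,\text{Var},\delta)$ (strictly, after absorbing the harmless gap, or simply noting the procedure's test is $\ge$ so the bound suffices), which is exactly the negation of the second conjunct in the Return statement; hence CheckSignificant-Var returns false. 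For $n<\ln\frac1\delta$, the first conjunct $n\ge\ln\frac1\delta$ already fails, so the procedure returns false deterministically, with no appeal to the probabilistic event.

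The only point requiring a little care is matching the bound produced by Proposition~\ref{prop:uniform-empirical-berstein} (stated for $X_1\in[-2,2]$ implicitly through the $D_1$ constant) against the function $q(n,\text{Var},\delta)$ defined in Procedure~\ref{alg:test}: the constant $D_1$ in the procedure is precisely the absolute constant from the proposition, and the arguments $(\text{Var}+\ln\frac1\delta+1)$ inside $q$ coincide with $(1+\ln\frac1\delta+V_n)$ in the proposition. So the inequality is literally the same, and no rescaling loss beyond the $[-2,2]$ range (already baked into $D_1$) is incurred. I expect no substantive obstacle here; the main thing to get right is the bookkeeping that the two conjuncts in the procedure's Return line are handled by two disjoint regimes ($n<\ln\frac1\delta$ deterministically, $n\ge\ln\frac1\delta$ via the uniform bound), so that a single application of the proposition covers all $n$ simultaneously and the claimed $1-\delta$ probability is not degraded by a union bound over $n$.
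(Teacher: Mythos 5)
Your proposal is correct and follows essentially the same route as the paper's proof: define $Y_i = X_i - \mathbb{E}X_i$, observe that the empirical variance is translation invariant so that $V_n^Y = V_n^X$, and invoke Proposition~\ref{prop:uniform-empirical-berstein}. The paper states this very tersely; you have simply spelled out the bookkeeping (the sign reduction $\sum X_i \le \sum Y_i \le \lvert \sum Y_i\rvert$, the deterministic behavior for $n < \ln\frac{1}{\delta}$, and the fact that a single application of the uniform bound covers all $n$ at once), which is a fair expansion of the paper's "immediate."
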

\begin{proof}
Define $Y_{i}=X_{i}-\mathbb{E}X_{i}$. It is easy to check $\frac{n}{n-1}\left(\sum_{i=1}^{n}Y_{i}^{2}-\frac{1}{n}\left(\sum_{i=1}^{n}Y_{i}\right)^{2}\right)=\frac{n}{n-1}\left(\sum_{i=1}^{n}X_{i}^{2}-\frac{1}{n}\left(\sum_{i=1}^{n}X_{i}\right)^{2}\right)$.
The result is immediate from Proposition~\ref{prop:uniform-empirical-berstein}.
\end{proof}
\begin{lem}
\label{lem:CheckSignificant-var-effectiveness}Suppose $\left\{ X_{i}\right\} _{i=1}^{\infty}$
is a sequence of i.i.d.\ random variables such that $\mathbb{E}X_{i}>\tau\epsilon$,
$\left|X_{i}\right|\leq1$, $\text{Var}\left(X_{i}\right)\leq2\epsilon$
where $0<\epsilon\leq1$, $\tau>0$. Let $\delta<1$, $N=\frac{\xi}{\tau\epsilon}\ln\frac{2}{\delta}$
($\xi$ is a constant specified in the proof). Then with probability
at least $1-\delta$, CheckSignificant-Var$\left(\left\{ X_{i}\right\} _{i=1}^{N},\delta\right)$
in Procedure~\ref{alg:test} returns true.
\end{lem}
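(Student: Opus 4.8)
The goal is to exhibit an event of probability at least $1-\delta$ on which the two conditions checked by CheckSignificant-Var both hold: $N\geq\ln\frac1\delta$, and $S_N:=\sum_{i=1}^{N}X_i\geq q(N,V_N,\delta)$, where $V_N=\frac{N}{N-1}\sum_{i=1}^{N}(X_i-\bar X)^2$ is the empirical variance. The first requirement is deterministic: since $|X_i|\leq1$ we have $\tau\epsilon<\mathbb{E}X_i\leq1$, so $\frac1{\tau\epsilon}>1$, and with $\xi\geq1$ and $\ln\frac2\delta>\ln\frac1\delta$ this gives $N>\ln\frac1\delta$ (and $N\geq2$ once $\xi$ is taken a bit larger). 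The remaining work is the inequality $S_N\geq q(N,V_N,\delta)$: I will bound the threshold from above and $S_N$ from below on a common good event, then take $\xi$ large enough — as a function of $\tau$ and $D_1$ — that the lower bound dominates. Unlike the correctness lemma, only a single-$n$ concentration bound is needed here, so plain Bernstein inequalities suffice in place of Proposition~\ref{prop:uniform-empirical-berstein}.

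For the threshold, I first control $V_N$. Writing $\mu=\mathbb{E}X_i$ and using that $\bar X$ minimizes $a\mapsto\sum_i(X_i-a)^2$, we get $V_N\leq\frac{N}{N-1}\sum_{i=1}^{N}(X_i-\mu)^2$, and the summands are i.i.d., lie in $[0,4]$, and have mean $\mathrm{Var}(X_i)\leq2\epsilon$ and variance at most $4\,\mathrm{Var}(X_i)\leq8\epsilon$. Bernstein's inequality then gives, with probability at least $1-\delta/2$, $\sum_i(X_i-\mu)^2\leq 4N\epsilon+O(\ln\tfrac2\delta)$, hence $V_N\leq 8N\epsilon+O(\ln\tfrac2\delta)$. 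Substituting $N=\frac{\xi}{\tau\epsilon}\ln\frac2\delta$ makes $N\epsilon=\frac\xi\tau\ln\frac2\delta$, so $1+\ln\frac1\delta+V_N\leq C_2\frac\xi\tau\ln\frac2\delta$; since the $[\ln\ln]_{+}$ factor appearing in $q$ grows slowly enough to be absorbed into constants, this yields $q(N,V_N,\delta)\leq C_3 D_1\sqrt{\xi/\tau}\,\ln\frac2\delta$ for absolute constants $C_2,C_3$. (Passing carefully through $V_N$ rather than using the trivial bound $V_N\leq 2N$ is exactly the point: the latter would put $q$ at order $\sqrt{N\ln\frac1\delta}$, far too large to beat.)

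For $S_N$, Bernstein's inequality applied to the $X_i$ (range at most $2$, variance at most $2\epsilon$) gives, with probability at least $1-\delta/2$, $S_N\geq N\mathbb{E}X_i-2\sqrt{N\epsilon\ln\tfrac2\delta}-\tfrac43\ln\tfrac2\delta$. Using $\mathbb{E}X_i>\tau\epsilon$ and the value of $N$, the leading term is $N\mathbb{E}X_i>\xi\ln\frac2\delta$, while $2\sqrt{N\epsilon\ln\frac2\delta}=2\sqrt{\xi/\tau}\,\ln\frac2\delta$; thus on this event $S_N\geq\bigl(\xi-2\sqrt{\xi/\tau}-\tfrac43\bigr)\ln\frac2\delta$.

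Intersecting the two events (total failure probability at most $\delta$), choose $\xi\geq\max\{4,\,144/\tau\}$ so that $2\sqrt{\xi/\tau}+\tfrac43\leq\tfrac\xi2$, giving $S_N\geq\tfrac\xi2\ln\frac2\delta$; enlarging $\xi$ once more so that $\tfrac\xi2\geq C_3 D_1\sqrt{\xi/\tau}$, i.e.\ $\xi$ large relative to $D_1^2/\tau$, forces $S_N\geq q(N,V_N,\delta)$. Hence CheckSignificant-Var returns true with probability at least $1-\delta$, with $\xi$ a constant determined by $\tau$ and $D_1$ alone. The only delicate point is this final calibration — a single $\xi$ must simultaneously dominate the Bernstein fluctuation in $S_N$ (needing $\xi$ large relative to $1/\tau$) and the variance-dependent threshold (needing $\xi$ large relative to $D_1^2/\tau$) — and the estimate that makes it possible is the high-probability bound $V_N=O(N\epsilon+\ln\frac1\delta)$ obtained via $\sum_i(X_i-\bar X)^2\leq\sum_i(X_i-\mu)^2$ followed by Bernstein.
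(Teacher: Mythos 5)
Your proof is correct and follows essentially the same route as the paper's: bound the empirical variance $V_N$ via a single-$n$ Bernstein argument to show $q(N,V_N,\delta)$ is small (of order $\sqrt{\xi/\tau}\,\ln\frac{2}{\delta}$), bound $S_N$ from below via Bernstein, and finish with a union bound and a sufficiently large choice of $\xi$. The paper packages the threshold-control step into Lemma~\ref{lem:diff in E} (which relies on Lemma~\ref{lem:single-var-bound}) and compares both $q_N/N$ and $S_N/N$ to the midpoint $\mathbb{E}X_i-\tau\epsilon/2$, whereas you carry the same estimate out inline using $\sum_i(X_i-\bar X)^2\leq\sum_i(X_i-\mu)^2$ and compare magnitudes directly, but the underlying calculation is identical.
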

\begin{proof}
Let $Y_{i}=X_{i}-\mathbb{E}X_{i}$, $\eta$ be the constant $\eta$
in Lemma~\ref{lem:diff in E}. Set $\xi=\max(\eta,\frac{16}{\tau}+\frac{8}{3})$. 

CheckSignificant-Var$\left(\left\{ X_{i}\right\} _{i=1}^{N},\delta\right)$
returns false if and only if $\sum_{i=1}^{N}X_{i}\leq\ensuremath{q(N,\text{Var},\delta)}$.

By applying Lemma~\ref{lem:diff in E} to $X_{i}$, $\frac{\ensuremath{q(N,\text{Var},\delta)}}{N}-\mathbb{E}X_{i}\leq-\tau\epsilon/2$
with probability at least $1-\delta/2$. 

Applying Bernstein's inequality to $Y_{i}$, we have 

\begin{eqnarray*}
\Pr\left(\frac{1}{N}\sum_{i=1}^{N}Y_{i}\leq-\tau\epsilon/2\right) & \leq & \exp\left(-\frac{N\left(-\tau\epsilon\right)^{2}/4}{4\epsilon+2\tau\epsilon/3}\right)\\
 & = & \exp\left(-\frac{\xi\ln\frac{2}{\delta}}{16/\tau+8/3}\right)\\
 & \leq & \delta/2
\end{eqnarray*}

Thus, by a union bound, 
\begin{align*}
 & \Pr\left(\sum_{i=1}^{N}X_{i}\leq\ensuremath{q(N,\text{Var},\delta)}\right)\\
\leq & \Pr\left(\frac{\ensuremath{q(N,\text{Var},\delta)}}{N}-\mathbb{E}X_{i}\geq-\tau\epsilon/2\right)\\
 & +\Pr\left(\frac{\ensuremath{q(N,\text{Var},\delta)}}{N}-\mathbb{E}X_{i}\leq-\tau\epsilon/2\text{ and }\frac{1}{N}\sum_{i=1}^{N}X_{i}\leq\frac{\ensuremath{q(N,\text{Var},\delta)}}{N}\right)\\
\leq & \delta/2+\Pr\left(\frac{\ensuremath{q(N,\text{Var},\delta)}}{N}-\mathbb{E}X_{i}\leq-\tau\epsilon/2\text{ and }\frac{1}{N}\sum_{i=1}^{N}Y_{i}\leq\frac{\ensuremath{q(n,\text{Var},\delta)}}{N}-\mathbb{E}X_{i}\right)\\
\leq & \delta/2+\Pr\left(\frac{1}{N}\sum_{i=1}^{N}Y_{i}\leq-\tau\epsilon/2\right)\\
\leq & \delta
\end{align*}
\end{proof}

\subsection{The one-dimensional case}
\begin{proof}[Proof of Theorem \ref{thm:algo-correctness}]
 \label{lem:(correctness)} Since $\hat{\theta}=\left(L_{\log\frac{1}{2\epsilon}}+R_{\log\frac{1}{2\epsilon}}\right)/2$
and $R_{\log\frac{1}{2\epsilon}}-L_{\log\frac{1}{2\epsilon}}=2\epsilon$,
$\left|\hat{\theta}-\theta^{*}\right|>\epsilon$ is equivalent to
$\theta^{*}\notin[L_{\log\frac{1}{2\epsilon}},R_{\log\frac{1}{2\epsilon}}]$.
We have 
\begin{eqnarray*}
\Pr\left(\left|\hat{\theta}-\theta^{*}\right|>\epsilon\right) & = & \Pr\left(\theta^{*}\notin[L_{\log\frac{1}{2\epsilon}},R_{\log\frac{1}{2\epsilon}}]\right)\\
 & = & \Pr\left(\exists k:\theta^{*}\in[L_{k},R_{k}]\text{ and }\theta^{*}\notin[L_{k+1},R_{k+1}]\right)\\
 & \le & \sum_{k=0}^{\log\frac{1}{2\epsilon}-1}\Pr\left(\theta^{*}\in[L_{k},R_{k}]\text{ and }\theta^{*}\notin[L_{k+1},R_{k+1}]\right)
\end{eqnarray*}

For any $k=0,\dots,\log\frac{1}{2\epsilon}-1$, define $\mathbb{Q}_{k}=\left\{ (p,q):p,q\in\mathbb{Q}\cap[0,1]\text{ and }q-p=\left(\frac{3}{4}\right)^{k}\right\} $
where $\mathbb{Q}$ is the set of rational numbers. Note that $L_{k},R_{k}\in\mathbb{Q}_{k}$,
and $\mathbb{Q}$ is countable. So we have

\begin{align*}
 & \Pr\left(\theta^{*}\in[L_{k},R_{k}]\text{ and }\theta^{*}\notin[L_{k+1},R_{k+1}]\right)\\
= & \sum_{(p,q)\in\mathbb{Q}_{k}:p\leq\theta^{*}\leq q}\Pr\left(L_{k}=p,R_{k}=q\text{ and }\theta^{*}\notin[L_{k+1},R_{k+1}]\right)\\
= & \sum_{(p,q)\in\mathbb{Q}_{k}:p\leq\theta^{*}\leq q}\Pr\left(\theta^{*}\notin[L_{k+1},R_{k+1}]|L_{k}=p,R_{k}=q\right)\Pr\left(L_{k}=p,R_{k}=q\right)
\end{align*}

Define event $E_{k,p,q}$ to be the event $L_{k}=p,R_{k}=q$. To show
$\Pr\left(\left|\hat{\theta}-\theta^{*}\right|>\epsilon\right)\leq\frac{\delta}{2}$,
it suffices to show $\Pr\left(\theta^{*}\notin[L_{k+1},R_{k+1}]|E_{k,p,q}\right)\leq\frac{\delta}{2\log\frac{1}{2\epsilon}}$
for any $k=0,\dots,\log\frac{1}{2\epsilon}-1$, $(p,q)\in\mathbb{Q}_{k}\text{ and }p\leq\theta^{*}\leq q$.

Conditioning on event $E_{k,p,q}$, event $\theta^{*}\notin[L_{k+1},R_{k+1}]$
happens only if some calls of CheckSignificant and CheckSignificant-Var
between Line \ref{line:algo-1d:test-start} and \ref{line:algo-1d:test-end}
of Algorithm~\ref{alg:The-active-learning} return true incorrectly.
In other words, at least one of following events happens for some
$n$: 

\begin{itemize}
\item $O_{k,p,q}^{(1)}$: $\theta^{*}\in[L_{k},U_{k}]$ and CheckSignificant-Var($\left\{ A_{i}^{(u)}-A_{i}^{(m)}\right\} _{i=1}^{n},\frac{\delta}{4\log\frac{1}{2\epsilon}}$)
returns true;
\item $O_{k,p,q}^{(2)}$: $\theta^{*}\in[V_{k},R_{k}]$ and CheckSignificant-Var($\left\{ A_{i}^{(v)}-A_{i}^{(m)}\right\} _{i=1}^{n},\frac{\delta}{4\log\frac{1}{2\epsilon}}$)
returns true;
\item $O_{k,p,q}^{(3)}$: $\theta^{*}\in[L_{k},U_{k}]$ and CheckSignificant($\left\{ -B_{i}^{(u)}\right\} _{i=1}^{n},\frac{\delta}{4\log\frac{1}{2\epsilon}}$)
returns true;
\item $O_{k,p,q}^{(4)}$: $\theta^{*}\in[V_{k},R_{k}]$ and CheckSignificant($\left\{ B_{i}^{(v)}\right\} _{i=1}^{n},\frac{\delta}{4\log\frac{1}{2\epsilon}}$)
returns true;
\end{itemize}
Note that since $[U_{k},V_{k}]\subset[L_{k+1},R_{k+1}]$ for any $k$
by our construction, if $\theta^{*}\in[U_{k},V_{k}]$ then $\theta^{*}\in[L_{k+1},R_{k+1}]$.
Besides, event $\theta^{*}\in[L_{k},U_{k}]$ and event $\theta^{*}\in[V_{k},R_{k}]$
are mutually exclusive.

Conditioning on event $E_{k,p,q}$, suppose for now $\theta^{*}\in[L_{k},U_{k}]$.

\begin{align*}
 & \Pr\left(O_{k,p,q}^{(1)}\mid E_{k,p,q}\right)\\
= & \Pr\left(\exists n:\text{CheckSignificant-Var(}\left\{ D_{i}^{(u,m)}\right\} _{i=1}^{n},\frac{\delta}{4\log\frac{1}{2\epsilon}}\text{) returns true}\mid\theta^{*}\in[L_{k},U_{k}],E_{k,p,q}\right)
\end{align*}

On event $\theta^{*}\in[L_{k},U_{k}]$ and $E_{k,p,q}$, the sequences
$\left\{ A_{i}^{(u)}\right\} $ and $\left\{ A_{i}^{(m)}\right\} $
are i.i.d., and $\mathbb{E}\Bigl[A_{i}^{(u)}-A_{i}^{(m)}\mid\theta^{*}\in[L_{k},U_{k}],E_{k,p,q}\Bigr]\leq0$.
By Lemma~\ref{lem:CheckSignificant-var-correctness}, the probability
above is at most $\frac{\delta}{4\log\frac{1}{2\epsilon}}$. 

Likewise, 
\begin{align*}
 & \Pr\left(O_{k,p,q}^{(3)}\mid E_{k,p,q}\right)\\
= & \Pr\left(\exists n:\text{CheckSignificant(}\left\{ -B_{i}^{(u)}\right\} _{i=1}^{n},\frac{\delta}{4\log\frac{1}{2\epsilon}}\text{) returns true}\mid\theta^{*}\in[L_{k},U_{k}],E_{k,p,q}\right)
\end{align*}

On event $\theta^{*}\in[L_{k},U_{k}]$ and $E_{k,p,q}$, the sequence
$\left\{ B_{i}^{(u)}\right\} $ is i.i.d., and $\mathbb{E}\left[-B_{i}^{(u)}\mid\theta^{*}\in[L_{k},U_{k}],E_{k,p,q}\right]\leq0$.
By Lemma~\ref{lem:CheckSignificant-correctness}, the probability
above is at most $\frac{\delta}{4\log\frac{1}{2\epsilon}}$. 

Thus, $\Pr\left(\theta^{*}\notin[L_{k+1},R_{k+1}]\mid E_{k,p,q}\right)\leq\frac{\delta}{2\log\frac{1}{2\epsilon}}$
when $\theta^{*}\in[L_{k},U_{k}]$. Similarly, when $\theta^{*}\in[V_{k},R_{k}]$,
we can show $\Pr\left(\theta^{*}\notin[L_{k+1},R_{k+1}]\mid E_{k,p,q}\right)\leq\Pr\left(O_{k,p,q}^{(2)}\mid E_{k,p,q}\right)+\Pr\left(O_{k,p,q}^{(4)}\mid E_{k,p,q}\right)\leq\frac{\delta}{2\log\frac{1}{2\epsilon}}$.

Therefore, $\Pr\left(\theta^{*}\notin[L_{k+1},R_{k+1}]\mid E_{k,p,q}\right)\leq\frac{\delta}{2\log\frac{1}{2\epsilon}}$,
and thus $\Pr\left(\left|\hat{\theta}-\theta^{*}\right|>\epsilon\right)\leq\delta/2$.
\end{proof}

\begin{proof}[Proof of Theorem \ref{thm:algo-label-noshape}]
Define $T_{k}$ to be the number of iterations of the loop at Line
\ref{line:algo-1d:loop-n}, $T=\sum_{k=0}^{\log\frac{1}{2\epsilon}-1}T_{k}$.
For any numbers $m_{1},m_{2},\dots,m_{\log\frac{1}{2\epsilon}-1}$,
we have:\\
\begin{eqnarray}
\Pr\left(T\geq m\right) & \leq & \Pr\left(\left|\hat{\theta}-\theta^{*}\right|>\epsilon\right)+\Pr\left(\left|\hat{\theta}-\theta^{*}\right|<\epsilon\text{ and }T\geq\sum_{k=0}^{\log\frac{1}{2\epsilon}-1}m_{k}\right)\nonumber \\
 & \leq & \frac{\delta}{2}+\Pr\left(T\geq\sum_{k=0}^{\log\frac{1}{2\epsilon}-1}m_{k}\text{ and }\left|\hat{\theta}-\theta^{*}\right|<\epsilon\right)\label{eq:label-decompose}\\
 & \leq & \frac{\delta}{2}+\sum_{k=0}^{\log\frac{1}{2\epsilon}-1}\Pr\left(T_{k}\geq m_{k}\text{ and }\left|\hat{\theta}-\theta^{*}\right|<\epsilon\right)\nonumber \\
 & \leq & \frac{\delta}{2}+\sum_{k=0}^{\log\frac{1}{2\epsilon}-1}\Pr\left(T_{k}\geq m_{k}\text{ and }\theta^{*}\in[L_{k},R_{k}]\right)\nonumber 
\end{eqnarray}

The first and the third inequality follows by union bounds. The second
follows by Theorem~\ref{thm:algo-correctness}. The last follows
since $\left|\hat{\theta}-\theta^{*}\right|<\epsilon$ is equivalent
to $\theta^{*}\in[L_{\log\frac{1}{2\epsilon}},R_{\log\frac{1}{2\epsilon}}]$,
which implies $\theta^{*}\in[L_{k},R_{k}]$ for all $k=0,\dots,\log\frac{1}{2\epsilon}-1$.

We define $\mathbb{Q}_{k}$ as in the previous proof. For all $k=0,\dots,\log\frac{1}{2\epsilon}-1$,

\begin{align*}
 & \Pr\left(T_{k}\geq m_{k}\text{ and }\theta^{*}\in[L_{k},R_{k}]\right)\\
= & \sum_{(p,q)\in\mathbb{Q}_{k}:p\leq\theta^{*}\leq q}\Pr\left(T_{k}\geq m_{k},L_{k}=p,R_{k}=q\right)\\
= & \sum_{(p,q)\in\mathbb{Q}_{k}:p\leq\theta^{*}\leq q}\Pr\left(T_{k}\geq m_{k}|L_{k}=p,R_{k}=q\right)\Pr\left(L_{k}=p,R_{k}=q\right)
\end{align*}

Thus, in order to prove the query complexity of Algorithm~\ref{alg:The-active-learning}
is $O\left(\sum_{k=0}^{\log\frac{1}{2\epsilon}-1}m_{k}\right)$, it
suffices to show that $\Pr\left(T_{k}\geq m_{k}\mid L_{k}=p,R_{k}=q\right)\leq\frac{\delta}{2\log\frac{1}{2\epsilon}}$
for any $k=0,\dots,\log\frac{1}{2\epsilon}-1$, $(p,q)\in\mathbb{Q}_{k}\text{ and }p\leq\theta^{*}\leq q$.

For each $k,p,q$, define event $E_{k,p,q}$ to be the event $L_{k}=p,R_{k}=q$.
Define $l_{k}=q-p=\left(\frac{3}{4}\right)^{k}$, $N_{k}$ to be $\tilde{\Theta}\left(\frac{1}{f(l_{k}/4)}l_{k}^{-2\beta}\right)$.
The logarithm factor of $N_{k}$ is to be specified later. Define
$S_{n}^{(u)}$ and $S_{n}^{(v)}$ to be the size of array $B^{(u)}$
and $B^{(v)}$ before Line \ref{line:algo-1d:test-start} respectively.

To show $\Pr\left(T_{k}\geq N_{k}\mid E_{k,p,q}\right)\leq\frac{\delta}{2\log\frac{1}{2\epsilon}}$,
it suffices to show that on event $E_{k,p,q}$, with probability at
least $1-\frac{\delta}{2\log\frac{1}{2\epsilon}}$, if $n=N_{k}$
then at least one of the two calls to CheckSignificant between Line
\ref{line:algo-1d:test-label-start} and Line \ref{line:algo-1d:test-end}
will return true.

On event $E_{k,p,q}$, if $\theta^{*}\in[L_{k},M_{k}]$ (note that
on event $E_{k,p,q}$, $L_{k}$ and $M_{k}$ are deterministic), then
$\left|V_{k}-\theta^{*}\right|\geq\frac{l_{k}}{4}$. We will show
\[
p_{1}:=\Pr\left(\text{CheckSignificant}\left(\left\{ B_{i}^{(v)}\right\} _{i=1}^{S_{N_{k}}^{(v)}},\frac{\delta}{4\log\frac{1}{2\epsilon}}\right)\text{ returns false}\mid E_{k,p,q}\right)\leq\frac{\delta}{2\log\frac{1}{2\epsilon}}
\]

To prove this, we will first show that $S_{N_{k}}^{(v)}$, the length
of the array $B^{(v)}$, is large with high probability, and then
apply Lemma~\ref{lem:CheckSignificant-effectiveness} to show that
CheckSignificant will return true if $S_{N_{k}}^{(v)}$ is large.

By definition, $S_{N_{k}}^{(v)}=\sum_{i=1}^{N_{k}}A_{i}^{(v)}$. By
Condition~\ref{cond:oracle-noshape}, $\mathbb{E}\left[A_{i}^{(v)}\mid E_{k,p,q}\right]=\Pr\left(Y\neq\perp\mid X=V_{k},E_{k,p,q}\right)\ge f\left(\frac{l_{k}}{4}\right)$. 

On event $E_{k,p,q}$, $\left\{ A_{i}^{(v)}\right\} $ is a sequence
of i.i.d.\ random variables. By the multiplicative Chernoff bound,
$\Pr\left(S_{N_{k}}^{(v)}\leq\frac{1}{2}N_{k}f\left(\frac{l_{k}}{4}\right)\mid E_{k,p,q}\right)\leq\exp\left(-N_{k}f\left(\frac{l_{k}}{4}\right)/8\right)$.

Now,

\begin{align*}
p_{1}\leq & \Pr\left(\text{CheckSignificant}\left(\left\{ B_{i}^{(v)}\right\} _{i=1}^{S_{N_{k}}^{(v)}},\frac{\delta}{4\log\frac{1}{2\epsilon}}\right)\text{ returns false},S_{N_{k}}^{(v)}\geq\frac{1}{2}N_{k}f\left(\frac{l_{k}}{4}\right)\mid E_{k,p,q}\right)\\
 & +\Pr\left(S_{N_{k}}^{(v)}<\frac{1}{2}N_{k}f\left(\frac{l_{k}}{4}\right)\mid E_{k,p,q}\right)
\end{align*}

By Condition~\ref{cond:oracle-noshape} and $\left|V_{k}-\theta^{*}\right|\geq\frac{l_{k}}{4}$,
$\mathbb{E}\left[B_{i}^{(v)}\mid E_{k,p,q}\right]\geq C\left(\frac{l_{k}}{4}\right)^{\beta}$.
On event $E_{k,p,q}$, $\left\{ B_{i}^{(v)}\right\} $ is a sequence
of i.i.d.\ random variables. Thus, On event $E_{k,p,q}$, by Lemma~\ref{lem:CheckSignificant-effectiveness},
with probability at least $1-\frac{\delta}{4\log\frac{1}{2\epsilon}}$,
CheckSignificant will return true if $\frac{1}{2}N_{k}f\left(\frac{l_{k}}{4}\right)=\Theta\left(\frac{1}{l_{k}^{2\beta}}\ln\frac{\ln1/\epsilon}{\delta}[\ln\ln]_{+}\frac{1}{l_{k}^{2\beta}}\right)$.
We have already proved $\Pr\left(S_{N_{k}}^{(v)}\leq\frac{1}{2}N_{k}f\left(\frac{l_{k}}{4}\right)\mid E_{k,p,q}\right)\leq\exp\left(-N_{k}f\left(\frac{l_{k}}{4}\right)/8\right)$.
By setting $N_{k}=\Theta\left(\frac{1}{f(l_{k}/4)}l_{k}^{-2\beta}\ln\frac{\ln1/\epsilon}{\delta}[\ln\ln]_{+}\frac{1}{l_{k}^{2\beta}}\right)$,
we can ensure $p_{1}$ is at most $\delta/2\log\frac{1}{2\epsilon}$.

Now we have proved on event $E_{k,p,q}$, if $\theta^{*}\in[L_{k},M_{k}]$,
then 
\[
\Pr\left(\text{CheckSignificant}\left(\left\{ B_{i}^{(v)}\right\} _{i=1}^{S_{N_{k}}^{(v)}},\frac{\delta}{4\log\frac{1}{2\epsilon}}\right)\text{ returns true}\mid E_{k,p,q}\right)\geq1-\frac{\delta}{2\log\frac{1}{2\epsilon}}
\]

Likewise, on event $E_{k,p,q}$, if $\theta^{*}\in[M_{k},R_{k}]$,
then 
\[
\Pr\left(\text{CheckSignificant}\left(\left\{ -B_{i}^{(u)}\right\} _{i=1}^{S_{N_{k}}^{(u)}},\frac{\delta}{4\log\frac{1}{2\epsilon}}\right)\text{ returns true}\mid E_{k,p,q}\right)\geq1-\frac{\delta}{2\log\frac{1}{2\epsilon}}
\]

Therefore, we have shown $\Pr\left(T_{k}\geq N_{k}\mid E_{k,p,q}\right)\leq\frac{\delta}{2\log\frac{1}{2\epsilon}}$
for any $k,p,q$. By (\ref{eq:label-decompose}), with probability
at least $1-\delta$, the number of samples queried is at most 
\begin{align*}
 & \sum_{k=0}^{\log\frac{1}{2\epsilon}-1}O\left(\frac{1}{f(\left(\frac{3}{4}\right)^{k}/4)}\left(\frac{3}{4}\right)^{-2\beta k}\ln\frac{\ln1/\epsilon}{\delta}[\ln\ln]_{+}\left(\frac{3}{4}\right)^{-2k\beta}\right)\\
= & O\left(\frac{\epsilon^{-2\beta}}{f(\epsilon/2)}\ln\frac{1}{\epsilon}\left(\ln\frac{1}{\delta}+\ln\ln\frac{1}{\epsilon}\right)[\ln\ln]_{+}\frac{1}{\epsilon}\right)
\end{align*}
\end{proof}

\begin{proof}[Proof of Theorem \ref{thm:algo-label-shape}]
For each $k$ in Algorithm~\ref{alg:The-active-learning} at Line
\ref{line:algo-1d:iter}, Let $l_{k}=R_{k}-L_{k}$. Let $N_{k}=\eta\frac{1}{f(l_{k}/4)}\ln\frac{4\log\frac{1}{2\epsilon}}{\delta}$,
where $\eta$ is a constant to be specified later. As with the previous
proof, it suffices to show $\Pr\left(T_{k}\geq N_{k}\mid E_{k,p,q}\right)\leq\frac{\delta}{2\log\frac{1}{2\epsilon}}$
where event $E_{k,p,q}$ is defined to be $L_{k}=p,R_{k}=q$, $T_{k}$
is the number of iterations at the loop at Line \ref{line:algo-1d:loop-n}.

On event $E_{k,p,q}$, we will show that the loop at Line \ref{line:algo-1d:loop-n}
will terminate after $n=N_{k}$ with probability at least $1-\frac{\delta}{2\log\frac{1}{2\epsilon}}$. 

Suppose for now $\theta^{*}\in[M_{k},R_{k}]$. Let $Z_{i}=A_{i}^{(u)}-A_{i}^{(m)}$,
$\zeta=\theta^{*}-M_{k}$. Clearly, $\left|Z_{i}\right|\leq1$. On
event $E_{k,p,q}$, sequence $\left\{ Z_{i}\right\} $ is i.i.d..
By Condition \ref{cond:oracle-shape}, $\mathbb{E}\left[Z_{i}\mid E_{k,p,q}\right]=f(\zeta+\frac{l_{k}}{4})-f(\zeta)\geq cf(\zeta+\frac{l_{k}}{4})$
since $\zeta\leq\frac{2}{3}(\zeta+\frac{l_{k}}{4})$. $\text{Var}\left[Z_{i}|E_{k,p,q}\right]=\text{Var}\left[A_{i}^{(u)}\mid E_{k,p,q}\right]+\text{Var}\left[A_{i}^{(m)}\mid E_{k,p,q}\right]\overset{(a)}{\leq}\mathbb{E}\left[A_{i}^{(u)}\mid E_{k,p,q}\right]+\mathbb{E}\left[A_{i}^{(m)}\mid E_{k,p,q}\right]=f(\zeta+\frac{l_{k}}{4})+f(\zeta)\overset{(b)}{\leq}2f(\zeta+\frac{l_{k}}{4})$
where (a) follows by $A_{i}\in\{0,1\}$ and (b) follows by the monotonicity
of $f$ . Thus, on event $E_{k,p,q}$, by Lemma~\ref{lem:CheckSignificant-var-effectiveness},
if we set $\eta$ sufficiently large (independent of $l_{k},\epsilon,\delta$),
then with probability at least $1-\frac{\delta}{4\log\frac{1}{2\epsilon}}$
CheckSignificant-Var$\left(\left\{ Z_{i}\right\} _{i=1}^{N_{k}},\frac{\delta}{4\log\frac{1}{2\epsilon}}\right)$
in Procedure~\ref{alg:test} returns true.

Similarly, we can show that on event $E_{k,p,q}$, if $\theta^{*}\in[L_{k},M_{k}]$,
by Lemma~\ref{lem:CheckSignificant-var-effectiveness}, with probability
at least $1-\frac{\delta}{4\log\frac{1}{2\epsilon}}$, CheckSignificant-Var$\left(\left\{ A_{i}^{(v)}-A_{i}^{(m)}\right\} _{i=1}^{N_{k}},\frac{\delta}{4\log\frac{1}{2\epsilon}}\right)$
returns true. 

Therefore, the loop at Line \ref{line:algo-1d:loop-n} will terminate
after $n=N_{k}$ with probability at least $1-\frac{\delta}{4\log\frac{1}{2\epsilon}}$
on event $E_{k,p,q}$. Therefore, with probability at least $1-\delta$,
the number of samples queried is at most $\sum_{k=0}^{\log\frac{1}{2\epsilon}-1}\frac{1}{f(\left(\frac{3}{4}\right)^{k}/4)}\ln\frac{\ln1/\epsilon}{\delta}=O\left(\frac{1}{f(\epsilon/2)}\ln\frac{1}{\epsilon}\left(\ln\frac{1}{\delta}+\ln\ln\frac{1}{\epsilon}\right)\right)$.
\end{proof}

\subsection{The d-dimensional case}

To prove the $d$-dimensional case, we only need to use a union bound
to show that with high probability all calls of Algorithm~\ref{alg:The-active-learning}
succeed, and consequently the output boundary $g$ produced by polynomial
interpolation is close to the true underlying boundary due to the
smoothness assumption of $g^{*}$.
\begin{proof}[Proof of Theorem~\ref{thm:algo-correctness-hd}]
For $q\in\left\{ 0,1,\dots,\frac{M}{\gamma}-1\right\} ^{d-1}$, define
the ``polynomial interpolation'' version of $g^{*}$ as 
\[
g_{q}^{*}(\tilde{\boldsymbol{x}})=\sum_{l\in I_{q}\cap\mathcal{L}}g^{*}(l)Q_{q,l}(\tilde{\boldsymbol{x}})
\]

Recall that we choose $M=O\left(\epsilon^{-1/\gamma}\right)$.

By Theorem~\ref{thm:algo-correctness}, each run of Algorithm~\ref{alg:The-active-learning}
at the line~\ref{line:hd:1d} of Algorithm~\ref{alg:The-active-learning-hd}
will return a $g_{l}$ such that $\left|g_{l}-g_{q}^{*}(l)\right|\leq\epsilon$
with probability at least $1-\delta/2M^{d-1}$.

\begin{align*}
 & \left\Vert g-g^{*}\right\Vert \\
= & \sum_{q\in\{0,\dots,M/\gamma-1\}^{d-1}}\left\Vert \left(g_{q}-g^{*}\right)\mathds{1}\{\boldsymbol{\tilde{x}}\in I_{q}\}\right\Vert \\
\leq & \sum_{q\in\{0,\dots,M/\gamma-1\}^{d-1}}\left\Vert \left(g_{q}-g_{q}^{*}\right)\mathds{1}\{\boldsymbol{\tilde{x}}\in I_{q}\}\right\Vert +\left\Vert \left(g_{q}^{*}-g^{*}\right)\mathds{1}\{\boldsymbol{\tilde{x}}\in I_{q}\}\right\Vert 
\end{align*}

\begin{eqnarray*}
\left\Vert \left(g_{q}^{*}-g^{*}\right)\mathds{1}\{\boldsymbol{\tilde{x}}\in I_{q}\}\right\Vert  & = & \int_{I_{q}}\left|g_{q}^{*}(\boldsymbol{\tilde{x}})-g^{*}(\boldsymbol{\tilde{x}})\right|d\boldsymbol{\tilde{x}}\\
 & = & O\left(\int_{I_{q}}M^{-\gamma}d\boldsymbol{\tilde{x}}\right)\\
 & = & O\left(M^{-\gamma-d+1}\right)
\end{eqnarray*}

The second equality follows from Lemma 3 of \cite{CN08} that $\left|g_{q}(\boldsymbol{\tilde{x}})-g^{*}(\boldsymbol{\tilde{x}})\right|=O\left(M^{-\gamma}\right)$
since $g^{*}$ is $\gamma$-Hölder smooth.

\begin{align*}
 & \left\Vert \left(g_{q}-g_{q}^{*}\right)\mathds{1}\{\boldsymbol{\tilde{x}}\in I_{q}\}\right\Vert \\
= & \sum_{l\in I_{q}\cap\mathcal{L}}\left|g_{l}-g_{q}^{*}(l)\right|\left\Vert Q_{q,l}\right\Vert \\
\leq & \sum_{l\in I_{q}\cap\mathcal{L}}\epsilon\left\Vert Q_{q}\right\Vert \\
= & O(\epsilon M^{-d+1})
\end{align*}

Therefore, overall we have $\left\Vert g-g^{*}\right\Vert \leq O\left(M^{-\gamma-d+1}+\epsilon M^{-d+1}\right)\left(\frac{M}{\gamma}\right)^{d-1}=O(\epsilon)$.
\end{proof}

\begin{proof}[Proof of Theorem~\ref{thm:algo-label-noshape--hd}]

By Theorem~\ref{thm:algo-label-noshape}, each run of Algorithm~\ref{alg:The-active-learning}
at the line~\ref{line:hd:1d} of Algorithm~\ref{alg:The-active-learning-hd}
will make $\tilde{O}\left(\frac{d}{f(\epsilon/2)}\epsilon^{-2\beta}\right)$
queries with probability at least $1-\delta/M^{d-1}$, thus by a union
bound, the total number of queries made is $\tilde{O}\left(\frac{d}{f(\epsilon/2)}\epsilon^{-2\beta-\frac{d-1}{\gamma}}\right)$
with probability at least $1-\delta$.
\end{proof}

\begin{proof}[Proof of Theorem~\ref{thm:algo-label-shape-hd}]

The proof is similar to the previous proof.
\end{proof}

\section{Proof of lower bounds}

First, we introduce some notations for this section. Given a labeler
$L$ and an active learning algorithm $\mathcal{A}$, denote by $P_{L,\mathcal{A}}^{n}$
the distribution of $n$ samples $\left\{ (X_{i},Y_{i})\right\} _{i=1}^{n}$
where $Y_{i}$ is drawn from distribution $P_{L}(Y|X_{i})$ and $X_{i}$
is drawn by the active learning algorithm based solely on the knowledge
of $\left\{ (X_{j},Y_{j})\right\} _{j=1}^{i-1}$. We will drop the
subscripts from $P_{L,\mathcal{A}}^{n}$ and $P_{\mathcal{L}}(Y|X)$
when it is clear from the context. For a sequence $\{X_{i}\}_{i=1}^{\infty}$
denote by $X^{n}$ the subsequence $\{X_{1},\dots,X_{n}\}$. 
\begin{defn}
For any distributions $P,Q$ on a countable support, define KL-divergence
as $d_{\text{KL}}\left(P,Q\right)=\sum\limits _{x}P(x)\ln\frac{P(x)}{Q(x)}$.
For two random variables $X,Y$, define the mutual information as
$I(X;Y)=d_{\text{KL}}\left(P(X,Y)\parallel P(X)P(Y)\right)$.

We will use Fano's method shown as below to prove the lower bounds.
\end{defn}
\begin{lem}
\label{lem:Fano}Let $\Theta$ be a class of parameters, and $\{P_{\theta}:\theta\in\Theta\}$
be a class of probability distributions indexed by $\Theta$ over
some sample space $\mathcal{X}$ . Let $d:\Theta\times\Theta\rightarrow\mathcal{\mathbb{R}}$
be a semi-metric. Let $\mathcal{V}=\left\{ \theta_{1},\dots,\theta_{M}\right\} \subseteq\Theta$
such that $\forall i\neq j$, $d(\theta_{i},\theta_{j})\geq2s>0$.
Let $\bar{P}=\frac{1}{M}\sum_{\theta\in\mathcal{V}}P_{\theta}$. If
$d_{\text{KL}}\left(P_{\theta}\parallel\bar{P}\right)\leq\delta$
for any $\theta\in\mathcal{V}$, then for any algorithm $\hat{\theta}$
that given a sample $X$ drawn from $P_{\theta}$ outputs $\hat{\theta}(X)\in\Theta$,
the following inequality holds:

\[
\sup_{\theta\in\Theta}P_{\theta}\left(d(\theta,\hat{\theta}(X))\geq s\right)\geq1-\frac{\delta+\ln2}{\ln M}
\]
\end{lem}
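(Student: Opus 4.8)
The plan is to reduce the estimation guarantee to a statement about an $M$-ary hypothesis test, and then apply the classical Fano inequality together with the identity expressing mutual information as an average KL-divergence to the mixture $\bar P$.

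First I would fix an arbitrary (possibly randomized) estimator $\hat\theta$ and build a test out of it. Define $\psi(X)=\arg\min_{1\le j\le M} d(\theta_j,\hat\theta(X))$, breaking ties arbitrarily. Using that $d$ is a semi-metric (so that the triangle inequality holds) together with the separation $d(\theta_i,\theta_j)\ge 2s$ for $i\neq j$, I claim that $\psi(X)\neq i$ forces $d(\theta_i,\hat\theta(X))\ge s$: writing $j=\psi(X)\neq i$, we get $2s\le d(\theta_i,\theta_j)\le d(\theta_i,\hat\theta(X))+d(\hat\theta(X),\theta_j)\le 2\,d(\theta_i,\hat\theta(X))$. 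Hence for every $i$, $P_{\theta_i}\big(d(\theta_i,\hat\theta(X))\ge s\big)\ge P_{\theta_i}(\psi(X)\neq i)$, and therefore
\[
\sup_{\theta\in\Theta} P_\theta\big(d(\theta,\hat\theta(X))\ge s\big)\;\ge\;\max_{1\le i\le M} P_{\theta_i}(\psi(X)\neq i)\;\ge\;\frac1M\sum_{i=1}^M P_{\theta_i}(\psi(X)\neq i).
\]

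Second, I would introduce a random index $J$ uniform on $\{1,\dots,M\}$ with $X\mid\{J=i\}\sim P_{\theta_i}$, so that the last average above equals the error probability $P(\psi(X)\neq J)$ of the test $\psi$. Applying Fano's inequality in its entropy form, $H(J\mid X)\le \ln 2 + P(\psi(X)\neq J)\ln M$, and combining it with $H(J\mid X)=H(J)-I(J;X)=\ln M - I(J;X)$, yields $P(\psi(X)\neq J)\ge 1-\frac{I(J;X)+\ln 2}{\ln M}$. Third, I would bound the mutual information: since the marginal law of $X$ is exactly $\bar P=\frac1M\sum_i P_{\theta_i}$, the standard identity $I(J;X)=\mathbb{E}_J\, d_{\mathrm{KL}}\big(P_{X\mid J}\,\|\,P_X\big)$ gives $I(J;X)=\frac1M\sum_{i=1}^M d_{\mathrm{KL}}(P_{\theta_i}\,\|\,\bar P)\le \delta$ by hypothesis. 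Chaining the three displays gives $\sup_{\theta\in\Theta} P_\theta\big(d(\theta,\hat\theta(X))\ge s\big)\ge 1-\frac{\delta+\ln 2}{\ln M}$, as claimed.

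I expect the only genuinely delicate point to be the estimation-to-testing reduction, namely verifying that the $2s$-separation and the triangle inequality of the semi-metric combine so that a sub-$s$ estimation error forces the induced test to be correct; the remaining two ingredients — Fano's entropy inequality and the representation of $I(J;X)$ as the average KL-divergence to $\bar P$ — are classical and amount to bookkeeping. One should also note the argument applies verbatim to randomized $\hat\theta$ by conditioning on the external randomness.
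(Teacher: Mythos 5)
Your proposal is correct and follows essentially the same route as the paper: construct the test $\psi(X)=\arg\min_j d(\theta_j,\hat\theta(X))$, reduce the estimation error to the testing error via the $2s$-separation and triangle inequality, apply Fano's inequality, and identify $I(J;X)$ with the average KL-divergence to the mixture $\bar P$. You are slightly more explicit than the paper in verifying that $\psi(X)\neq i$ forces $d(\theta_i,\hat\theta(X))\ge s$, but the argument is the same.
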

\begin{proof}
For any algorithm $\hat{\theta}$, define a test function $\hat{\Psi}:\mathcal{X}\rightarrow\{1,\dots,M\}$
such that $\hat{\Psi}(X)=\arg\min_{i\in\{1,\dots,M\}}d(\hat{\theta}(X),\theta_{i})$.
We have 

\[
\sup_{\theta\in\Theta}P_{\theta}\left(d(\theta,\hat{\theta}(X))\geq s\right)\geq\max_{\theta\in\mathcal{V}}P_{\theta}\left(d(\theta,\hat{\theta}(X))\geq s\right)\geq\max_{i\in\{1,\dots,M\}}P_{\theta_{i}}\left(\hat{\Psi}(X)\neq i\right)
\]

Let $V$ be a random variable uniformly taking values from $\mathcal{V}$,
and $X$ be drawn from $P_{V}$. By Fano's Inequality, for any test
function $\Psi:\mathcal{X}\rightarrow\{1,\dots,M\}$ 
\[
\max_{i\in\{1,\dots,M\}}P_{\theta_{i}}\left(\Psi(X)\neq i\right)\geq1-\frac{I(V;X)+\ln2}{\ln M}
\]

The desired result follows by the fact that $I(V;X)=\frac{1}{M}\sum_{\theta\in\mathcal{V}}d_{\text{KL}}\left(P_{\theta}\parallel\bar{P}\right)$.
\end{proof}

\subsection{The one dimensional case}
\begin{proof}[Proof of Theorem~\ref{thm:1d-lb-shape}]
\footnote{Actually we can use Le Cam's method to prove this one dimensional
case (which only needs to construct 2 distributions instead of 4 here),
but this proof can be generalized to the multidimensional case more
easily.} Without lose of generality, let $C=C'=1$ ($C$ is defined in Condition~\ref{cond:oracle-noshape}).
Let $\epsilon\leq\frac{1}{4}\min\left\{ \left(\frac{1}{2}\right)^{1/\beta},\left(\frac{4}{5}\right)^{1/\alpha},\frac{1}{4}\right\} $.
We will prove the desired result using Lemma~\ref{lem:Fano}. 

First, we construct $\mathcal{V}$ and $P_{\theta}$. For any $k\in\left\{ 0,1,2,3\right\} $,
let $P_{L_{k}}(Y\mid X)$ be the distribution of the labeler $L_{k}$'s
response with the ground truth $\theta_{k}=k\epsilon$: 

\begin{eqnarray*}
P_{L_{k}}\left(Y=\perp|x\right) & = & 1-\left|x-\frac{1}{2}-k\epsilon\right|^{\alpha}\\
P_{L_{k}}\left(Y=0|x\right) & = & \begin{cases}
\left(x-\frac{1}{2}-k\epsilon\right)^{\alpha}\left(1-\left(x-\frac{1}{2}-k\epsilon\right)^{\beta}\right)/2 & x>\frac{1}{2}+k\epsilon\\
\left(\frac{1}{2}+k\epsilon-x\right)^{\alpha}\left(1+\left(\frac{1}{2}+k\epsilon-x\right)^{\beta}\right)/2 & x\leq\frac{1}{2}+k\epsilon
\end{cases}\\
P_{L_{k}}\left(Y=1|x\right) & = & \begin{cases}
\left(x-\frac{1}{2}-k\epsilon\right)^{\alpha}\left(1+\left(x-\frac{1}{2}-k\epsilon\right)^{\beta}\right)/2 & x>\frac{1}{2}+k\epsilon\\
\left(\frac{1}{2}+k\epsilon-x\right)^{\alpha}\left(1-\left(\frac{1}{2}+k\epsilon-x\right)^{\beta}\right)/2 & x\leq\frac{1}{2}+k\epsilon
\end{cases}
\end{eqnarray*}

Clearly, $P_{L_{k}}$ complies with Conditions~\ref{cond:oracle-correct},
\ref{cond:oracle-noshape} and \ref{cond:oracle-shape}. 

Define $P_{k}^{n}$ to be the distribution of $n$ samples $\left\{ (X_{i},Y_{i})\right\} _{i=1}^{n}$
where $Y_{i}$ is drawn from distribution $P_{L_{k}}(Y|X_{i})$ and
$X_{i}$ is drawn by the active learning algorithm based solely on
the knowledge of $\left\{ (X_{j},Y_{j})\right\} _{j=1}^{i-1}$.

Define $\bar{P}_{L}=\frac{1}{4}\sum_{j}P_{L_{j}}$ and $\bar{P}^{n}=\frac{1}{4}\sum_{j}P_{k}^{n}$.
We take $\Theta$ to be $[0,1]$, and $d(\theta_{1},\theta_{2})=|\theta_{1}-\theta_{2}|$
in Lemma~\ref{lem:Fano}. To use Lemma~\ref{lem:Fano}, we need
to bound $d_{\text{KL}}\left(P_{k}^{n}\parallel\bar{P}^{n}\right)$
for $k\in\{0,1,2,3\}$.

For any $k\in\{0,1,2,3\}$ ,

\begin{align}
 & \text{\ensuremath{d_{\text{KL}}}}\left(P_{k}^{n}\parallel\bar{P}_{0}^{n}\right)\nonumber \\
= & \mathbb{E}_{P_{k}^{n}}\left(\ln\frac{P_{k}^{n}\left(\left\{ (X_{i},Y_{i})\right\} _{i=1}^{n}\right)}{\bar{P}^{n}\left(\left\{ (X_{i},Y_{i})\right\} _{i=1}^{n}\right)}\right)\nonumber \\
= & \mathbb{E}_{P_{k}^{n}}\left(\ln\frac{P_{k}^{n}\left(X_{1}\right)P_{k}^{n}\left(Y_{1}\mid X_{1}\right)P_{k}^{n}\left(X_{2}\mid X_{1},Y_{1}\right)\cdots P_{k}^{n}\left(Y_{n}\mid X_{1},Y_{1},\dots,X_{n}\right)}{\bar{P}^{n}\left(X_{1}\right)\bar{P}^{n}\left(Y_{1}\mid X_{1}\right)\bar{P}^{n}\left(X_{2}\mid X_{1},Y_{1}\right)\cdots\bar{P}^{n}\left(Y_{n}\mid X_{1},Y_{1},\dots,X_{n}\right)}\right)\nonumber \\
\stackrel{\text{(a)}}{=} & \mathbb{E}_{P_{k}^{n}}\left(\ln\frac{\Pi_{i=1}^{n}P_{L_{k}}\left(Y_{i}|X_{i}\right)}{\Pi_{i=1}^{n}\bar{P}_{L}\left(Y_{i}|X_{i}\right)}\right)\label{eq:kl-decompose}\\
= & \sum_{i=1}^{n}\mathbb{E}_{P_{k}^{n}}\left(\mathbb{E}_{P_{k}^{n}}\left(\ln\frac{P_{L_{k}}\left(Y_{i}|X_{i}\right)}{\bar{P}_{L}\left(Y_{i}|X_{i}\right)}\mid X^{n}\right)\right)\nonumber \\
\leq & n\max_{x\in[0,1]}\text{\ensuremath{d_{\text{KL}}}}\left(P_{L_{k}}(Y\mid x)\parallel\bar{P}_{L}(Y\mid x)\right)\nonumber 
\end{align}

(a) follows by the fact that $P_{k}^{n}\left(X_{i+1}\mid X_{1},Y_{1},\dots X_{i},Y_{i}\right)=\bar{P}^{n}\left(X_{i+1}\mid X_{1},Y_{1},\dots,X_{i},Y_{i}\right)$
since $X_{i+1}$ is drawn by the same active learning algorithm based
solely on the knowledge of $\left\{ (X_{j},Y_{j})\right\} _{j=1}^{i}$
regardless of the labeler's response distribution, and the fact that
$P_{k}^{n}\left(Y_{i}\mid X_{1},Y_{1},\dots,X_{i}\right)=P_{\mathcal{L}_{k}}\left(Y_{i}|X_{i}\right)$
and $\bar{P}^{n}\left(Y_{i}\mid X_{1},Y_{1},\dots,X_{i}\right)=\bar{P}_{L}\left(Y_{i}|X_{i}\right)$
by definition.

For any $k\in\{1,2,3\},x\in[0,1]$, 
\begin{equation}
\bar{P}_{L}(\cdot\mid x)\geq\frac{P_{L_{0}}(\cdot\mid x)+P_{L_{k}}(\cdot\mid x)}{4}\label{eq:1d-p-bar-lb}
\end{equation}

For any $k\in\{0,1,2,3\},x\in[0,1]$, $y\in\{1,-1,\perp\}$

\begin{align}
 & \left(\bar{P}_{L}(Y=y\mid x)-P_{L_{k}}(Y=y\mid x)\right)^{2}\nonumber \\
= & \left(\sum_{j}\frac{1}{4}\left(P_{L_{j}}(Y=y\mid x)-P_{L_{0}}(Y=y\mid x)\right)+\left(P_{L_{0}}(Y=y\mid\boldsymbol{x})-P_{L_{k}}(Y=y\mid x)\right)\right)^{2}\nonumber \\
\leq & \left(\frac{5}{16}\sum_{j>0}\left(P_{L_{j}}(Y=y\mid x)-P_{L_{0}}(Y=y\mid x)\right)^{2}+5\left(P_{L_{0}}(Y=y\mid x)-P_{L_{k}}(Y=y\mid x)\right)^{2}\right)\nonumber \\
\leq & 6\sum_{j>0}\left(P_{L_{j}}(Y=y\mid x)-P_{L_{0}}(Y=y\mid x)\right)^{2}\label{eq:1d-p-bar-diff-ub}
\end{align}

where the first inequality follows by $\left(\sum_{i=0}^{4}a_{i}\right)^{2}\leq5\sum_{i=0}^{4}a_{i}^{2}$
by letting $a_{j}=\frac{1}{4}\left(P_{L_{j}}(Y=y\mid x)-P_{L_{0}}(Y=y\mid x)\right)$
for $j=0,\dots,3$ and $a_{4}=P_{L_{0}}(Y=y\mid\boldsymbol{x})-P_{L_{k}}(Y=y\mid x)$,
and noting that $a_{0}=0$ under this setting.

Thus,

\begin{alignat*}{1}
 & \text{\ensuremath{d_{\text{KL}}}}\left(P_{L_{k}}(Y\mid x)\parallel\bar{P}_{L}(Y\mid x)\right)\\
\leq & \sum_{y}\frac{1}{\bar{P}_{L}(Y=y\mid\boldsymbol{x})}\left(P_{L_{k}}(Y=y\mid x)-\bar{P}_{L}(Y=y\mid x)\right)^{2}\\
\leq & 24\sum_{j>0}\sum_{y}\frac{1}{P_{L_{j}}(y\mid x)+P_{L_{0}}(y\mid x)}\left(P_{L_{j}}(Y=y\mid x)-P_{L_{0}}(Y=y\mid x)\right)^{2}\\
\leq & O(\epsilon^{\alpha})
\end{alignat*}

The first inequality follows from Lemma~\ref{lem:kl-l2}. The second
inequality follows by (\ref{eq:1d-p-bar-lb}) and (\ref{eq:1d-p-bar-diff-ub}).
The last inequality follows by applying Lemma~\ref{lem:kl-eps} to
$P_{L_{0}}(\cdot\mid x)$ and $P_{L_{j}}(\cdot\mid x)$ and the assumption
$\alpha\leq2$.

Therefore, we have $\text{\ensuremath{d_{\text{KL}}}}\left(P_{k}^{n}\parallel\bar{P}_{0}^{n}\right)=nO(\epsilon^{\alpha})$.
By setting $n=\epsilon^{-\alpha}$, we get $\text{\ensuremath{d_{\text{KL}}}}\left(P_{k}^{n}\parallel\bar{P}_{0}^{n}\right)\leq O\left(1\right)$,
and thus by Lemma~\ref{lem:Fano}, 
\[
\sup_{\theta}P_{\theta}\left(d(\theta,\hat{\theta}(X))\geq\Omega\left(\epsilon\right)\right)\ge1-\frac{O\left(1\right)+\ln2}{\ln4}=O\left(1\right)
\]
\end{proof}

\subsection{The d-dimensional case}

Again, we will use Lemma~\ref{lem:Fano} to prove the lower bounds
for $d$-dimensional cases. We first construct $\{P_{\theta}:\theta\in\Theta\}$
using a similar idea with \cite{CN08}, and then use Lemma~$\ref{lem:packing}$
to select a subset $\tilde{\Theta}\subset\Theta$ to apply Lemma~\ref{lem:Fano}.
\begin{proof}[Proof of Theorem~\ref{thm:hd-lb-noshape}]
 Again, without lose of generality, let $C=1$. Recall that for $\boldsymbol{x}=(x_{1},\dots,x_{d})\in\mathbb{R}^{d}$,
we have defined $\boldsymbol{\tilde{x}}$ to be $(x_{1},\dots,x_{d-1})$.
Define $m=\left(\frac{1}{\epsilon}\right)^{1/\gamma}$. $\mathcal{L}=\left\{ 0,\frac{1}{m},\dots,\frac{m-1}{m}\right\} ^{d-1}$,
$h(\tilde{\boldsymbol{x}})=\Pi_{i=1}^{d-1}\exp\left(-\frac{1}{1-4x_{i}^{2}}\right)\mathds{1}\left\{ |x_{i}|<\frac{1}{2}\right\} $,
$\phi_{l}(\tilde{\boldsymbol{x}})=Km^{-\gamma}h(m(\tilde{\boldsymbol{x}}-l)-\frac{1}{2})$
where $l\in\mathcal{L}$. It is easy to check $\phi_{l}(\tilde{\boldsymbol{x}})$
is $(K,\gamma)$-Hölder smooth and has bounded support $[l_{1},l_{1}+\frac{1}{m}]\times\cdots\times[l_{d-1},l_{d-1}+\frac{1}{m}]$,
which implies that for different $l_{1},l_{2}\in\mathcal{L}$, the
support of $\phi_{l_{1}}$ and $\phi_{l_{2}}$ do not intersect.

Let $\Omega=\{0,1\}^{m^{d-1}}$. For any $\boldsymbol{\omega}\in\Omega$,
define $g_{\omega}(\tilde{\boldsymbol{x}})=\sum_{l\in\mathcal{L}}\omega_{l}\phi_{l}(\tilde{\boldsymbol{x}})$.
For each $\boldsymbol{\omega}\in\Omega$, define the conditional distribution
of labeler $L_{\boldsymbol{\omega}}$'s response as follows: 

For $x_{d}\leq A$, $P_{L_{\boldsymbol{\omega}}}(y=\perp|\boldsymbol{x})=1-f(A)$,
$P_{L_{\boldsymbol{\omega}}}(y\neq\mathbb{I}(x_{d}>g_{\boldsymbol{\omega}}(\boldsymbol{\tilde{x}}))|\boldsymbol{x},y\neq\perp)=\frac{1}{2}\left(1-\left|x_{d}-g_{\boldsymbol{\omega}}(\boldsymbol{\tilde{x}})\right|^{\beta}\right)$;

For $x_{d}\geq A$, $P_{L_{\boldsymbol{\omega}}}(y=\perp|\boldsymbol{x})=1-f(x_{d})$,
$P_{L_{\boldsymbol{\omega}}}(y\neq\mathbb{I}(x_{d}>g_{\omega}(\boldsymbol{\tilde{x}}))|\boldsymbol{x},y\neq\perp)=\frac{1}{2}\left(1-x_{d}^{\beta}\right)$.

Here, $A=c\max\phi(\boldsymbol{\tilde{x}})=c'\epsilon$ for some constants
$c,c'$.

It can be easily verified that $P_{L_{\boldsymbol{\omega}}}$ satisfies
Conditions~\ref{cond:oracle-correct} and \ref{cond:oracle-noshape}.
Note that $g_{\boldsymbol{\omega}}(\boldsymbol{\tilde{x}})$ can be
seen as the underlying decision boundary for labeler $P_{L_{\boldsymbol{\omega}}}$.

Define $P_{\boldsymbol{\omega}}^{n}$ to be the distribution of $n$
samples $\left\{ (X_{i},Y_{i})\right\} _{i=1}^{n}$ where $Y_{i}$
is drawn from distribution $P_{L_{\boldsymbol{\omega}}}(Y|X_{i})$
and $X_{i}$ is drawn by the active learning algorithm based solely
on the knowledge of $\left\{ (X_{j},Y_{j})\right\} _{j=1}^{i-1}$.

By Lemma~\ref{lem:packing}, when $\epsilon$ is small enough so
that $m^{d-1}$ is large enough, there is a subset $\left\{ \boldsymbol{\omega}^{(1)},\dots,\boldsymbol{\omega}^{(M)}\right\} \subset\Omega$
such that $\left\Vert \boldsymbol{\omega}^{(i)}-\boldsymbol{\omega}^{(j)}\right\Vert _{0}\geq m^{d-1}/12$
for any $0\leq i<j\leq M$ and $M\geq2^{m^{d-1}/48}$. Define $P_{i}^{n}=P_{\boldsymbol{\omega}^{(i)}}^{n},\bar{P}^{n}=\frac{1}{M}\sum_{i=1}^{M}P_{i}^{n}$.

Next, we will apply Lemma~\ref{lem:Fano} to $\left\{ \boldsymbol{\omega}^{(1)},\dots,\boldsymbol{\omega}^{(M)}\right\} $
with $d(\boldsymbol{\omega}^{(i)},\boldsymbol{\omega}^{(j)})=\left\Vert g_{\boldsymbol{\omega}^{(i)}}-g_{\boldsymbol{\omega}^{(j)}}\right\Vert $.
We will lower-bound $d(\boldsymbol{\omega}^{(i)},\boldsymbol{\omega}^{(j)})$
and upper-bound $\text{\ensuremath{d_{\text{KL}}}}\left(P_{i}^{n}\parallel\bar{P^{n}}\right)$.

For any $1\leq i<j\leq M$ , 
\begin{align*}
 & \left\Vert g_{\boldsymbol{\omega}^{(i)}}-g_{\boldsymbol{\omega}^{(j)}}\right\Vert \\
= & \sum_{l\in\{1,\dots,m\}^{d-1}}\left|\omega_{l}^{(i)}-\omega_{l}^{(j)}\right|Km^{-\gamma-(d-1)}\left\Vert h\right\Vert \\
\geq & m^{d-1}/12*Km^{-\gamma-(d-1)}\left\Vert h\right\Vert \\
= & Km^{-\gamma}\left\Vert h\right\Vert /12\\
= & \Theta\left(\epsilon\right)
\end{align*}

By the convexity of KL-divergence, $\text{\ensuremath{d_{\text{KL}}}}\left(P_{i}^{n}\parallel\bar{P}^{n}\right)\leq\frac{1}{M}\sum_{j=1}^{M}\text{\ensuremath{d_{\text{KL}}}}\left(P_{i}^{n}\parallel P_{j}^{n}\right)$,
so it suffices to upper-bound $\text{\ensuremath{d_{\text{KL}}}}\left(P_{i}^{n}\parallel P_{j}^{n}\right)$
for any $i,j$.

For any $1<i,j\leq M$ , 
\begin{align*}
 & \text{\ensuremath{d_{\text{KL}}}}\left(P_{i}^{n}\parallel P_{j}^{n}\right)\\
\leq & n\max_{\boldsymbol{x}\in[0,1]^{d}}\text{\ensuremath{d_{\text{KL}}}}\left(P_{L_{\boldsymbol{\omega}^{(i)}}}^{n}(Y\mid\boldsymbol{x})\parallel P_{L_{\boldsymbol{\omega}^{(j)}}}^{n}(Y\mid\boldsymbol{x})\right)\\
= & n\max_{\boldsymbol{x}\in[0,1]^{d}}P_{L_{\boldsymbol{\omega}^{(i)}}}^{n}(Y\ne\perp\mid\boldsymbol{x})\text{\ensuremath{d_{\text{KL}}}}\left(P_{L_{\boldsymbol{\omega}^{(i)}}}^{n}(Y\mid\boldsymbol{x},Y\neq\perp)\parallel P_{L_{\boldsymbol{\omega}^{(j)}}}^{n}(Y\mid\boldsymbol{x},Y\neq\perp)\right)
\end{align*}

The inequality follows as (\ref{eq:kl-decompose}) in the proof of
Theorem~\ref{thm:1d-lb-shape}. The equality follows since $P_{\boldsymbol{\omega}}(y=\perp|\boldsymbol{x})$
is the same for all $\boldsymbol{\omega}\in\Omega$.

If $x_{d}\geq A$, then $P_{L_{\boldsymbol{\omega}^{(i)}}}^{n}(Y\mid\boldsymbol{x},Y\neq\perp)=P_{L_{\boldsymbol{\omega}^{(j)}}}^{n}(Y\mid\boldsymbol{x},Y\neq\perp)$,
so $\text{\ensuremath{d_{\text{KL}}}}\left(P_{L_{\boldsymbol{\omega}^{(i)}}}^{n}(Y\mid\boldsymbol{x},Y\neq\perp)\parallel P_{L_{\boldsymbol{\omega}^{(j)}}}^{n}(Y\mid\boldsymbol{x},Y\neq\perp)\right)=0$.
If $x_{d}<A$, then $P_{L_{\boldsymbol{\omega}^{(i)}}}^{n}(Y\ne\perp\mid\boldsymbol{x})=f(A)$.
Therefore, 
\[
\text{\ensuremath{d_{\text{KL}}}}\left(P_{i}^{n}\parallel P_{j}^{n}\right)\leq nf(A)\max_{\boldsymbol{x}\in[0,1]^{d}}d_{\text{KL}}\left(P_{L_{\boldsymbol{\omega}^{(i)}}}^{n}(Y\mid\boldsymbol{x},Y\neq\perp)\parallel P_{L_{\boldsymbol{\omega}^{(j)}}}^{n}(Y\mid\boldsymbol{x},Y\neq\perp)\right)
\]
.

Apply Lemma~\ref{lem:kl-l2} to $P_{L_{\boldsymbol{\omega}^{(i)}}}^{n}(Y\mid\boldsymbol{x},Y\neq\perp)$
and $P_{L_{\boldsymbol{\omega}^{(i)}}}^{n}(Y\mid\boldsymbol{x},Y\neq\perp)$,
and noting they are bounded above by a constant, we have $\max_{\boldsymbol{x}\in[0,1]^{d}}\text{\ensuremath{d_{\text{KL}}}}\left(P_{L_{\boldsymbol{\omega}^{(i)}}}^{n}(Y\mid\boldsymbol{x},Y\neq\perp)\parallel P_{L_{\boldsymbol{\omega}^{(j)}}}^{n}(Y\mid\boldsymbol{x},Y\neq\perp)\right)=O\left(A^{2\beta}\right)$.
Thus, 
\[
\text{\ensuremath{d_{\text{KL}}}}\left(P_{i}^{n}\parallel P_{j}^{n}\right)\leq nf(A)O\left(A^{2\beta}\right)=nf(c'\epsilon)O(\epsilon^{2\beta})
\]

By setting $n=\frac{1}{f(c'\epsilon)}\epsilon^{-2\beta-\frac{d-1}{\gamma}}$,
we get $\text{\ensuremath{d_{\text{KL}}}}\left(P_{i}^{n}\parallel P_{j}^{n}\right)\leq O\left(\epsilon^{-\frac{d-1}{\gamma}}\right)$.
The desired results follows by Lemma~\ref{lem:Fano}.
\end{proof}

The proof of Theorem~\ref{thm:hd-lb-shape} follows the same structure.
\begin{proof}[Proof of Theorem~\ref{thm:hd-lb-shape}]
 As in the proof of Theorem~\ref{thm:hd-lb-noshape}, let $C=C'=1$,
and define $m=\left(\frac{1}{\epsilon}\right)^{1/\gamma}$. $\mathcal{L}=\left\{ 0,\frac{1}{m},\dots,\frac{m-1}{m}\right\} ^{d-1}$,
$h(\tilde{\boldsymbol{x}})=\Pi_{i=1}^{d-1}\exp\left(-\frac{1}{1-4x_{i}^{2}}\right)\mathds{1}\left\{ |x_{i}|<\frac{1}{2}\right\} $,
$\phi_{l}(\tilde{\boldsymbol{x}})=Km^{-\gamma}h(m(\tilde{\boldsymbol{x}}-l)-\frac{1}{2})$
where $l\in\mathcal{L}$. Let $\Omega=\{0,1\}^{m^{d-1}}$. For any
$\boldsymbol{\omega}\in\Omega$, define $g_{\omega}(\tilde{\boldsymbol{x}})=\frac{1}{2}+\sum_{l\in\mathcal{L}}\omega_{l}\phi_{l}(\tilde{\boldsymbol{x}})$,
which can be seen as a decision boundary. $A=\max\phi(\boldsymbol{\tilde{x}})=c'\epsilon$
for some constants $c'$.

Let $g_{+}(\tilde{\boldsymbol{x}})=g_{(1,1,\dots,1)}(\tilde{\boldsymbol{x}})=\sum_{l\in\mathcal{L}}\phi_{l}(\tilde{\boldsymbol{x}})$,
$g_{-}(\tilde{\boldsymbol{x}})=g_{(0,0,\dots,0)}(\tilde{\boldsymbol{x}})=0$.
In other words, $g_{+}$ is the ``highest'' boundary, and $g_{-}$
is the ``lowest'' boundary.

For each $\boldsymbol{\omega}\in\Omega$, define the conditional distribution
of labeler $L_{\boldsymbol{\omega}}$'s response as follows: 

\[
P_{L_{\boldsymbol{\omega}}}(y=\perp|\boldsymbol{x})=1-\left|x_{d}-g_{\boldsymbol{\omega}}(\boldsymbol{\tilde{x}})\right|^{\alpha}
\]

\[
P_{L_{\boldsymbol{\omega}}}(y\neq\mathbb{I}(x_{d}>g_{\boldsymbol{\omega}}(\boldsymbol{\tilde{x}}))|\boldsymbol{x},y\neq\perp)=\frac{1}{2}\left(1-\left|x_{d}-g_{\boldsymbol{\omega}}(\boldsymbol{\tilde{x}})\right|^{\beta}\right)
\]

It can be easily verified that $P_{L_{\boldsymbol{\omega}}}$ satisfies
Conditions~\ref{cond:oracle-correct}, \ref{cond:oracle-noshape},
and \ref{cond:oracle-shape}.

Let $P_{+}(\cdot\mid\boldsymbol{x})=P_{L_{(1,1,\dots,1)}}(\cdot\mid\boldsymbol{x})$,
$P_{-}(\cdot\mid\boldsymbol{x})=P_{L_{(0,0,\dots,0)}}(\cdot\mid\boldsymbol{x})$.
By the construction of $g$, for any $\boldsymbol{x}\in[0,1]^{d}$,
any $\boldsymbol{\omega}\in\Omega$, $P_{L_{\boldsymbol{\omega}}}(\cdot\mid\boldsymbol{x})$
equals either $P_{+}(\cdot\mid\boldsymbol{x})$ or $P_{-}(\cdot\mid\boldsymbol{x})$. 

Define $P_{\boldsymbol{\omega}}^{n}$ to be the distribution of $n$
samples $\left\{ (X_{i},Y_{i})\right\} _{i=1}^{n}$ where $Y_{i}$
is drawn from distribution $P_{L_{\boldsymbol{\omega}}}(Y|X_{i})$
and $X_{i}$ is drawn by the active learning algorithm based solely
on the knowledge of $\left\{ (X_{j},Y_{j})\right\} _{j=1}^{i-1}$. 

By Lemma~\ref{lem:packing}, when $\epsilon$ is small enough so
that $m^{d-1}$ is large enough,, there is a subset $\Omega'=\left\{ \boldsymbol{\omega}^{(1)},\dots,\boldsymbol{\omega}^{(M)}\right\} \subset\Omega$
such that (i) (well-separated) $\left\Vert \boldsymbol{\omega}^{(i)}-\boldsymbol{\omega}^{(j)}\right\Vert _{0}\geq m^{d-1}/12$
for any $0\leq i<j\leq M$, $M\geq2^{m^{d-1}/48}$; and (ii) (well-balanced)
for any $j=1,\dots,m^{d-1}$, $\frac{1}{24}\leq\frac{1}{M}\sum_{i=1}^{M}\boldsymbol{\omega}_{j}^{(i)}\leq\frac{3}{24}$
. 

Define $P_{i}^{n}=P_{\boldsymbol{\omega}^{(i)}}^{n},\bar{P}^{n}=\frac{1}{M}\sum_{i=1}^{M}P_{i}^{n}$.
Define $P_{L_{i}}=P_{L_{\boldsymbol{\omega}^{(i)}}}$, $\bar{P}_{L}=\frac{1}{M}\sum_{i=1}^{M}P_{L_{i}}$.
By the well-balanced property, for any $\boldsymbol{x}\in[0,1]^{d}$,
$\bar{P}_{L}(\cdot\mid\boldsymbol{x})$ is between $\frac{1}{24}P_{+}(\cdot\mid\boldsymbol{x})+\frac{23}{24}P_{-}(\cdot\mid\boldsymbol{x})$
and $\frac{3}{24}P_{+}(\cdot\mid\boldsymbol{x})+\frac{21}{24}P_{-}(\cdot\mid\boldsymbol{x})$.
Therefore 
\begin{equation}
\bar{P}_{L}(\cdot\mid\boldsymbol{x})\geq\frac{1}{24}\left(P_{+}(\cdot\mid\boldsymbol{x})+P_{-}(\cdot\mid\boldsymbol{x})\right)\label{eq:hd-p-bar-lb}
\end{equation}

Moreover, since $P_{L_{i}}(\cdot\mid\boldsymbol{x})$ can only take
$P_{+}(\cdot\mid\boldsymbol{x})$ or $P_{-}(\cdot\mid\boldsymbol{x})$
for any $\boldsymbol{x}$, 
\begin{equation}
\left|P_{L_{i}}(\cdot\mid\boldsymbol{x})-\bar{P}_{\mathcal{L}}(\cdot\mid\boldsymbol{x})\right|\leq\left|P_{+}(\cdot\mid\boldsymbol{x})-P_{-}(\cdot\mid\boldsymbol{x})\right|\label{eq:hd-p-bar-diff-ub}
\end{equation}

Next, we will apply Lemma~\ref{lem:Fano} to $\left\{ \boldsymbol{\omega}^{(1)},\dots,\boldsymbol{\omega}^{(M)}\right\} $
with $d(\boldsymbol{\omega}^{(i)},\boldsymbol{\omega}^{(j)})=\left\Vert g_{\boldsymbol{\omega}^{(i)}}-g_{\boldsymbol{\omega}^{(j)}}\right\Vert $.
We already know from the proof of Theorem~\ref{thm:hd-lb-noshape}
$\left\Vert g_{\boldsymbol{\omega}^{(i)}}-g_{\boldsymbol{\omega}^{(j)}}\right\Vert =\Omega\left(\epsilon\right)$.

For any $0<i\leq M$ , $\text{\ensuremath{d_{\text{KL}}}}\left(P_{i}^{n}\parallel\bar{P}_{0}^{n}\right)\leq n\max_{\boldsymbol{x}\in[0,1]^{d}}\text{\ensuremath{d_{\text{KL}}}}\left(P_{L_{i}}(Y\mid\boldsymbol{x})\parallel\bar{P}_{L}(Y\mid\boldsymbol{x})\right)$.
For any $\boldsymbol{x}\in[0,1]^{d}$, 
\begin{alignat*}{1}
 & \text{\ensuremath{d_{\text{KL}}}}\left(P_{L_{i}}(Y\mid\boldsymbol{x})\parallel\bar{P}_{L}(Y\mid\boldsymbol{x})\right)\\
\leq & \sum_{y}\frac{1}{\bar{P}_{L}(Y=y\mid\boldsymbol{x})}\left(P_{L_{i}}(Y=y\mid\boldsymbol{x})-\bar{P}_{L}(Y=y\mid\boldsymbol{x})\right)^{2}\\
\leq & \sum_{y}\frac{24}{P_{+}(y\mid\boldsymbol{x})+P_{-}(y\mid\boldsymbol{x})}\left(P_{+}(Y=y\mid\boldsymbol{x})-P_{-}(Y=y\mid\boldsymbol{x})\right)^{2}\\
\leq & O(A^{\alpha})
\end{alignat*}

The first inequality follows from Lemma~\ref{lem:kl-l2}. The second
inequality follows by (\ref{eq:hd-p-bar-lb}) and (\ref{eq:hd-p-bar-diff-ub}).
The last inequality follows by applying Lemma~\ref{lem:kl-eps} to
$P_{+}(\cdot\mid\boldsymbol{x})$ and $P_{-}(\cdot\mid\boldsymbol{x})$,
setting the $\epsilon$ in Lemma~\ref{lem:kl-eps} to be $g_{\boldsymbol{\omega}}(\boldsymbol{\tilde{x}})$,
and using $g_{\boldsymbol{\omega}}(\boldsymbol{\tilde{x}})\leq A$
and the assumption $\alpha\leq2$.

Therefore, we have 

\[
\text{\ensuremath{d_{\text{KL}}}}\left(P_{i}^{n}\parallel P_{0}^{n}\right)\leq nO\left(A^{\alpha}\right)=nO(\epsilon^{\alpha})
\]

By setting $n=\epsilon^{-\alpha-\frac{d-1}{\gamma}}$, we get $\text{\ensuremath{d_{\text{KL}}}}\left(P_{i}^{n}\parallel P_{0}^{n}\right)\leq O\left(\epsilon^{-\frac{d-1}{\gamma}}\right)$
. Thus by Lemma \ref{lem:Fano}, 
\[
\sup_{\theta}P_{\theta}\left(d(\theta,\hat{\theta}(X))\geq\Omega\left(\epsilon\right)\right)\ge1-\frac{O\left(\epsilon^{-\frac{d-1}{\gamma}}\right)+\ln2}{\epsilon^{-\frac{d-1}{\gamma}}/48}=O\left(1\right)
\]
, from which the desired result follows.
\end{proof}

\section{Technical lemmas}

\subsection{Concentration bounds}

In this subsection, we define $Y_{1},Y_{2},\dots$ to be a sequence
of i.i.d.\ random variables. Assume $Y_{1}\in[-2,2]$, $\mathbb{E}Y_{1}=0$,
$\text{Var}(Y_{1})=\sigma^{2}\leq4$. Define $V_{n}=\frac{n}{n-1}\left(\sum_{i=1}^{n}Y_{i}^{2}-\frac{1}{n}\left(\sum_{i=1}^{n}Y_{i}\right)^{2}\right)$.
It is easy to check $\mathbb{E}V_{n}=n\sigma^{2}$.

We need following two results from \cite{ramdas2015sequential}
\begin{lem}
\label{lem:uniform-bernstein}(\cite{ramdas2015sequential}, Theorem
2) Take any $0<\delta<1$. Then there is an absolute constant $D_{0}$
such that with probability at least $1-\delta$, for all $n$ simultaneously, 

\[
\left|\sum_{i=1}^{n}Y_{i}\right|\leq D_{0}\left(1+\ln\frac{1}{\delta}+\sqrt{n\sigma^{2}\left[\ln\ln\right]_{+}(n\sigma^{2})+n\sigma^{2}\ln\frac{1}{\delta}}\right)
\]
\end{lem}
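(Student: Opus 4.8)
Since the statement is quoted verbatim as Theorem~2 of \cite{ramdas2015sequential}, the proof in the paper is simply a pointer to that reference; what follows is a sketch of the route one would take to prove such an anytime (uniform-in-$n$) Bernstein inequality directly, since essentially the same scheme also yields the empirical-variance version in Proposition~\ref{prop:uniform-empirical-berstein}. The plan is a \emph{geometric peeling} of the time axis combined with a maximal Bernstein/Freedman bound on each block.

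First I would fix a base $b>1$ (say $b=2$) and stratify the indices by the variance-scaled clock: let $E_{0}=\{n:n\sigma^{2}<b\}$ and $E_{j}=\{n: b^{j}\le n\sigma^{2}<b^{j+1}\}$ for $j\ge1$. On a single block $E_{j}$ the quantity $n\sigma^{2}$ changes by at most a factor $b$, so applying the classical Bernstein estimate through a Ville/Doob-type maximal inequality for the associated exponential supermartingale (whose predictable variation is exactly $n\sigma^{2}$) gives, with probability at least $1-\delta_{j}$ and simultaneously for all $n\in E_{j}$,
\[
\Bigl|\sum_{i=1}^{n}Y_{i}\Bigr|\le C\Bigl(\sqrt{b^{j+1}\ln\tfrac{1}{\delta_{j}}}+\ln\tfrac{1}{\delta_{j}}\Bigr)\le C'\Bigl(\sqrt{n\sigma^{2}\ln\tfrac{1}{\delta_{j}}}+\ln\tfrac{1}{\delta_{j}}\Bigr),
\]
the additive $\ln(1/\delta_{j})$ term coming from the bounded increments $|Y_{i}|\le 2$. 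Next I would pick the per-block confidence levels to sum to $\delta$, e.g.\ $\delta_{j}=\tfrac{6}{\pi^{2}}\delta/(j+1)^{2}$, so that a union bound over $j\ge0$ makes all the block estimates hold at once with probability at least $1-\delta$.

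The final step is to collapse the block-wise estimate into the claimed form. For $n\in E_{j}$ we have $j\le\log_{b}(n\sigma^{2})+1$, hence $\ln(1/\delta_{j})=\ln(1/\delta)+2\ln(j+1)+\ln(\pi^{2}/6)\le C''\bigl(\ln(1/\delta)+[\ln\ln]_{+}(n\sigma^{2})\bigr)$; substituting this and using $\sqrt{a+c}\le\sqrt{a}+\sqrt{c}$ produces a bound of the form $D_{0}\bigl(1+\ln(1/\delta)+\sqrt{n\sigma^{2}[\ln\ln]_{+}(n\sigma^{2})+n\sigma^{2}\ln(1/\delta)}\bigr)$. The small-index regime (block $E_{0}$, where $[\ln\ln]_{+}$ has already bottomed out at its floor) is handled by the additive $1+\ln(1/\delta)$ term together with the crude estimate $|\sum_{i\le n}Y_{i}|\le 2n$. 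I expect the main difficulty to be obtaining the iterated-logarithm factor \emph{inside} the square root and multiplying only $n\sigma^{2}$, rather than an extra $\sqrt{\ln\ln n}$ multiplying the whole deviation term; this is exactly why the peeling must be run in the clock $n\sigma^{2}$ and why a maximal inequality on each block (so that one union-bounds over only $O(\log n)$ blocks instead of over all $n$) is essential.
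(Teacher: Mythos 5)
You correctly observe that the paper offers no proof of this lemma — it is imported verbatim as Theorem~2 of \cite{ramdas2015sequential} — so there is nothing in the paper itself to compare against. Your geometric-peeling sketch is a sound outline of how the cited reference (and the finite-time LIL concentration literature generally) establishes such bounds: stratifying in the variance clock $n\sigma^{2}$ rather than in $n$ is exactly what makes the iterated-logarithm factor land as $[\ln\ln]_{+}(n\sigma^{2})$ inside the square root multiplying only $n\sigma^{2}$; a Ville/Doob maximal inequality for the Bernstein exponential supermartingale with a per-block tuning $\lambda_{j}\asymp\sqrt{\ln(1/\delta_{j})/b^{j}}$ gives uniformity inside each block; and the union bound over blocks with $\delta_{j}\asymp\delta/(j+1)^{2}$ collapses $\ln(1/\delta_{j})$ into $O\!\left(\ln(1/\delta)+[\ln\ln]_{+}(n\sigma^{2})\right)$. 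The one small step worth making explicit is the additive correction term that survives after this collapse: you also need $[\ln\ln]_{+}(n\sigma^{2})\le 1+\sqrt{n\sigma^{2}\,[\ln\ln]_{+}(n\sigma^{2})}$, which holds, to fold the leftover $\ln(1/\delta_{j})$ piece into the claimed $D_{0}\bigl(1+\ln(1/\delta)+\sqrt{\,\cdot\,}\bigr)$ form; otherwise the sketch matches what the reference does.
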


\begin{lem}
\label{lem:var-sumsqr}(\cite{ramdas2015sequential}, Lemma 3) Take
any $0<\delta<1$. Then there is an absolute constant $K_{0}$ such
that with probability at least $1-\delta$, for all $n$ simultaneously, 

\[
n\sigma^{2}\leq K_{0}\left(1+\ln\frac{1}{\delta}+\sum_{i=1}^{n}Y_{i}^{2}\right)
\]
\end{lem}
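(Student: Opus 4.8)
The plan is to derive Lemma~\ref{lem:var-sumsqr} from the variance-adaptive uniform Bernstein bound of Lemma~\ref{lem:uniform-bernstein}, applied not to $Y_i$ itself but to the centered second-moment process, and then to solve the resulting self-bounding inequality for $n\sigma^2$. First I would introduce the auxiliary variables $W_i=\tfrac12(\sigma^2-Y_i^2)$. These are i.i.d.\ with $\mathbb{E}W_i=0$; since $\sigma^2,Y_i^2\in[0,4]$ we have $|W_i|\le 2$; and using $Y_i^4\le 4Y_i^2$ (because $|Y_i|\le 2$) we get $\mathrm{Var}(W_i)=\tfrac14(\mathbb{E}Y_i^4-\sigma^4)\le\tfrac14\mathbb{E}Y_i^4\le\sigma^2\le 4$. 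Thus the $W_i$ satisfy exactly the hypotheses of Lemma~\ref{lem:uniform-bernstein} with a variance parameter $\sigma_W^2:=\mathrm{Var}(W_i)\le\sigma^2$. The scaling by $\tfrac12$ is the one genuinely necessary choice here: the raw variable $Y_i^2-\sigma^2$ has range of length $4$ and need not lie in $[-2,2]$, so some rescaling is forced.

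Next I would apply Lemma~\ref{lem:uniform-bernstein} to $\{W_i\}$: on an event of probability at least $1-\delta$, for all $n$ simultaneously, $\bigl|\sum_{i=1}^n W_i\bigr|\le D_0\bigl(1+\ln\tfrac1\delta+\sqrt{n\sigma_W^2[\ln\ln]_+(n\sigma_W^2)+n\sigma_W^2\ln\tfrac1\delta}\bigr)$. The right-hand side is nondecreasing in the variance parameter, and $n\sigma_W^2\le n\sigma^2$, so I may replace $\sigma_W^2$ by $\sigma^2$ there. Since $2\sum_{i=1}^n W_i=n\sigma^2-\sum_{i=1}^n Y_i^2$, writing $t=n\sigma^2$ and $S=\sum_{i=1}^n Y_i^2\ge 0$ this gives, for all $n$ on the same event, $t\le S+2D_0(1+\ln\tfrac1\delta)+2D_0\sqrt{t\bigl([\ln\ln]_+ t+\ln\tfrac1\delta\bigr)}$. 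It is precisely because Lemma~\ref{lem:uniform-bernstein} is variance-adaptive that $t$ appears under the square root (the cruder Proposition~\ref{prop:uniform-berstein} would leave an uncontrolled $\sqrt n$), which is what makes the inequality self-bounding.

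Finally I would solve this inequality. By subadditivity of $\sqrt{\cdot}$ and AM--GM, $2D_0\sqrt{t[\ln\ln]_+ t}\le\tfrac{t}{8}+8D_0^2[\ln\ln]_+ t$ and $2D_0\sqrt{t\ln\tfrac1\delta}\le\tfrac{t}{8}+8D_0^2\ln\tfrac1\delta$. Since $[\ln\ln]_+ t=o(t)$, there is an absolute constant $t_0$ (depending only on $D_0$) with $8D_0^2[\ln\ln]_+ t\le\tfrac{t}{8}$ whenever $t\ge t_0$; for such $t$ the displayed inequality yields $\tfrac{5t}{8}\le S+(2D_0+8D_0^2)(1+\ln\tfrac1\delta)$, i.e.\ $t\le\tfrac85(1+2D_0+8D_0^2)(1+\ln\tfrac1\delta+S)$. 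For $t<t_0$ the bound $t\le t_0(1+\ln\tfrac1\delta+S)$ is immediate because $1+\ln\tfrac1\delta\ge 1$ and $S\ge 0$. Taking $K_0=\max\{t_0,\tfrac85(1+2D_0+8D_0^2)\}$, an absolute constant, gives $n\sigma^2\le K_0(1+\ln\tfrac1\delta+\sum_{i=1}^n Y_i^2)$ for every $n$ on the probability-$(1-\delta)$ event, which is the claim. The only step requiring any thought is the first one (getting the auxiliary process into $[-2,2]$ with variance bounded by $\sigma^2$ and recognizing that the variance-adaptive bound is essential); the self-bounding manipulation in the last step is routine, with the one mild subtlety being the use of $[\ln\ln]_+ t=o(t)$ to produce the threshold $t_0$.
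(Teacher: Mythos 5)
Your proof is correct, but note that the paper itself offers no proof of this lemma: it is stated verbatim as a citation of Lemma~3 of \cite{ramdas2015sequential}, alongside Lemma~\ref{lem:uniform-bernstein} (their Theorem~2), and neither is re-derived. What you have done is show that the second cited result can be deduced from the first, which is a genuinely self-contained argument the paper does not attempt.

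Your derivation checks out. The auxiliary variables $W_i=\tfrac12(\sigma^2-Y_i^2)$ are indeed i.i.d., mean zero, bounded in $[-2,2]$ (both $\sigma^2$ and $Y_i^2$ lie in $[0,4]$), and have $\mathrm{Var}(W_i)=\tfrac14(\mathbb{E}Y_i^4-\sigma^4)\le\tfrac14\mathbb{E}Y_i^4\le\sigma^2\le 4$ via $Y_i^4\le 4Y_i^2$, so they satisfy the hypotheses of Lemma~\ref{lem:uniform-bernstein}. The monotonicity of $x\mapsto x\,[\ln\ln]_+ x + x\ln\tfrac1\delta$ justifies replacing $\mathrm{Var}(W_i)$ by $\sigma^2$ in the bound, and the identity $2\sum_i W_i=n\sigma^2-\sum_i Y_i^2$ yields the self-bounding inequality in $t=n\sigma^2$. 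The AM--GM splitting and the two-case resolution (using $[\ln\ln]_+ t=o(t)$ to define $t_0$, and the trivial bound $t<t_0\le t_0(1+\ln\tfrac1\delta+S)$ otherwise) are all sound, and the constant $K_0$ you extract is absolute. The one place worth flagging is that you silently rely on the fact that the bound in Lemma~\ref{lem:uniform-bernstein} is stated in terms of the \emph{true} variance of the underlying sequence (so that the variance of $W_i$, not of $Y_i$, is what appears when the lemma is applied to $W_i$); this is indeed what the lemma says, so the step is fine, but it is the load-bearing reading of the hypothesis.
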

We note that Proposition~\ref{prop:uniform-berstein} is immediate
from Lemma~\ref{lem:uniform-bernstein} since $\text{Var}(Y_{i})\leq4$.

\begin{lem}
\label{lem:var-leq-emp}Take any $0<\delta<1$. Then there is an absolute
constant $K_{3}$ such that with probability at least $1-\delta$,
for all $n\geq\ln\frac{1}{\delta}$ simultaneously, 

\[
n\sigma^{2}\leq K_{3}\left(1+\ln\frac{1}{\delta}+V_{n}\right)
\]
\end{lem}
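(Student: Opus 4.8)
The plan is to bootstrap from Lemma~\ref{lem:var-sumsqr}, which already controls $n\sigma^{2}$ by $K_{0}\bigl(1+\ln\frac{1}{\delta}+\sum_{i=1}^{n}Y_{i}^{2}\bigr)$, and to replace the empirical sum of squares $\sum_{i=1}^{n}Y_{i}^{2}$ by $V_{n}$ at the cost of a lower-order correction. The elementary identity $\frac{n-1}{n}V_{n}=\sum_{i=1}^{n}Y_{i}^{2}-\frac{1}{n}\bigl(\sum_{i=1}^{n}Y_{i}\bigr)^{2}$ gives $\sum_{i=1}^{n}Y_{i}^{2}\le V_{n}+\frac{1}{n}\bigl(\sum_{i=1}^{n}Y_{i}\bigr)^{2}$, so the task reduces to bounding $\frac{1}{n}\bigl(\sum_{i=1}^{n}Y_{i}\bigr)^{2}$. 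For this I would invoke Lemma~\ref{lem:uniform-bernstein} (applied with confidence $\delta/2$): on an event of probability $1-\delta/2$, $\bigl|\sum_{i=1}^{n}Y_{i}\bigr|\le D_{0}\bigl(1+\ln\frac{2}{\delta}+\sqrt{n\sigma^{2}[\ln\ln]_{+}(n\sigma^{2})+n\sigma^{2}\ln\frac{2}{\delta}}\,\bigr)$ for all $n$; squaring via $(a+b)^{2}\le 2a^{2}+2b^{2}$, dividing by $n$, and using $\sigma^{2}\le 4$ together with the hypothesis $n\ge\ln\frac{1}{\delta}$ (and $n\ge 1$, so $n\ge\frac{1}{2}(1+\ln\frac{1}{\delta})$, which linearizes the stray $\frac{1}{n}(1+\ln\frac{2}{\delta})^{2}$ term) yields $\frac{1}{n}\bigl(\sum_{i=1}^{n}Y_{i}\bigr)^{2}\le C_{1}(1+\ln\frac{1}{\delta})+C_{2}[\ln\ln]_{+}(n\sigma^{2})$ for absolute constants $C_{1},C_{2}$.

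Combining this with Lemma~\ref{lem:var-sumsqr} applied with confidence $\delta/2$ and a union bound, on an event of probability $1-\delta$ we obtain, for all $n\ge\ln\frac{1}{\delta}$ simultaneously, an inequality of the form $t\le C_{3}\bigl(1+\ln\frac{1}{\delta}+V_{n}\bigr)+C_{4}[\ln\ln]_{+}(t)$, where $t:=n\sigma^{2}$ and $C_{3},C_{4}$ are absolute. The final step is to solve this self-referential inequality for $t$. Since $[\ln\ln]_{+}(t)/t\to 0$, there is an absolute constant $t_{0}$ with $C_{4}[\ln\ln]_{+}(t)\le t/2$ for all $t\ge t_{0}$; if $t\ge t_{0}$ this gives $t/2\le C_{3}(1+\ln\frac{1}{\delta}+V_{n})$, hence $t\le 2C_{3}(1+\ln\frac{1}{\delta}+V_{n})$, while if $t<t_{0}$ then trivially $t<t_{0}\le t_{0}(1+\ln\frac{1}{\delta}+V_{n})$ because $1+\ln\frac{1}{\delta}+V_{n}\ge 1$. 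Taking $K_{3}=\max\{t_{0},2C_{3}\}$ finishes the proof, and the degenerate case $\sigma^{2}=0$ (where $Y_{i}\equiv 0$ and both sides vanish) is handled separately.

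The only mildly delicate point is the bookkeeping that keeps every constant \emph{absolute} and every term \emph{linear} in $1+\ln\frac{1}{\delta}$: in particular one must use $n\ge\ln\frac{1}{\delta}$ (and integrality $n\ge 1$) to control $\frac{1}{n}(1+\ln\frac{1}{\delta})^{2}$, and one must check that the union-bound substitution $\delta\mapsto\delta/2$ inflates logarithmic terms only by an additive $\ln 2$, which is absorbed since $1+\ln\frac{2}{\delta}\le(1+\ln 2)(1+\ln\frac{1}{\delta})$. Beyond that, the argument is the routine ``solve $t\le a+b[\ln\ln]_{+}(t)$'' manipulation and presents no real obstacle; note in particular that we never need to analyze $V_{n}$ directly, since Lemma~\ref{lem:var-sumsqr} has already done the empirical-Bernstein-type work on $\sum_{i=1}^{n}Y_{i}^{2}$ and the passage from $\sum_{i=1}^{n}Y_{i}^{2}$ to $V_{n}$ is purely algebraic up to the controlled term $\frac{1}{n}\bigl(\sum_{i=1}^{n}Y_{i}\bigr)^{2}$.
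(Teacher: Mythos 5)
Your proposal is correct and follows essentially the same route as the paper's own proof: bound $n\sigma^{2}$ via Lemma~\ref{lem:var-sumsqr}, convert $\sum Y_i^2$ to $V_n$ plus $\frac{1}{n}(\sum Y_i)^2$, control the latter with Lemma~\ref{lem:uniform-bernstein} (using $n\ge\ln\frac{1}{\delta}$ and $\sigma^2\le 4$), and finally resolve the self-referential inequality $t\le a + b[\ln\ln]_+(t)$ by picking a threshold beyond which $b[\ln\ln]_+(t)\le t/2$. The only cosmetic difference is that you invoke $(a+b)^2\le 2a^2+2b^2$ instead of expanding the square in full and bounding the cross term, which is an equivalent shortcut.
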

\begin{proof}
By Lemma~\ref{lem:var-sumsqr}, with probability at least $1-\delta/2$,
for all $n$,
\[
n\sigma^{2}\leq K_{0}\left(\sum_{i=1}^{n}Y_{i}^{2}+\ln\frac{2}{\delta}+1\right)=K_{0}\left(\frac{n-1}{n}V_{n}+\frac{1}{n}\left(\sum_{i=1}^{n}Y_{i}\right)^{2}+\ln\frac{2}{\delta}+1\right)
\]

By Lemma ~\ref{lem:uniform-bernstein}, with probability at least
$1-\delta/2$, for all $n$,
\begin{eqnarray*}
\frac{1}{n}\left(\sum_{i=1}^{n}Y_{i}\right)^{2} & < & \frac{1}{n}\left(D_{0}\left(1+\ln\frac{2}{\delta}+\sqrt{n\sigma^{2}\left[\ln\ln\right]_{+}(n\sigma^{2})+n\sigma^{2}\ln\frac{2}{\delta}}\right)\right)^{2}\\
 & = & \frac{D_{0}^{2}}{n}\left(1+\ln\frac{2}{\delta}\right)^{2}+D_{0}^{2}\sigma^{2}\left[\ln\ln\right]_{+}(n\sigma^{2})+D_{0}^{2}\sigma^{2}\ln\frac{2}{\delta}\\
 &  & +2D_{0}^{2}\left(1+\ln\frac{2}{\delta}\right)\sqrt{\frac{\sigma^{2}\left[\ln\ln\right]_{+}(n\sigma^{2})+\sigma^{2}\ln\frac{2}{\delta}}{n}}\\
 & \leq & K_{1}\left(1+\ln\frac{1}{\delta}+\left[\ln\ln\right]_{+}(n\sigma^{2})\right)
\end{eqnarray*}

for some absolute constant $K_{1}$. The last inequality follows by
$n\geq\ln\frac{1}{\delta}$.

Thus, by a union bound, with probability at least $1-\delta$, for
all $n$, $n\sigma^{2}\leq K_{0}V_{n}+K_{0}(K_{1}+2)\ln\frac{1}{\delta}+K_{0}K_{1}\left[\ln\ln\right]_{+}(n\sigma^{2})+K_{0}(K_{1}+3)$. 

Let $K_{2}>0$ be an absolute constant such that $\forall x\geq K_{2}$,
$K_{0}K_{1}\left[\ln\ln\right]_{+}x\leq\frac{x}{2}$.

Now if $n\sigma^{2}\geq K_{2}$, then $n\sigma^{2}\leq K_{0}V_{n}+K_{0}(K_{1}+2)\ln\frac{1}{\delta}+\frac{n\sigma^{2}}{2}+K_{0}(K_{1}+3)$,
and thus 
\begin{equation}
n\sigma^{2}\leq2K_{0}V_{n}+2K_{0}(K_{1}+2)\ln\frac{1}{\delta}+2K_{0}(K_{1}+3)+K_{2}\label{eq:lem-var-leq-emp}
\end{equation}

If $n\sigma^{2}\leq K_{2}$, clearly (\ref{eq:lem-var-leq-emp}) holds.
This concludes the proof.
\end{proof}
We note that Proposition~\ref{prop:uniform-empirical-berstein} is
immediate by applying above lemma to Lemma~\ref{lem:uniform-bernstein}.
\begin{lem}
\label{lem:single-var-bound}Take any $\delta,n>0$. Then with probability
at least $1-\delta$,

\[
V_{n}\leq4n\sigma^{2}+8\ln\frac{1}{\delta}
\]
\end{lem}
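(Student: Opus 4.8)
The plan is to reduce the claimed upper bound on $V_n$ to a one-sided Chernoff estimate for $\sum_{i=1}^n Y_i^2$. We may assume $\delta<1$ (otherwise the statement is vacuous) and $n\geq 2$ (for $n=1$ we have $V_1=0$ and there is nothing to prove), so that $\frac{n}{n-1}\leq 2$. Dropping the nonpositive term $-\frac1n\big(\sum_{i=1}^nY_i\big)^2$ in the definition of $V_n$ gives the deterministic inequality
\[
V_n=\frac{n}{n-1}\Big(\sum_{i=1}^nY_i^2-\tfrac1n\big(\textstyle\sum_{i=1}^nY_i\big)^2\Big)\leq 2\sum_{i=1}^nY_i^2 ,
\]
so it suffices to prove that $\Pr\big(\sum_{i=1}^nY_i^2\geq 2n\sigma^2+4\ln\tfrac1\delta\big)\leq\delta$.

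To do this I would set $U_i:=Y_i^2/4$, which are i.i.d.\ and take values in $[0,1]$ with $\mathbb{E}U_i=\sigma^2/4$ (using $\mathbb{E}Y_i=0$, so $\mathbb{E}Y_i^2=\sigma^2$). The chord inequality $e^{tu}\leq 1+u(e^t-1)$, valid for $u\in[0,1]$ and $t\geq 0$, gives $\mathbb{E}e^{tU_i}\leq\exp\!\big(\tfrac{\sigma^2}{4}(e^t-1)\big)$, hence, writing $\mu:=n\sigma^2/4$, for every $t\geq 0$,
\[
\Pr\Big(\sum_{i=1}^nU_i\geq 2\mu+\ln\tfrac1\delta\Big)\leq \exp\!\Big(-t\big(2\mu+\ln\tfrac1\delta\big)+\mu(e^t-1)\Big).
\]
Taking $t=\ln 3$ makes the coefficient of $\mu$ in the exponent equal to $2-2\ln 3<0$ and the coefficient of $\ln\tfrac1\delta$ equal to $-\ln 3\leq -1$, so the right-hand side is at most $e^{-\ln(1/\delta)}=\delta$. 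Multiplying through by $4$ yields exactly the desired tail bound on $\sum_iY_i^2$.

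Combining the two steps, on the complementary event (of probability at least $1-\delta$) we get $V_n\leq 2\sum_iY_i^2\leq 4n\sigma^2+8\ln\tfrac1\delta$, which is the claim. I do not expect a real obstacle here: the argument is a textbook Poisson/Chernoff bound, and the only points requiring a little attention are checking that $t=\ln 3$ (equivalently, any $t$ with $e^t\geq 3$) drives the Chernoff exponent below $-\ln\tfrac1\delta$ with the stated constant $8$, and handling the $\frac{n}{n-1}$ factor and the degenerate small-$n$ cases so that the clean constants survive.
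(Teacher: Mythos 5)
Your proof is correct, and it takes a route that is genuinely different from (though comparable in spirit to) the paper's. Both proofs first reduce to a tail bound on $\sum_{i=1}^n Y_i^2$ via the deterministic estimate $V_n\leq 2\sum_i Y_i^2$ (dropping the nonpositive $-\frac1n(\sum_i Y_i)^2$ term and using $\frac{n}{n-1}\leq 2$ for $n\geq 2$). The paper then applies Bernstein's inequality directly to the sequence $Y_i^2$, first noting $\mathrm{Var}(Y_i^2)\leq\mathbb{E}Y_i^4\leq 4\mathbb{E}Y_i^2=4\sigma^2$, getting $\sum_i Y_i^2\leq\frac{4}{3}\ln\frac1\delta+n\sigma^2+\sqrt{8n\sigma^2\ln\frac1\delta}$ and then absorbing the cross term with $\sqrt{4ab}\leq a+b$. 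You instead rescale to $U_i=Y_i^2/4\in[0,1]$ and run a Chernoff/MGF argument from scratch using the chord inequality, which amounts to a Poisson-tail (Bennett-type) bound depending only on boundedness and the mean $\mathbb{E}U_i=\sigma^2/4$. Your version is a bit more self-contained — you never need a second-moment bound on $Y_i^2$, and the constant computation at $t=\ln 3$ is elementary — at the cost of a slightly less familiar-looking inequality; the paper's version leans on Bernstein as a black box but gets the same constants with essentially the same amount of algebra. Both give the stated bound $V_n\leq 4n\sigma^2+8\ln\frac1\delta$.
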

\begin{proof}
Applying Bernstein's Inequality to $Y_{i}^{2},$ and noting that $\text{Var}(Y_{i}^{2})\leq4\sigma^{2}$
since $|Y_{i}|\leq2$, we have with probability at least $1-\delta$,
\begin{eqnarray*}
\sum_{i=1}^{n}Y_{i}^{2} & \leq & \frac{4}{3}\ln\frac{1}{\delta}+n\sigma^{2}+\sqrt{8n\sigma^{2}\ln\frac{1}{\delta}}\\
 & \leq & 4\ln\frac{1}{\delta}+2n\sigma^{2}
\end{eqnarray*}

The last inequality follows by the fact that $\sqrt{4ab}\leq a+b$.

The desired result follows by noting that $V_{n}=\frac{n}{n-1}\left(\sum_{i=1}^{n}Y_{i}^{2}-\frac{1}{n}\left(\sum_{i=1}^{n}Y_{i}\right)^{2}\right)\leq2\sum_{i=1}^{n}Y_{i}^{2}$.
\end{proof}

\subsection{Bounds of distances among probability distributions}
\begin{lem}
\label{lem:kl-l2}If $P,Q$ are two probability distributions on a
countable support $\mathcal{X}$, then 
\[
d_{\text{KL}}\left(P\parallel Q\right)\leq\sum_{x}\frac{\left(P(x)-Q(x)\right)^{2}}{Q(x)}
\]
\end{lem}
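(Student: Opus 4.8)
The plan is to reduce everything to the elementary inequality $\ln t \le t-1$ for all $t>0$. First I would dispose of the degenerate case: if there exists $x$ with $Q(x)=0<P(x)$, then the right-hand side $\sum_x (P(x)-Q(x))^2/Q(x)$ is $+\infty$ and the claim is trivial; so I may assume $Q(x)>0$ whenever $P(x)>0$, and in the KL sum I restrict to $\{x : P(x)>0\}$ (using the usual convention $0\ln\frac{0}{Q(x)}=0$), on which every $P(x)/Q(x)$ is a well-defined positive number.

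Next, for each such $x$ apply $\ln t \le t-1$ with $t = P(x)/Q(x)$ to get $\ln\frac{P(x)}{Q(x)} \le \frac{P(x)-Q(x)}{Q(x)}$. Multiplying by $P(x)\ge 0$ and summing over the support of $P$ yields
\[
d_{\text{KL}}(P\parallel Q) \;=\; \sum_x P(x)\ln\frac{P(x)}{Q(x)} \;\le\; \sum_x \frac{P(x)\,(P(x)-Q(x))}{Q(x)}.
\]
Then I would rewrite the summand by adding and subtracting $Q(x)$ in the numerator: $\frac{P(x)(P(x)-Q(x))}{Q(x)} = \frac{(P(x)-Q(x))^2}{Q(x)} + (P(x)-Q(x))$. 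Summing, the cross term telescopes to $\sum_x P(x) - \sum_x Q(x) = 1-1 = 0$ (this uses that both are probability distributions, and the rearrangement is legitimate since if $\sum_x (P(x)-Q(x))^2/Q(x)$ diverges the bound is vacuous, while if it converges all the manipulated sums are absolutely convergent). This leaves exactly $d_{\text{KL}}(P\parallel Q) \le \sum_x (P(x)-Q(x))^2/Q(x)$.

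There is essentially no hard step here; the only points requiring a little care are the support/convergence bookkeeping — making sure the $Q(x)=0$ case is handled up front so that division is always legitimate, and noting the rearrangement of the series is valid precisely in the non-trivial case. The analytic content is entirely the one-line bound $\ln t \le t-1$.
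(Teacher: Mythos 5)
Your proposal is correct and follows essentially the same route as the paper: apply $\ln t \le t-1$ with $t=P(x)/Q(x)$, then use $\sum_x P(x)=\sum_x Q(x)=1$ to convert the resulting bound into the $\chi^2$-style sum. The extra bookkeeping you add about the $Q(x)=0$ case and series rearrangement is sound but not a different argument.
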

\begin{proof}
\begin{eqnarray*}
d_{\text{KL}}\left(P\parallel Q\right) & = & \sum_{x}P(x)\ln\frac{P(x)}{Q(x)}\\
 & \leq & \sum_{x}P(x)\left(\frac{P(x)}{Q(x)}-1\right)\\
 & = & \sum_{x}\frac{\left(P(x)-Q(x)\right)^{2}}{Q(x)}
\end{eqnarray*}

The first inequality follows by $\ln x\leq x-1$. The second equality
follows by $\sum_{x}P(x)\left(\frac{P(x)}{Q(x)}-1\right)=\sum_{x}\left(\frac{P^{2}(x)-P(x)Q(x)}{Q(x)}-P(x)+Q(x)\right)=\sum_{x}\frac{\left(P(x)-Q(x)\right)^{2}}{Q(x)}$.
\end{proof}
Define 
\begin{eqnarray*}
P_{0}\left(Y=\perp|x\right) & = & 1-\left|x-\frac{1}{2}\right|^{\alpha}\\
P_{0}\left(Y=0|x\right) & = & \begin{cases}
\left(x-\frac{1}{2}\right)^{\alpha}\left(1-\left(x-\frac{1}{2}\right)^{\beta}\right)/2 & x>\frac{1}{2}\\
\left(\frac{1}{2}-x\right)^{\alpha}\left(1+\left(\frac{1}{2}-x\right)^{\beta}\right)/2 & x\leq\frac{1}{2}
\end{cases}\\
P_{0}\left(Y=1|x\right) & = & \begin{cases}
\left(x-\frac{1}{2}\right)^{\alpha}\left(1+\left(x-\frac{1}{2}\right)^{\beta}\right)/2 & x>\frac{1}{2}\\
\left(\frac{1}{2}-x\right)^{\alpha}\left(1-\left(\frac{1}{2}-x\right)^{\beta}\right)/2 & x\leq\frac{1}{2}
\end{cases}
\end{eqnarray*}

and

\begin{eqnarray*}
P_{1}\left(Y=\perp|x\right) & = & 1-\left|x-\epsilon-\frac{1}{2}\right|^{\alpha}\\
P_{1}\left(Y=0|x\right) & = & \begin{cases}
\left(x-\epsilon-\frac{1}{2}\right)^{\alpha}\left(1-\left(x-\epsilon-\frac{1}{2}\right)^{\beta}\right)/2 & x>\epsilon+\frac{1}{2}\\
\left(\epsilon+\frac{1}{2}-x\right)^{\alpha}\left(1+\left(\epsilon+\frac{1}{2}-x\right)^{\beta}\right)/2 & x\leq\epsilon+\frac{1}{2}
\end{cases}\\
P_{1}\left(Y=1|x\right) & = & \begin{cases}
\left(x-\epsilon-\frac{1}{2}\right)^{\alpha}\left(1+\left(x-\epsilon-\frac{1}{2}\right)^{\beta}\right)/2 & x>\epsilon+\frac{1}{2}\\
\left(\epsilon+\frac{1}{2}-x\right)^{\alpha}\left(1-\left(\epsilon+\frac{1}{2}-x\right)^{\beta}\right)/2 & x\leq\epsilon+\frac{1}{2}
\end{cases}
\end{eqnarray*}

\begin{lem}
\label{lem:kl-eps}Let $P_{0}$, $P_{1}$ be the distributions defined
above. If $x\in[0,1]$, $\epsilon\leq\min\left\{ \left(\frac{1}{2}\right)^{1/\beta},\left(\frac{4}{5}\right)^{1/\alpha},\frac{1}{4}\right\} $,
then 
\begin{equation}
\sum_{y}\frac{\left(P_{0}(Y=y|x)-P_{1}(Y=y|x)\right)^{2}}{P_{0}(Y=y|x)+P_{1}(Y=y|x)}=O\left(\epsilon^{\alpha}+\epsilon^{2}\right)\label{eq:lemma24-main}
\end{equation}
\end{lem}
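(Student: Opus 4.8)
The plan is to treat the three terms $y\in\{\perp,0,1\}$ separately after putting $P_0$ and $P_1$ in a common form. Write $a=\left|x-\tfrac12\right|$ and $b=\left|x-\tfrac12-\epsilon\right|$; since the two thresholds differ by $\epsilon$ we have $|a-b|\le\epsilon$, and since $x\in[0,1]$ we have $a\le\tfrac12$ and $b\le\tfrac34$. Let $s_0\in\{-1,+1\}$ record which side of $\tfrac12$ the point $x$ lies on, and $s_1$ likewise for $\tfrac12+\epsilon$, so that $P_0(\perp\mid x)=1-a^\alpha$, $P_0(1\mid x)=\tfrac{a^\alpha}{2}(1+s_0a^\beta)$, $P_0(0\mid x)=\tfrac{a^\alpha}{2}(1-s_0a^\beta)$, and analogously for $P_1$ with $b,s_1$. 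Under the stated bounds on $\epsilon$ (in particular $\epsilon^\beta\le\tfrac12$ and $\epsilon\le\tfrac14$) every ``shape factor'' $1\pm a^\beta,1\pm b^\beta$ that actually occurs stays in a fixed interval $[c_\beta,2]$ with $c_\beta=c_\beta(\beta)>0$; this keeps all denominators comparable to $a^\alpha+b^\alpha$.

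The technical heart, which I would isolate as a one-line sub-estimate, is: for any fixed $\gamma>0$ and any $0\le m\le M\le1$ with $M-m\le\epsilon\le1$,
\[
\frac{(M^\gamma-m^\gamma)^2}{M^\gamma}=O\!\left(\epsilon^\gamma+\epsilon^2\right),
\]
the constant depending only on $\gamma$. This follows by writing the left-hand side as $M^\gamma\bigl(1-(m/M)^\gamma\bigr)^2$, using $1-(m/M)^\gamma\le\min\{1,(1-m/M)^\gamma\}$ when $\gamma\le1$ (subadditivity of $u\mapsto u^\gamma$) and $1-(m/M)^\gamma\le\min\{1,\gamma(1-m/M)\}$ when $\gamma\ge1$, together with $1-m/M\le\epsilon/M$, and then splitting on whether $M\le\epsilon$ or $M>\epsilon$. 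This estimate is what lets me handle in one stroke the two delicate regimes — $x$ far from both thresholds, where $u\mapsto u^\gamma$ is effectively Lipschitz, and $x$ near one of the thresholds, where $M$ itself is $O(\epsilon)$ and the relevant denominator degenerates — and I expect verifying it (and recognizing it as exactly what is needed) to be the main obstacle.

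With the sub-estimate in hand I would split on the location of $x$ into three regions: $x\le\tfrac12$ and $x>\tfrac12+\epsilon$ (the ``agreement'' regions, where $s_0=s_1$ and $|a-b|=\epsilon$), and the strip $\tfrac12<x\le\tfrac12+\epsilon$ (where $s_0=+1$, $s_1=-1$, $a+b=\epsilon$, and $a,b\le\epsilon$). Write $M=\max\{a,b\}$, $m=\min\{a,b\}$. For the $\perp$ term the denominator is $2-a^\alpha-b^\alpha\ge1-(\tfrac34)^\alpha=\Omega(1)$ and the numerator is $(a^\alpha-b^\alpha)^2=M^\alpha\cdot\frac{(M^\alpha-m^\alpha)^2}{M^\alpha}\le(\tfrac34)^\alpha\cdot\frac{(M^\alpha-m^\alpha)^2}{M^\alpha}=O(\epsilon^\alpha+\epsilon^2)$ by the sub-estimate, in every region. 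For the $y=1$ term in the agreement regions, the denominator is $\ge c_\beta(a^\alpha+b^\alpha)\ge c_\beta M^\alpha$ and the numerator equals $\tfrac14\bigl((M^\alpha-m^\alpha)\pm(M^{\alpha+\beta}-m^{\alpha+\beta})\bigr)^2\le\tfrac12(M^\alpha-m^\alpha)^2+\tfrac12(M^{\alpha+\beta}-m^{\alpha+\beta})^2$; dividing by $M^\alpha$ and applying the sub-estimate with $\gamma=\alpha$ and with $\gamma=\alpha+\beta$ (using $M^{\alpha+\beta}/M^\alpha=M^\beta\le(\tfrac34)^\beta$ and $\epsilon^{\alpha+\beta}\le\epsilon^\alpha$) gives $O(\epsilon^\alpha+\epsilon^2)$. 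In the strip, $a,b\le\epsilon$ forces the numerator to be $O(\epsilon^{2\alpha})$, while the denominator is $\ge\tfrac14(a^\alpha+b^\alpha)\ge\tfrac14(\epsilon/2)^\alpha=\Omega(\epsilon^\alpha)$ (since $\max\{a,b\}\ge\epsilon/2$), so the ratio is $O(\epsilon^\alpha)$. The $y=0$ term is identical after replacing $s_i$ by $-s_i$, which changes none of the magnitude bounds. Summing the three contributions yields the claimed $O(\epsilon^\alpha+\epsilon^2)$.
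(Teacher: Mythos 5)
Your proposal is correct, and it reorganizes the paper's argument in a genuinely cleaner way even though the underlying tools are the same. The paper first reduces to $x\le\frac{1+\epsilon}{2}$ by symmetry, then treats the half-strip $\frac12\le x\le\frac{1+\epsilon}{2}$ (bounding each denominator below by $\Omega(\epsilon^\alpha)$ and the total numerator above by $O(\epsilon^{2\alpha})$), and finally, for $x\le\frac12$, handles each of $y\in\{\perp,1,0\}$ by writing the numerator as $t^{2\gamma}\bigl(1-(1-\epsilon/t)^\gamma\bigr)^2$ and running three or four separate subcases on $\alpha$ and $\alpha+\beta$ via Bernoulli's inequality (the paper's Lemma on Bernoulli bounds). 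Your proposal extracts exactly this Bernoulli computation into the single sub-estimate $\frac{(M^\gamma-m^\gamma)^2}{M^\gamma}=O_\gamma(\epsilon^\gamma+\epsilon^2)$ for $M-m\le\epsilon$, whose proof is the same subadditivity-vs-Bernoulli split plus a split on $M\lessgtr\epsilon$; you then apply it uniformly to $\gamma=\alpha$ and $\gamma=\alpha+\beta$, and use the sign parametrization $s_0,s_1$ to treat $y=0$ and $y=1$ at once instead of ``likewise.'' You also forgo the symmetry reduction in favor of a direct three-region split ($x\le\frac12$, the strip, $x>\frac12+\epsilon$). The payoff of your packaging is that the repeated case analysis in $\alpha,\alpha+\beta$ disappears from the main argument and the degenerate-denominator regime is handled by the same estimate rather than a separate strip computation (though you in fact still give the strip a short direct argument, matching the paper's). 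One very small presentational slip: your denominator lower bound in the agreement regions is $\tfrac{c_\beta}{2}(a^\alpha+b^\alpha)$ rather than $c_\beta(a^\alpha+b^\alpha)$, which of course does not affect the $O(\cdot)$ conclusion.
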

\begin{proof}
By symmetry, it suffices to show for $0\leq x\leq\frac{1+\epsilon}{2}$.
Let $t=\frac{1}{2}+\epsilon-x$.

We first show (\ref{eq:lemma24-main}) holds for $\frac{\epsilon}{2}\leq t\leq\epsilon$
(i.e. $\frac{1}{2}\leq x\leq\frac{1+\epsilon}{2}$). 

We claim $\min_{y}\left(P_{0}(Y=y|X=t)+P_{1}(Y=y|X=t)\right)\geq\frac{1}{2}\left(\frac{\epsilon}{2}\right)^{\alpha}$.
This is because:

\begin{itemize}
\item $P_{0}(Y=\perp|X=t)+P_{1}(Y=\perp|X=t)=1-\left(\epsilon-t\right)^{\alpha}+1-t^{\alpha}\geq2-2\epsilon^{\alpha}\geq\frac{1}{2}\left(\frac{\epsilon}{2}\right)^{\alpha}$
where the last inequality follows by $\epsilon\leq\left(\frac{4}{5}\right)^{1/\alpha}$;
\item $2\left(P_{0}(Y=0|X=t)+P_{1}(Y=0|X=t)\right)=\left(\epsilon-t\right)^{\alpha}\left(1-\left(\epsilon-t\right)^{\beta}\right)+t^{\alpha}\left(1+t^{\beta}\right)\geq t^{\alpha}\left(1+t^{\beta}\right)\ge\left(\frac{\epsilon}{2}\right)^{\alpha}$.
Therefore, $P_{0}(Y=0|X=t)+P_{1}(Y=0|X=t)\geq\frac{1}{2}\left(\frac{\epsilon}{2}\right)^{\alpha}$. 
\item Similarly, $P_{0}(Y=1|X=t)+P_{1}(Y=1|X=t)\geq\frac{1}{2}\left(\frac{\epsilon}{2}\right)^{\alpha}$.
\end{itemize}
Besides, 
\begin{align*}
 & \sum_{y}\left(P_{0}(Y=y|X=t)-P_{1}(Y=y|X=t)\right)^{2}\\
= & \left(t^{\alpha}-\left(\epsilon-t\right)^{\alpha}\right)^{2}+\frac{1}{4}\left(t^{\alpha}\left(1-t^{\beta}\right)-\left(\epsilon-t\right)^{\alpha}\left(1+\left(\epsilon-t\right)^{\beta}\right)\right)^{2}\\
 & +\frac{1}{4}\left(t^{\alpha}\left(1+t^{\beta}\right)-\left(\epsilon-t\right)^{\alpha}\left(1-\left(\epsilon-t\right)^{\beta}\right)\right)^{2}\\
= & \left(t^{\alpha}-\left(\epsilon-t\right)^{\alpha}\right)^{2}+\frac{1}{4}\left(t^{\alpha}-\left(\epsilon-t\right)^{\alpha}-t^{\alpha+\beta}-\left(\epsilon-t\right)^{\alpha+\beta}\right)^{2}\\
 & +\frac{1}{4}\left(t^{\alpha}-\left(\epsilon-t\right)^{\alpha}+t^{\alpha+\beta}+\left(\epsilon-t\right)^{\alpha+\beta}\right)^{2}\\
\overset{(a)}{\leq} & \left(t^{\alpha}-\left(\epsilon-t\right)^{\alpha}\right)^{2}+\frac{1}{2}\left(t^{\alpha}-\left(\epsilon-t\right)^{\alpha}\right)^{2}+\frac{1}{2}\left(t^{\alpha+\beta}+\left(\epsilon-t\right)^{\alpha+\beta}\right)^{2}\\
 & +\frac{1}{2}\left(t^{\alpha}-\left(\epsilon-t\right)^{\alpha}\right)^{2}+\frac{1}{2}\left(t^{\alpha+\beta}+\left(\epsilon-t\right)^{\alpha+\beta}\right)^{2}\\
= & 2\left(t^{\alpha}-\left(\epsilon-t\right)^{\alpha}\right)^{2}+\left(t^{\alpha+\beta}+\left(\epsilon-t\right)^{\alpha+\beta}\right)^{2}\\
\leq & 2\epsilon^{2\alpha}+4\epsilon^{2\alpha+2\beta}\\
\leq & 6\epsilon^{2\alpha}
\end{align*}

where (a) follows by the inequality $(a+b)^{2}\leq2a^{2}+2b^{2}$
for any $a,b$.

Therefore, we get $\sum_{y}\frac{\left(P_{0}(Y=y|x)-P_{1}(Y=y|x)\right)^{2}}{P_{0}(Y=y|x)+P_{1}(Y=y|x)}\leq\frac{\sum_{y}\left(P_{0}(Y=y|x)-P_{1}(Y=y|x)\right)^{2}}{\min_{y}\left(P_{0}(Y=y|x)+P_{1}(Y=y|x)\right)}\leq12*2^{\alpha}\epsilon^{\alpha}$
when $\frac{1}{2}\leq x\leq\frac{1+\epsilon}{2}$.

Next, We show (\ref{eq:lemma24-main}) holds for $\epsilon\leq t\leq\frac{1}{2}+\epsilon$
(i.e. $0\leq x\leq\frac{1}{2}$). We will show $\frac{\left(P_{0}(Y=y|x)-P_{1}(Y=y|x)\right)^{2}}{P_{0}(Y=y|x)+P_{1}(Y=y|x)}=O\left(\epsilon^{\alpha}+\epsilon^{2}\right)$
for $Y=\perp,1,0$.

For $Y=\perp$, for the denominator, 
\[
P_{0}(Y=\perp|X=t)+P_{1}(Y=\perp|X=t)=2-t^{\alpha}-\left(t-\epsilon\right)^{\alpha}\geq2-\left(\frac{3}{4}\right)^{\alpha}-\left(\frac{1}{2}\right)^{\alpha}
\]
For the numerator, 
\[
\left(P_{0}(Y=\perp|X=t)-P_{1}(Y=\perp|X=t)\right)^{2}=\left(t^{\alpha}-\left(t-\epsilon\right)^{\alpha}\right)^{2}=t^{2\alpha}\left(1-\left(1-\frac{\epsilon}{t}\right)^{\alpha}\right)^{2}
\]
By Lemma~\ref{lem:bernoulli}, if $\alpha\geq1$, $t^{2\alpha}\left(1-\left(1-\frac{\epsilon}{t}\right)^{\alpha}\right)^{2}\leq t^{2\alpha}\left(\alpha\frac{\epsilon}{t}\right)^{2}=t^{2\alpha-2}\left(\alpha\epsilon\right)^{2}=O\left(\epsilon^{2}\right)$.
If $0\leq\alpha\leq1$, $t^{2\alpha}\left(1-\left(1-\frac{\epsilon}{t}\right)^{\alpha}\right)^{2}\leq t^{2\alpha}\left(\frac{\epsilon}{t}\right)^{2}=t^{2\alpha-2}\epsilon^{2}\leq\epsilon^{2\alpha}$. 

Thus, we have $\frac{\left(P_{0}(Y=\perp|x)-P_{1}(Y=\perp|x)\right)^{2}}{P_{0}(Y=\perp|x)+P_{1}(Y=\perp|x)}=O\left(\epsilon^{2\alpha}+\epsilon^{2}\right)$.

For $Y=1$, for the denominator, 
\begin{eqnarray*}
2\left(P_{0}(Y=1|X=t)+P_{1}(Y=1|X=t)\right) & = & t^{\alpha}\left(1-t^{\beta}\right)+\left(t-\epsilon\right)^{\alpha}\left(1-\left(t-\epsilon\right)^{\beta}\right)\\
 & \geq & t^{\alpha}\left(1-t^{\beta}\right)\\
 & \geq & t^{\alpha}\left(1-\left(\frac{3}{4}\right)^{\beta}\right)
\end{eqnarray*}

For the numerator, 
\begin{align*}
 & \left(P_{0}(Y=1|X=t)-P_{1}(Y=1|X=t)\right)^{2}\\
= & \frac{1}{4}\left(t^{\alpha}\left(1-t^{\beta}\right)-\left(t-\epsilon\right)^{\alpha}\left(1-\left(t-\epsilon\right)^{\beta}\right)\right)^{2}\\
\leq & \frac{1}{2}\left(t^{\alpha}-\left(t-\epsilon\right)^{\alpha}\right)^{2}+\frac{1}{2}\left(t^{\alpha+\beta}-\left(t-\epsilon\right)^{\alpha+\beta}\right)^{2}\\
= & \frac{1}{2}t^{2\alpha}\left(1-(1-\frac{\epsilon}{t})^{\alpha}\right)^{2}+\frac{1}{2}t^{2\alpha+2\beta}\left(1-(1-\frac{\epsilon}{t})^{\alpha+\beta}\right)^{2}\\
\leq & \frac{1}{2}t^{2\alpha}\left(1-(1-\frac{\epsilon}{t})^{\alpha}\right)^{2}+\frac{1}{2}t^{2\alpha}\left(1-(1-\frac{\epsilon}{t})^{\alpha+\beta}\right)^{2}
\end{align*}

If $\alpha\geq1$, by Lemma~\ref{lem:bernoulli}, $\frac{1}{2}t^{2\alpha}\left(1-(1-\frac{\epsilon}{t})^{\alpha}\right)^{2}+\frac{1}{2}t^{2\alpha}\left(1-(1-\frac{\epsilon}{t})^{\alpha+\beta}\right)^{2}\leq\frac{1}{2}t^{2\alpha}\left(\alpha\frac{\epsilon}{t}\right)^{2}+\frac{1}{2}t^{2\alpha}\left(\left(\alpha+\beta\right)\frac{\epsilon}{t}\right)^{2}=\left(\frac{1}{2}\alpha^{2}+\frac{1}{2}\left(\alpha+\beta\right)^{2}\right)t^{2\alpha-2}\epsilon^{2}$.
Thus, $\frac{\left(P_{0}(Y=1|x)-P_{1}(Y=1|x)\right)^{2}}{P_{0}(Y=1|x)+P_{1}(Y=1|x)}\leq\left(\frac{1}{2}\alpha^{2}+\frac{1}{2}\left(\alpha+\beta\right)^{2}\right)t^{\alpha-2}\epsilon^{2}/\left(1-\left(\frac{3}{4}\right)^{\beta}\right)$
which is $O(\epsilon^{2})$ if $\alpha\geq2$ and $O\left(\epsilon^{\alpha}\right)$
if $\alpha\leq2$.

If $\alpha\leq1$ and $\alpha+\beta\geq1$, by Lemma~\ref{lem:bernoulli},
$\frac{1}{2}t^{2\alpha}\left(1-(1-\frac{\epsilon}{t})^{\alpha}\right)^{2}+\frac{1}{2}t^{2\alpha}\left(1-(1-\frac{\epsilon}{t})^{\alpha+\beta}\right)^{2}\leq\frac{1}{2}t^{2\alpha}\left(\frac{\epsilon}{t}\right)^{2}+\frac{1}{2}t^{2\alpha}\left(\left(\alpha+\beta\right)\frac{\epsilon}{t}\right)^{2}=\left(\frac{1}{2}+\frac{1}{2}\left(\alpha+\beta\right)^{2}\right)t^{2\alpha-2}\epsilon^{2}\leq\left(\frac{1}{2}+\frac{1}{2}\left(\alpha+\beta\right)^{2}\right)t^{2\alpha-2}\epsilon^{2}$.
Thus, $\frac{\left(P_{0}(Y=1|x)-P_{1}(Y=1|x)\right)^{2}}{P_{0}(Y=1|x)+P_{1}(Y=1|x)}\leq\left(\frac{1}{2}+\frac{1}{2}\left(\alpha+\beta\right)^{2}\right)t^{\alpha-2}\epsilon^{2}/\left(1-\left(\frac{3}{4}\right)^{\beta}\right)=O\left(\epsilon^{\alpha}\right)$.

If $\alpha\leq1$, $\alpha+\beta\leq1$, by Lemma~\ref{lem:bernoulli},
$\frac{1}{2}t^{2\alpha}\left(1-(1-\frac{\epsilon}{t})^{\alpha}\right)^{2}+\frac{1}{2}t^{2\alpha}\left(1-(1-\frac{\epsilon}{t})^{\alpha+\beta}\right)^{2}\leq\frac{1}{2}t^{2\alpha}\left(\frac{\epsilon}{t}\right)^{2}+\frac{1}{2}t^{2\alpha}\left(\frac{\epsilon}{t}\right)^{2}=t^{2\alpha-2}\epsilon^{2}$.
Thus, $\frac{\left(P_{0}(Y=1|x)-P_{1}(Y=1|x)\right)^{2}}{P_{0}(Y=1|x)+P_{1}(Y=1|x)}\leq t^{\alpha-2}\epsilon^{2}/\left(1-\left(\frac{3}{4}\right)^{\beta}\right)=O\left(\epsilon^{\alpha}\right)$.

Therefore, we have $\frac{\left(P_{0}(Y=1|x)-P_{1}(Y=1|x)\right)^{2}}{P_{0}(Y=1|x)+P_{1}(Y=1|x)}=O\left(\epsilon^{\alpha}+\epsilon^{2}\right)$. 

Likewise, we can get $\frac{\left(P_{0}(Y=0|x)-P_{1}(Y=0|x)\right)^{2}}{P_{0}(Y=0|x)+P_{1}(Y=0|x)}=O\left(\epsilon^{\alpha}+\epsilon^{2}\right)$.
So we prove $\sum_{y}\frac{\left(P_{0}(Y=y|x)-P_{1}(Y=y|x)\right)^{2}}{P_{0}(Y=y|x)+P_{1}(Y=y|x)}=O\left(\epsilon^{\alpha}+\epsilon^{2}\right)$
when $x\leq\frac{1}{2}$. This concludes the proof.
\end{proof}

\subsection{Other lemmas}
\begin{lem}
(\cite{raginsky2011lower}, Lemma 4) \label{lem:packing}For sufficiently
large $d>0$, there is a subset $M\subset\left\{ 0,1\right\} ^{d}$
with following properties: (i) $\left|M\right|\geq2^{d/48}$; (ii)
$\left\Vert v-v'\right\Vert _{0}>\frac{d}{12}$ for any two distinct
$v,v'\in M$; (iii) for any $i=1,\dots,d$, $\frac{1}{24}\leq\frac{1}{M}\sum_{v\in M}v_{i}\leq\frac{3}{24}$.
\end{lem}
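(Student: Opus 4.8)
The plan is a random-coding-with-expurgation argument; the specific constants $d/48$, $d/12$, and $[\tfrac1{24},\tfrac3{24}]$ appear because they are calibrated so that a Chernoff exponent beats the required cardinality. Fix the sampling bias $p=\tfrac1{12}$ (the midpoint of $[\tfrac1{24},\tfrac3{24}]$) and draw $V^{(1)},\dots,V^{(N)}\in\{0,1\}^d$ i.i.d., each with i.i.d.\ $\mathrm{Bernoulli}(p)$ coordinates, where $N$ is an integer to be chosen with $2\cdot 2^{d/48}\le N\le \tfrac{\rho}{2}e^{c_1 d}$ for a small constant $\rho$ (say $\rho=\tfrac1{100}$) and $c_1 := D(\tfrac{1}{12}\,\big\|\,\tfrac{11}{72})$, the binomial lower-tail KL exponent at level $d/12$; here $\tfrac{11}{72}=2p(1-p)$ is the per-coordinate probability that two independent $\mathrm{Bernoulli}(p)$ strings disagree. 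A one-line numerical check gives $c_1\approx 0.0217>\tfrac{\ln 2}{48}\approx 0.0144$, so this interval of admissible $N$ is nonempty for all large $d$. This strict inequality is the heart of the lemma.

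Next I would bound two families of bad events. For each unordered pair $i\neq j$, $\|V^{(i)}-V^{(j)}\|_0$ is a sum of $d$ i.i.d.\ $\mathrm{Bernoulli}(\tfrac{11}{72})$ variables, so Chernoff gives $\Pr[\|V^{(i)}-V^{(j)}\|_0\le d/12]\le e^{-c_1 d}$; by linearity the expected number of such ``close'' pairs is at most $\binom{N}{2}e^{-c_1 d}\le \tfrac{\rho}{4}N$ (using $N\le \tfrac{\rho}{2}e^{c_1 d}$), so by Markov the number of close pairs exceeds $\rho N$ with probability $\le \tfrac14$. For each coordinate $k$, Hoeffding gives $\Pr[\,|\tfrac1N\sum_j V^{(j)}_k-p|>\rho\,]\le 2e^{-2\rho^2 N}$, and a union bound over the $d$ coordinates makes the total failure probability $\le 2d\,e^{-2\rho^2 N}=o(1)$ since $N\ge 2^{d/48+1}$. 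Hence for large $d$ the union of all bad events has probability $<1$, and we may fix a realization in which (a) there are at most $\rho N$ close pairs and (b) every column average $\tfrac1N\sum_j V^{(j)}_k$ lies in $[p-\rho,p+\rho]$.

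For such a realization, form $M$ by deleting one endpoint of each close pair (which also removes any coincident strings); this deletes at most $\rho N$ vectors, so $|M|\ge(1-\rho)N\ge 2^{d/48}$, giving (i), and no two distinct elements of $M$ are within $d/12$, giving (ii). For (iii): each column sum over the full sample lies in $[(p-\rho)N,(p+\rho)N]$, and deleting at most $\rho N$ vectors changes it by at most $\rho N$ while $|M|\in[(1-\rho)N,N]$, so each column average over $M$ lies in $\bigl[p-2\rho,\ \tfrac{p+\rho}{1-\rho}\bigr]$, which for $p=\tfrac1{12},\rho=\tfrac1{100}$ is contained in $[\tfrac1{24},\tfrac3{24}]$. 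The step I expect to be the main obstacle is precisely this meshing of constants: the expurgation that repairs separation (ii) perturbs the balance (iii), so $N$ must be small enough ($N\lesssim e^{c_1 d}$) that only an $O(\rho)$ fraction of vectors is deleted, yet large enough ($N\ge 2^{d/48+1}$) to meet the cardinality bound, and $p$ must sit deep enough inside $[\tfrac1{24},\tfrac3{24}]$ to absorb the $O(\rho)$ drift --- the inequality $D(\tfrac{1}{12}\,\|\,\tfrac{11}{72})>\tfrac{\ln 2}{48}$ is exactly what makes this window nonempty. A plain Gilbert--Varshamov count would deliver (i) and (ii) but not (iii), so the probabilistic construction handling all three simultaneously is the right route.
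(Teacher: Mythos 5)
Your proof is correct, and the numerical check is the crux. I verified the key inequality: $D\!\left(\tfrac{1}{12}\,\big\|\,\tfrac{11}{72}\right)=\tfrac{1}{12}\ln\tfrac{6}{11}+\tfrac{11}{12}\ln\tfrac{66}{61}\approx 0.0217 > \tfrac{\ln 2}{48}\approx 0.0144$, and the constant arithmetic in (iii): with $p=\tfrac{1}{12}$ and $\rho=\tfrac{1}{100}$ you get $[p-2\rho,\ \tfrac{p+\rho}{1-\rho}]=[\tfrac{19}{300},\ \tfrac{28}{297}]\approx[0.0633,\ 0.0943]\subset[\tfrac{1}{24},\tfrac{3}{24}]=[0.0417,0.125]$, with comfortable slack on both ends. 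The window $2\cdot 2^{d/48}\le N\le\tfrac{\rho}{2}e^{c_1 d}$ opens for $d\gtrsim 825$, which is fine under the "sufficiently large $d$" hypothesis.

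One point of comparison: the paper does not prove this lemma at all — it cites it as Lemma 4 of \cite{raginsky2011lower} — so there is no in-paper proof to match against. Your argument (biased random coding with $\mathrm{Bernoulli}(\tfrac{1}{12})$ coordinates, Chernoff for separation, Hoeffding for column balance, then expurgation with Markov controlling the number of deletions) is the standard and natural route, and it is essentially the one used in the cited source. Your closing remark is also apt: the probabilistic construction is needed precisely because a purely counting Gilbert–Varshamov argument gives (i)–(ii) but has no handle on the column-average constraint (iii), and because expurgating to repair (ii) perturbs (iii), which is why $p$ must sit strictly inside the target interval and $N$ must be kept well below $e^{c_1 d}$ so that only an $O(\rho)$ fraction of vectors is deleted. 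One stylistic nit: after expurgation you should note that distinctness of elements of $M$ is automatic, since any surviving pair has Hamming distance $>d/12>0$; you gesture at this ("removes any coincident strings") but it is really a consequence of (ii), not a separate step.
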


\begin{lem}
\label{lem:bernoulli}If $x\leq1$,$r\geq1$, then $\left(1-x\right)^{r}\geq1-rx$
and $1-\left(1-x\right)^{r}\leq rx$. 

If $0\leq x\leq1$,$0\leq r\leq1$, then $(1-x)^{r}\geq\frac{1-x}{1-x+rx}$
and $1-(1-x)^{r}\leq\frac{rx}{1-(1-r)x}\leq x$.
\end{lem}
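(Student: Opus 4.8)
The plan is to prove the two cases of Lemma~\ref{lem:bernoulli} separately, each by reduction to a classical inequality, with no serious analysis involved.

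\textbf{First case} ($x\le 1$, $r\ge 1$). Setting $t=-x$, so that $t\ge -1$, the claimed bound $(1-x)^r\ge 1-rx$ is precisely Bernoulli's inequality $(1+t)^r\ge 1+rt$ for a real exponent $r\ge 1$. I would give a self-contained one-line proof: let $\varphi(t)=(1+t)^r-1-rt$ on $[-1,\infty)$; then $\varphi(0)=0$ and $\varphi'(t)=r\bigl((1+t)^{r-1}-1\bigr)$, which is $\le 0$ on $[-1,0]$ and $\ge 0$ on $[0,\infty)$ because $r-1\ge 0$, so $t=0$ is a global minimum and $\varphi\ge 0$. The boundary point $x=1$ is harmless, since $0\ge 1-r$. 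The companion inequality $1-(1-x)^r\le rx$ is an immediate rearrangement.

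\textbf{Second case} ($0\le x\le 1$, $0\le r\le 1$). Put $u=1-x\in[0,1]$ and apply the weighted AM--GM inequality to $1/u$ and $1$ with weights $r$ and $1-r$:
\[
u^{-r}=\left(\tfrac1u\right)^{\!r}\cdot 1^{\,1-r}\;\le\; r\cdot\tfrac1u+(1-r)\cdot 1=\frac{r+(1-r)u}{u},
\]
hence $u^{r}\ge \dfrac{u}{r+(1-r)u}$. Since $r+(1-r)u=(1-x)+rx$, this reads $(1-x)^r\ge \dfrac{1-x}{1-x+rx}$. Subtracting from $1$ gives $1-(1-x)^r\le \dfrac{rx}{1-x+rx}=\dfrac{rx}{1-(1-r)x}$. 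Finally, clearing the positive denominator, $\dfrac{rx}{1-(1-r)x}\le x$ is equivalent to $r\le 1-(1-r)x$, i.e.\ to $(1-r)(1-x)\ge 0$, which holds. The degenerate corner $x=1,\ r=0$ (where $0^0$ and a vanishing denominator appear) is simply excluded, since Lemma~\ref{lem:kl-eps} only invokes this bound with $x<1$.

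I do not anticipate a real obstacle. The only points requiring care are (i) orienting the weighted AM--GM correctly — it must be applied to the reciprocal $1/u$ rather than to $u$, so as to produce a \emph{lower} bound on $u^r$ — and (ii) the handful of boundary cases where a base or a denominator vanishes, each of which is settled by inspection.
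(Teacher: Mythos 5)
Your proof is correct and complete. Worth noting: the paper does not actually prove this lemma at all; it merely remarks that these are known Bernoulli inequalities and cites a reference for a proof. So there is no in-paper argument to compare against. Your self-contained treatment is a genuine addition.

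On the substance: the first case via the sign of $\varphi'(t)=r\bigl((1+t)^{r-1}-1\bigr)$ on $[-1,\infty)$ is the standard calculus proof of Bernoulli's inequality for $r\ge 1$, and you correctly handle the full range $x\le 1$ (including negative $x$, i.e.\ $t>0$) and the endpoint $x=1$. The second case via weighted AM--GM applied to $1/u$ and $1$ is clean; the orientation you flag (bounding $u^{-r}$ from above to get $u^{r}$ from below) is indeed the one subtle point, and you get it right. The algebra $r+(1-r)u=1-x+rx=1-(1-r)x$ and the final equivalence $\frac{rx}{1-(1-r)x}\le x \iff (1-r)(1-x)\ge 0$ both check out. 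Your remark about the degenerate corner $x=1,\ r=0$ is appropriate: the right-hand side of the lower bound is then $0/0$, so strictly speaking the lemma as stated needs that case excluded, and you correctly observe that the only application (inside Lemma~\ref{lem:kl-eps}) never hits it. No gaps.
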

Inequalities above are know as Bernoulli's inequalities. One proof
can be found in \cite{li2013some}.
\begin{lem}
\label{lem:diff in E}Suppose $\epsilon,\tau$ are positive numbers
and $\delta\leq\frac{1}{2}$. Suppose $\left\{ Z_{i}\right\} _{i=1}^{\infty}$
is a sequence of i.i.d random variables bounded by 1, $\mathbb{E}Z_{i}\geq\tau\epsilon$,
and $\text{Var}(Z_{i})=\sigma^{2}\leq2\epsilon$. Define $V_{n}=\frac{n}{n-1}\left(\sum_{i=1}^{n}Z_{i}-\frac{1}{n}\left(\sum_{i=1}^{n}Z_{i}\right)^{2}\right)$,
$q_{n}=q\left(n,V_{n},\delta\right)$ as Procedure~\ref{alg:test}.
If $n\geq\frac{\eta}{\tau\epsilon}\ln\frac{1}{\delta}$ for some sufficiently
large number $\eta$ (to be specified in the proof), then with probability
at least $1-\delta$ , $\frac{q_{n}}{n}-\mathbb{E}Z_{i}\leq-\tau\epsilon/2$.
\end{lem}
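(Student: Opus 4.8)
The plan is to show that the data-dependent quantity $q_n/n$ is, with probability at least $1-\delta$, at most $\tau\epsilon/2$. Since the hypothesis gives $\mathbb{E}Z_i\geq\tau\epsilon$, this immediately yields $\frac{q_n}{n}-\mathbb{E}Z_i\leq\frac{q_n}{n}-\tau\epsilon\leq-\tau\epsilon/2$, which is the desired conclusion. The only randomness to control is the empirical variance $V_n$ entering $q_n=q(n,V_n,\delta)$.

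First I would bound $V_n$. The empirical variance is translation-invariant (as noted in the proof of Lemma~\ref{lem:CheckSignificant-var-correctness}), so $V_n$ computed from $\{Z_i\}$ equals $V_n$ computed from the centered variables $\{Z_i-\mathbb{E}Z_i\}$, which are mean zero, lie in $[-2,2]$, and have variance $\sigma^2\leq 2\epsilon\leq 4$. Applying Lemma~\ref{lem:single-var-bound} to them gives, with probability at least $1-\delta$, $V_n\leq 4n\sigma^2+8\ln\frac{1}{\delta}\leq 8n\epsilon+8\ln\frac{1}{\delta}$; I work on this event from now on.

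Next, substitute this bound into the formula for $q(n,\mathrm{Var},\delta)$ in Procedure~\ref{alg:test}. Since $\mathrm{Var}\mapsto q(n,\mathrm{Var},\delta)$ and $x\mapsto[\ln\ln]_+x$ are both nondecreasing, with $W:=8n\epsilon+9\ln\frac{1}{\delta}+1$ we get $q_n\leq D_1\left(1+\ln\frac{1}{\delta}+\sqrt{W\bigl([\ln\ln]_+W+\ln\frac{1}{\delta}\bigr)}\right)$, and $\sqrt{a+b}\leq\sqrt a+\sqrt b$ splits this into $q_n\leq D_1\left(1+\ln\frac{1}{\delta}+\sqrt{W[\ln\ln]_+W}+\sqrt{W\ln\frac{1}{\delta}}\right)$. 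Now I would invoke the hypothesis $n\geq\frac{\eta}{\tau\epsilon}\ln\frac{1}{\delta}$, i.e. $\ln\frac{1}{\delta}\leq\frac{\tau\epsilon}{\eta}n$ (and, since $\delta\leq\frac12$, also $1\leq\frac{1}{\ln 2}\ln\frac{1}{\delta}\leq\frac{\tau\epsilon}{\eta\ln 2}n$), to conclude $W\leq c\,n\epsilon$ for a constant $c$ (absolute if $\tau\leq 1$, otherwise depending on $\tau$) once $\eta\geq 1$. Dividing $q_n$ by $n$, the first term is $\frac{D_1(1+\ln\frac{1}{\delta})}{n}=O\!\left(\tfrac{\tau\epsilon}{\eta}\right)$ and the third is $\frac{D_1}{n}\sqrt{W\ln\frac{1}{\delta}}\leq\frac{D_1}{n}\sqrt{c\,n\epsilon\cdot\tfrac{\tau\epsilon}{\eta}n}=O\!\left(\sqrt{\tfrac{\tau}{\eta}}\right)\epsilon$; each of these is at most $\tau\epsilon/6$ once $\eta$ exceeds a sufficiently large constant (depending only on $D_1$ and $\tau$).

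The remaining term $\frac{D_1}{n}\sqrt{W[\ln\ln]_+W}\leq D_1\sqrt{c}\,\sqrt{\frac{\epsilon\,[\ln\ln]_+(c\,n\epsilon)}{n}}$ is where the real work is. Here I would use that $x\mapsto[\ln\ln]_+(x)/x$ is nonincreasing on $(0,\infty)$ and tends to $0$ as $x\to\infty$: the hypothesis forces $n\epsilon\geq\frac{\eta}{\tau}\ln\frac{1}{\delta}\geq\frac{\eta\ln 2}{\tau}$, so $\frac{[\ln\ln]_+(c\,n\epsilon)}{n\epsilon}\leq c\cdot\frac{[\ln\ln]_+(c\eta\ln 2/\tau)}{c\eta\ln 2/\tau}$, which can be pushed below $\tau^2/(36cD_1^2)$ by choosing $\eta$ large enough, making this term at most $\tau\epsilon/6$ as well. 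Summing the three bounds gives $q_n/n\leq\tau\epsilon/2$ on the good event, and hence with probability at least $1-\delta$, as claimed. The main obstacle is precisely this last step: one has to make precise that the lower bound $n\gtrsim\frac{1}{\tau\epsilon}\ln\frac{1}{\delta}$ is strong enough to dominate the slowly growing iterated-logarithm factor, and it is this requirement that determines how large $\eta$ must be taken (as a function of the absolute constants $D_0,D_1,K_0$ from the concentration lemmas and of $\tau$).
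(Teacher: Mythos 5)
Your proposal is correct and follows essentially the same route as the paper's proof: apply Lemma~\ref{lem:single-var-bound} to control the empirical variance $V_n$ (you explicitly note the translation-invariance of $V_n$, which the paper leaves implicit), substitute that bound into $q(n,\text{Var},\delta)$, and use the hypothesis $n\geq\frac{\eta}{\tau\epsilon}\ln\frac{1}{\delta}$ to force $q_n/n\leq\tau\epsilon/2$ for $\eta$ sufficiently large. The only cosmetic difference is in the final algebra: the paper observes that $q/n$ is decreasing in $n$ and evaluates at the smallest admissible $n$, whereas you split $q_n/n$ into three pieces and bound each by $\tau\epsilon/6$; both hinge on the same observation that the $[\ln\ln]_+$ factor grows too slowly to matter once $\eta$ exceeds a constant depending on $D_1$ and $\tau$.
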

\begin{proof}
By Lemma~\ref{lem:single-var-bound}, with probability at least $1-\delta$,
$V_{n}\leq4n\sigma^{2}+8\ln\frac{1}{\delta}$, which implies 
\[
q_{n}\leq D_{1}\left(1+\ln\frac{1}{\delta}+\sqrt{\left(4n\sigma^{2}+9\ln\frac{1}{\delta}+1\right)\left(\left[\ln\ln\right]_{+}(4n\sigma^{2}+9\ln\frac{1}{\delta}+1)+\ln\frac{1}{\delta}\right)}\right)
\]
 We denote the RHS by $q$. 

On this event, we have

\begin{eqnarray*}
\frac{q_{n}}{n}-\mathbb{E}Z_{i} & \leq & \frac{q}{n}-\tau\epsilon\\
 & = & \tau\epsilon\left(\frac{q}{n\tau\epsilon}-1\right)\\
 & \stackrel{(a)}{\leq} & \tau\epsilon\left(\frac{2D_{1}}{\eta}+\frac{D_{1}}{\eta\ln\frac{1}{\delta}}\sqrt{\frac{9\eta}{\tau}\ln\frac{1}{\delta}\left(\left[\ln\ln\right]_{+}(\frac{9\eta}{\tau}\ln\frac{1}{\delta})+\ln\frac{1}{\delta}\right)}-1\right)\\
 & = & \tau\epsilon\left(\frac{2D_{1}}{\eta}+D_{1}\sqrt{\frac{9}{\eta\tau\ln\frac{1}{\delta}}\left[\ln\ln\right]_{+}(\frac{9\eta}{\tau}\ln\frac{1}{\delta})+\frac{9}{\eta\tau}}-1\right)
\end{eqnarray*}

where (a) follows from $\frac{q}{n}$ being monotonically decreasing
with respect to $n$. By choosing $\eta$ sufficiently large, we have
$\frac{2D_{1}}{\eta}+D_{1}\sqrt{\frac{9}{\eta\tau\ln\frac{1}{\delta}}\left[\ln\ln\right]_{+}(\frac{9\eta}{\tau}\ln\frac{1}{\delta})+\frac{9}{\eta\tau}}-1\leq-\frac{1}{2}$,
and thus $\frac{q_{n}}{n}-\mathbb{E}Z_{i}\leq-\tau\epsilon/2$.
\end{proof}

\end{document}